	\algnewcommand{\LeftComment}[1]{\Statex \(\triangleright\) #1}
\newcommand{\myparagraph}[1]{\needspace{1\baselineskip}\medskip\noindent {\bf #1}}
\def \hlam {\hat{\lam}}
\def\Tr{\mathsf{T}}
\def\Hr{\mathsf{H}}
\def\eps{\varepsilon}
\newtheorem{lemma}{\hspace{0pt}\bf Lemma}
\newtheorem{proposition}{\hspace{0pt}\bf Proposition}
\newtheorem{theorem}{\hspace{0pt}\bf Theorem}
\newtheorem{corollary}{\hspace{0pt}\bf Corollary}
\newtheorem{remark}{\hspace{0pt}\bf Remark}
\newtheorem{definition}{\hspace{0pt}\bf Definition}
\begin{document}

\title{Stability Properties of Graph Neural Networks}

\author{Fer\hspace{0.015cm}nando~Gama,~
        Joan~Bruna,~
        and~Alejandro~Ribeiro
\thanks{Fernando Gama and Alejandro Ribeiro are supported by NSF CCF 1717120, ARO W911NF1710438, ARL DCIST CRA W911NF-17-2-0181, ISTC-WAS and Intel DevCloud. Joan Bruna is partially supported by the Alfred P. Sloan Foundation, NSF RI-1816753, NSF CAREER CIF 1845360, and Samsung Electronics. F. Gama is with the Elect. Eng. Comput. Sci. Dept., Univ. California, Berkeley, J. Bruna is with the Courant Inst. Math. Sci. and Center Data Sci., New York Univ. and A. Ribeiro is with the Dept. Elect. Syst. Eng., Univ. Pennsylvania.  Email: fgama@berkeley.edu, bruna@cims.nyu.edu, and aribeiro@seas.upenn.edu.
}
}

\markboth{IEEE TRANSACTIONS ON SIGNAL PROCESSING (ACCEPTED)}%
{Stability Properties of Graph Neural Networks}

\maketitle

\begin{abstract}
Graph neural networks (GNNs) have emerged as a powerful tool for nonlinear processing of graph signals, exhibiting success in recommender systems, power outage prediction, and motion planning, among others. GNNs consist of a cascade of layers, each of which applies a graph convolution, followed by a pointwise nonlinearity. In this work, we study the impact that changes in the underlying topology have on the output of the GNN. First, we show that GNNs are permutation equivariant, which implies that they effectively exploit internal symmetries of the underlying topology. Then, we prove that graph convolutions with integral Lipschitz filters, in combination with the frequency mixing effect of the corresponding nonlinearities, yields an architecture that is both stable to small changes in the underlying topology and discriminative of information located at high frequencies. These are two properties that cannot simultaneously hold when using only linear graph filters, which are either discriminative or stable, thus explaining the superior performance of GNNs.
\end{abstract}

\begin{IEEEkeywords}
graph neural networks, graph signal processing, network data, stability, graph filters, graph convolutions
\end{IEEEkeywords}

\IEEEpeerreviewmaketitle




\section{Introduction} \label{sec:intro}

Convolutional neural networks (CNNs) \cite{LeCun15-DeepLearning} are the tool of choice for machine learning in Euclidean space. CNNs consist of nonlinear maps in which the output follows from sending the input through a cascade of layers, each of which computes a convolution with a bank of filters followed by a pointwise nonlinearity \cite[Ch. 9]{Goodfellow16-DeepLearning}. The value of the filter taps in the convolution is obtained by minimizing some cost function over a training set. The success of CNNs is simultaneously predictable and surprising. If we were to restrict attention to linear machine learning, a century of empirical and theoretical evidence would prescribe the use of convolutional filters. It is then predictable that the addition of a pointwise nonlinearity, which on the face of it is a pretty minor modification, is a sensible choice for a nonlinear processing architecture. But at the same time it is surprising that such a minor modification produces so much of an effect on empirically observed performance. 

An answer to this question was put forth in \cite{Mallat12-Scattering} in the form of stability to diffeomorphisms. This seminal work considers \emph{scattering transforms} which are information processing architectures akin to CNNs. They are also built from convolutions and nonlinearities but use pre-specified families of wavelet frames instead of learnable filter banks. It was proved in \cite{Mallat12-Scattering} that scattering transforms are Lipschitz stable with respect to smooth deformations of space (i.e., they are stable with respect to the gradient of the diffeomorphism of the domain). This stability property is shared by linear wavelet banks, only if their frequency responses are flat at high frequencies \cite{Bruna13-Scattering}. One can restrict attention to this class of filters but only at the cost of losing the ability to discriminate high frequency features. The nonlinear operation in the scattering transform acts as a frequency mixer that brings part of the high frequency energy towards low frequencies where it can be discriminated with stable filters. Thus, scattering transforms can be, both, stable and discriminative, but wavelet banks cannot be simultaneously stable and discriminative. The similarities between scattering transforms and CNNs (both computed as convolutions followed by nonlinearities) suggest that this conclusion can be extrapolated to CNNs, in the sense that we can argue that they improve over linear filters because they are simultaneously stable and discriminative. Recent developments in computer vision similarly link the frequency content of images (Euclidean data) with their stability \cite{Yin19-FourierComputerVision, Wang20-HighFrequency}.

Parallel to the development of CNNs, the field of graph signal processing (GSP) has emerged as a generalization of Euclidean signal processing to signals whose components are related by arbitrary pairwise relationships described by an underlying graph support \cite{Shuman13-SPG, Sandryhaila14-Freq}. Central to GSP is the generalization of linear convolutional filters as polynomials of some matrix representation of the graph \cite{Segarra17-Linear}. Having a valid convolution operation the notion of a graph neural network (GNN) emerges naturally as a cascade of layers, where each layer is made up of a graph convolution filter bank composed with a pointwise nonlinearity \cite{Bruna14-DeepSpectralNetworks, Defferrard17-CNNGraphs, Kipf17-ClassifGCN, Gama18-Architectures}. GNNs have, predictably, proved useful in a variety of problems where they, surprisingly, outperform linear graph filters \cite{Ying18-Recommender, Owerko18-Power, Owerko20-OPF, Tolstaya19-Flocking, Li20-Planning}.

The main contribution of this paper is to show that the advantage of GNNs relative to linear graph filters is their stability to graph deformations (Thm.~\ref{thm:GNNStability}). Our analysis utilizes graph Fourier transforms to provide a representation of the filter on the spectrum of the matrix representation of the graph. This representation shows that graph filters cannot be stable if they are designed to isolate features associated with large eigenvalues of the graph (Thm.~\ref{thm:filterStabilityRelative}). This is the equivalent of Euclidean convolutional filters being unable to be stable if they discriminate high frequencies. The graph filter banks that are used by GNNs cannot discriminate features associated with large eigenvalues either. But pointwise nonlinearities perform frequency mixing that brings part of the energy associated with large eigenvalues towards low eigenvalues where it can be discriminated with stable graph filters. Thus, GNNs can be, both, stable and discriminative, but linear graph filters cannot be simultaneously stable and discriminative. This is analogous to the reasons that explain why scattering transforms (and, by extension, CNNs) have an advantage with respect to linear Euclidean filters \cite{Mallat12-Scattering, Bruna13-Scattering, Yin19-FourierComputerVision, Wang20-HighFrequency}.

The analysis of stability properties for the case of non-trainable GNNs built with graph wavelet filter banks has been carried out by \cite{ZouLerman18-Scattering, Gama19-Scattering}, in analogy to \cite{Mallat12-Scattering, Bruna13-Scattering}. More specifically, \cite{ZouLerman18-Scattering} studies the stability of these GNNs to permutations, as well as to perturbations on the eigenvalues and eigenvectors of the underlying graph support. Alternatively, \cite{Gama19-Scattering} considers the specific case of using diffusion wavelets and proves permutation invariance as well as stability to perturbations measured by the diffusion distance \cite{Coifman06-DiffusionDistance}. Both of these works consider an absolute perturbation model, where changes in the underlying graph support do not take into account the particularities of the topology. This leads to results that either depend on the size of the graph (i.e. larger graphs admit smaller edge weight changes) \cite{ZouLerman18-Scattering} or on the spectral gap \cite{Gama19-Scattering}, tying the applicability of the results to the specific graph under consideration. Stability of arbitrary filter banks has been studied in \cite{Levie19-Transferability} by leveraging a bound in \cite[eq. (23)]{Gama19-Scattering}. This result also depends on the spectral gap. Permutation equivariance has been studied in \cite{Xu19-GIN, Maron19-Invariant, Keriven19-UniversalInvariant}. In particular, \cite{Xu19-GIN} considers the question of graph isomorphisms by means of the Weisfeiler-Lehman test, \cite{Maron19-Invariant} characterizes the space of invariant and equivariant linear layers, and \cite{Keriven19-UniversalInvariant} extends the result in \cite{Maron19-Invariant} to graphs of varying size.

The underlying support can change due to targeted attacks on the nodes and edges of the graph. The robustness of GNNs to these malicious, adversarial attacks is being studied. In \cite{Zugner18-AdversarialAttacks}, the focus is on designing adversarial attacks such that the label of a target node is misclassified by carefully changing the edges and signal values of other nodes. It considers binary adjacency matrices and binary graph signals, and assumes that the semi-supervised problem is solved by means of a GCN \cite{Kipf17-ClassifGCN} with a single-hidden layer. The work in \cite{Dai18-AttackGraphData} also focuses on designing adversarial attacks, but uses reinforcement learning and focuses on the problems of both node and graph classification. In the case when the attacks are crafted to focus on a subset of edges in a semi-supervised learning problem, where labels are inferred using GNNs with IIR filters on binary adjacency matrices, \cite{Bojchevski19-CertifiableRobustness} provides robustness certificates for which nodes will not change the learned label under these attacks, and also proposes robust training of the model by adding a penalty to the cross-entropy loss function. Robustness certificates and robust training have also been developed for adversarial attacks on the binary signal values of a semi-supervised learning problem, and where labels are obtained by means of a GCN \cite{Zugner19-CertifiableSignals}. In this paper, however, we focus on changes that can occur due to topology inference errors or due to time-varying scenarios, instead of crafted attacks.

We begin the paper in Sec.~\ref{sec:graphFilters} by showing that linear graph filters are equivariant to permutations (Prop.~\ref{prop:filterPermutationEquivariance}). Then, we discuss the model of absolute perturbations modulo permutation (analogous to that in \cite{ZouLerman18-Scattering, Gama19-Scattering}) and show that a linear filter whose frequency response is Lipschitz continuous is stable (Thm.~\ref{thm:filterStabilityAbsolute}), with a constant that depends on the Lipschitz condition of the filters as well as the intrinsic topological characteristics of the graph and its perturbation. Next, we show that, under the relative perturbation model, integral Lipschitz graph filters are stable (Thm.~\ref{thm:filterStabilityRelative}), and determine a family of perturbations under which the stability can be entirely controlled by the integral Lipschitz constant of the filters, for any graph (Thm.~\ref{thm_filter_stability_structural_constraint}). In Sec.~\ref{sec:graphNeuralNetworks} we show how the stability results for graph filters carry over to GNN architectures (Thm.~\ref{thm:GNNStability}). Sec.~\ref{sec_linear_filters_stability} offers an insightful discussion on the results, showing that a GNN built on Lipschitz filters under an absolute perturbation model exhibits a trade-off between stability and discriminability, while another one using integral Lipschitz filters under a relative perturbation model can be made simultaneously stable and discriminative. Finally, we numerically illustrate stability in a problem of movie recommendation (Sec.~\ref{sec:sims}), and conclude (Sec.~\ref{sec:conclusions}).




\section{Stability Properties of Graph Filters} \label{sec:graphFilters}

We work with graphs $\ccalG = (\ccalV, \ccalE, \ccalW)$ described by a set of $N$ nodes $\ccalV$, a set of edges $\ccalE \subseteq \ccalV \times \ccalV$, and a weight function $\ccalW:\ccalE \to \reals$. We can associate to $\ccalG$ a matrix representation $\bbS \in \reals^{N \times N}$ that respects the sparsity of the graph, namely, $s_{ij} = [\bbS]_{ij} = 0$ whenever $i \neq j$ or $(j,i) \notin \ccalE$. This is a condition that is verified by, e.g., adjacency matrices, Laplacians, random walk Laplacians, and their normalized counterparts. We generically call $\bbS$ a graph shift operator (GSO) \cite{Shuman13-SPG}. We assume the shift operator is symmetric with eigenvector basis $\bbV = [\bbv_1,\ldots,\bbv_N]$ and eigenvalue matrix $\bbLam = \diag ([\lam_1,\ldots,\lam_N])$ so that we can write
\begin{equation} \label{eqn:eigendecomposition}
   \bbS = \bbV\bbLam\bbV^{\Hr}.
\end{equation}
It is assumed that eigenvalues are ordered from smallest to largest so that $\lam_1\leq\lam_2\leq\ldots\leq\lam_N$.

The graph acts as a support for the data vector $\bbx \in \reals^{N}$ which we henceforth say to be a graph signal $\bbx = [x_{1}, \ldots, x_{N}]^{\Tr}$ that assigns the value $x_i$ to node $i$. The shift operator $\bbS$ defines a linear map $\bby = \bbS \bbx$ between graph signals that represents the local exchange of information between a node and its neighbors. Repeated application of $\bbS$ accesses information from nodes located farther away since the product $\bbS^{k} \bbx = \bbS (\bbS^{k-1}\bbx)$ represents the aggregation at node $i$ of information located in nodes of its $k$-hop neighborhood. Aggregating information from $k$-hop neighbors is analogous to applying $k$ time shifts to a time signal. This is the motivation for defining graph convolutional filters as polynomials on the shift operator that, for a set of coefficients $\bbh = \{h_{k}\}_{k=0}^\infty$ process input graph signals $\bbx$ to produce output graph signals,
\begin{equation} \label{eqn:graphFilter}
	\bbz \  = \ \sum_{k=0}^{\infty} h_{k} \bbS^{k} \bbx
	     \ := \ \bbH(\bbS) \bbx .
\end{equation}
The matrix $\bbH(\bbS) := \sum_{k=0}^{\infty} h_{k} \bbS^{k}$ in \eqref{eqn:graphFilter} is said to be a finite impulse response (FIR) graph filter or a graph convolutional filter, and the coefficients $h_{k}$ are thus called \emph{filter taps} or \emph{filter weights} \cite{Segarra17-Linear}. A set of coefficients $\bbh$, defines a filter $\bbH(\bbS)$ for any given graph $\bbS$. In particular, if we are given another graph $\hbS$ we can construct the filter $\bbH(\hbS):=\sum_{k=0}^{\infty} h_{k} \hbS^{k}$, with the same filter taps $\bbh$, whose application to graph signals $\hbx$ produces graph signals
\begin{equation} \label{eqn:graphFilter_perturbed}
	\hbz \  = \ \sum_{k=0}^{\infty} h_{k} \hbS^{k} \hbx
	     \ := \ \bbH(\hbS) \hbx.
\end{equation}
We want to characterize the difference between filters $\bbH(\bbS)$ and $\bbH(\hbS)$ in terms of the differences between shift operators $\bbS$ and $\hbS$. We study the effect of permutations in Sec.~\ref{sec_permutation_equivariance} and the effect of perturbations in Secs.~\ref{sec_graph_perturbations}~and~\ref{sec_graph_perturbations_relative}.

%
\subsection{Permutation Equivariance of Graph Filters}\label{sec_permutation_equivariance}

For a given dimension $N$ we define permutation matrices $\bbP$ as those that belong to the set
\begin{equation} \label{eqn:permutationSet}
    \ccalP = \left\{
        \bbP \in \{0,1\}^{N \times N}
            :
        \bbP \bbone = \bbone , 
        \bbP^{\Tr} \bbone = \bbone
    \right\}.
\end{equation}
As per \eqref{eqn:permutationSet}, a permutation matrix $\bbP$ is one in which the product $\hbx = \bbP^{\Tr}\bbx$ is a reordering of the entries of the vector $\bbx$. Likewise, for a given shift operator $\bbS$, the shift operator $\hbS = \bbP^{\Tr}\bbS\bbP$ is a reordering of the rows and columns of $\bbS$. Thus, the graph $\hbS$ is just a relabeling of the nodes of $\bbS$. Taken together, the graph $\hbS = \bbP^{\Tr}\bbS\bbP$ and the signal $\hbx = \bbP^{\Tr}\bbx$ represent a consistent relabeling of the graph $\bbS$ and the graph signal $\bbx$. One would expect that the application of the filter on the relabeled graph to the relabeled signal produces an output that corresponds to the relabeled output prior to permutation of the graph. The following proposition asserts that this is true.

%
\begin{proposition}[Permutation equivariance] \label{prop:filterPermutationEquivariance} Consider graph shifts $\bbS$ and $\hbS = \bbP^{\Tr} \bbS \bbP$ for some permutation matrix $\bbP \in \ccalP$ [cf. \eqref{eqn:permutationSet}]. Given a set of coefficients $\bbh$, the filters $\bbH(\bbS)$ and  $\bbH(\hbS)$ in \eqref{eqn:graphFilter} and \eqref{eqn:graphFilter_perturbed} are such that for any pair of corresponding graph signals $\bbx$ and $\hbx = \bbP^{\Tr}\bbx$ the graph filter outputs $\bbz := \bbH(\bbS) \bbx $ and $\hbz := \bbH(\hbS) \hbx $ satisfy
\begin{equation}\label{eqn_prop_filter_permutation_equivariance}
   \hbz := \bbH(\hbS) \hbx
        := \bbH(\hbS) ( \bbP^{\Tr}\bbx ) 
         = \bbP^{\Tr} (  \bbH(\bbS) \bbx )
        := \bbP^{\Tr} \bbz.
\end{equation} \end{proposition}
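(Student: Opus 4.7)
The plan is to exploit the orthogonality of permutation matrices, namely $\bbP \bbP^{\Tr} = \bbP^{\Tr} \bbP = \bbI$ (this follows immediately from the definition in \eqref{eqn:permutationSet}, since each row and each column contains exactly one $1$). Given that $\bbH(\bbS)$ is a polynomial in $\bbS$, the natural route is to show that powers of the permuted shift obey a clean telescoping identity, and then linearity of the sum does the rest.

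First I would establish, by induction on $k$, that $\hbS^{k} = \bbP^{\Tr} \bbS^{k} \bbP$ for every $k \geq 0$. The base case $k=0$ reduces to $\bbI = \bbP^{\Tr} \bbP$, and the inductive step uses
\begin{equation*}
\hbS^{k+1} = \hbS \cdot \hbS^{k} = (\bbP^{\Tr} \bbS \bbP)(\bbP^{\Tr} \bbS^{k} \bbP) = \bbP^{\Tr} \bbS (\bbP \bbP^{\Tr}) \bbS^{k} \bbP = \bbP^{\Tr} \bbS^{k+1} \bbP,
\end{equation*}
where the cancellation $\bbP \bbP^{\Tr} = \bbI$ is the key step.

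With this identity in hand, I would substitute into \eqref{eqn:graphFilter_perturbed} together with $\hbx = \bbP^{\Tr} \bbx$:
\begin{equation*}
\bbH(\hbS) \hbx = \sum_{k=0}^{\infty} h_{k} \hbS^{k} \hbx = \sum_{k=0}^{\infty} h_{k} \bbP^{\Tr} \bbS^{k} \bbP \bbP^{\Tr} \bbx = \bbP^{\Tr} \sum_{k=0}^{\infty} h_{k} \bbS^{k} \bbx = \bbP^{\Tr} \bbH(\bbS) \bbx,
\end{equation*}
again using $\bbP \bbP^{\Tr} = \bbI$ and pulling the constant matrix $\bbP^{\Tr}$ outside the sum by linearity. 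This is exactly the claimed identity \eqref{eqn_prop_filter_permutation_equivariance}.

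There is no real obstacle in this proof; the only subtlety worth flagging is the orthogonality of $\bbP$ (both $\bbP \bbP^{\Tr} = \bbI$ and $\bbP^{\Tr} \bbP = \bbI$ are used, the first to collapse interior products $\bbP \bbP^{\Tr}$ inside $\hbS^{k}$, the second to validate the base case of the induction). If the filter is truly an infinite series rather than a polynomial, one should also remark that convergence of $\sum_{k} h_{k} \bbS^{k}$ transfers to convergence of $\sum_{k} h_{k} \hbS^{k}$ because the two operators are unitarily similar and hence share the same spectral norm; this legitimizes the interchange of the sum with the multiplication by $\bbP^{\Tr}$ in the infinite-sum case.
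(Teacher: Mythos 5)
Your proof is correct and follows essentially the same route as the paper's: establish $\hbS^{k} = \bbP^{\Tr}\bbS^{k}\bbP$ via orthogonality of $\bbP$, substitute into the polynomial, and collapse $\bbP\bbP^{\Tr}=\bbI$ when applying the filter to $\hbx$. The explicit induction and the remark on convergence of the infinite series are welcome additions but do not alter the argument.
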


%
\begin{proof} See Appendix \ref{sec_apx_A}. \end{proof}

%
Prop.~\ref{prop:filterPermutationEquivariance} states the permutation equivariance of graph filters. Namely, a permutation of the input -- from $\bbx$ to $\hbx = \bbP^{\Tr}\bbx$ -- along with a permutation of the shift operator -- from $\bbS$ to $\hbS = \bbP^{\Tr} \bbS \bbP$ -- results in a permutation of the output -- from $\bbz$ to $\hbz = \bbP^{\Tr} \bbz$. Notice that if we are given vectors $\bbx$ and $\hbx = \bbP^{\Tr}\bbx$ and we know the permutation matrix $\bbP$ it is elementary to design linear operators that are permutation equivariant by simply applying the permutation to the linear operator. The permutation equivariance in \eqref{eqn_prop_filter_permutation_equivariance} is more subtle in that it holds without having access to the permutation $\bbP$. 

Permutation equivariance of graph filters implies their usefulness in applications where graph relabeling is inconsequential. This motivates consideration of the space of linear operators modulo permutation along with operator distances between these equivalence classes which we introduce next.

%
\begin{definition}[Linear operator distance modulo permutation] \label{def_operator_distance} 
Given linear operators $\bbA$ and $\hbA$ we define the operator distance modulo permutation as
\begin{align} \label{eqn_def_operator_distance}
   \big\| \bbA - \hbA \big\|_{\ccalP} 
      \ = \   \min_{\bbP\in\ccalP} \,
                 \max_{ \bbx : \| \bbx \| = 1 } \,
                     \big\|\bbP^{\Tr}(\bbA \bbx)  - \hbA (\bbP^{\Tr} \bbx)\big\| .
\end{align}
\end{definition}

%
The distance in \eqref{eqn_def_operator_distance} compares the effect of applying $\bbA$ and $\hbA$ to a vector $\bbx$ with a permutation $\bbP$ applied before or after application of the linear operators. It measures this difference at the unit norm vector for which it is largest and at the permutation matrix that makes it smallest. We can further denote as $\bbP_0 \in \ccalP$ a matrix that attains the minimum in \eqref{eqn_def_operator_distance} and define the \emph{absolute perturbation modulo permutation} as
\begin{align} \label{eqn_def_operator_distance_error_matrix}
\bbE = \bbA - \bbP_0^{\Tr} \hbA\bbP_0.
\end{align}
The perturbation matrix $\bbE$ is the difference between operators $\bbA$ and $\bbP_{0}^{\Tr} \hbA \bbP_0$ for the permutation matrix $\bbP_0$ that achieves the minimum norm difference in \eqref{eqn_def_operator_distance}. If there is more than one matrix that achieves the minimum, an arbitrary choice is acceptable for the results we will derive to hold. Further notice that it follows from \eqref{eqn_def_operator_distance} and \eqref{eqn_def_operator_distance_error_matrix} that the operator distance between $\bbA$ and $\hbA$ is simply the operator norm of $\bbE$,
\begin{align} \label{eqn_def_operator_distance_error}
   \big\| \bbA - \hbA \big\|_{\ccalP}  = \| \bbE \|.
\end{align}
It is ready to see that Def.~\ref{def_operator_distance} is a proper distance in the space of linear operators modulo permutation. In particular, it holds that $\big\| \bbA - \hbA \big\|_{\ccalP} = 0$ if and only there exists a permutation matrix for which $\hbA = \bbP^{\Tr} \bbA \bbP$. From this latter fact it follows that we can rewrite Prop.~\ref{prop:filterPermutationEquivariance} to say that the distance modulo permutation between graph filters $\bbH(\bbS)$ and $\bbH(\hbS)$ is null if the distance modulo permutation between the shift operators $\bbS$ and $\hbS$ is null. We formally state and prove this fact next.

%
\begin{corollary}[Permutation equivariance] \label{coro_permutation_equivariance} 
For shift operators $\bbS$ and $\hbS$ whose distance modulo permutation is $\| \bbS - \hbS \|_{\ccalP} = 0$, the distance modulo permutation between graph filters $\bbH(\bbS)$ and $\bbH(\hbS)$ satisfies 
\begin{equation}\label{eqn_coro_permutation_equivariance} 
   \| \bbH(\bbS) - \bbH(\hbS) \|_{\ccalP} = 0.
\end{equation} \end{corollary}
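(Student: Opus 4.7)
The plan is to reduce the corollary to a direct application of Proposition~\ref{prop:filterPermutationEquivariance}, using the characterization of ``distance zero'' in the space of operators modulo permutation that the paper established just before the statement. First, I would invoke the equivalence noted between \eqref{eqn_def_operator_distance} and \eqref{eqn_def_operator_distance_error_matrix}: the hypothesis $\|\bbS - \hbS\|_{\ccalP}=0$ is equivalent to the existence of a permutation matrix $\bbP \in \ccalP$ such that $\hbS = \bbP^{\Tr} \bbS \bbP$. This is the only way the perturbation error matrix $\bbE$ in \eqref{eqn_def_operator_distance_error_matrix} can have zero operator norm, so the corollary's hypothesis gives us such a $\bbP$ for free.

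Next, I would fix this particular $\bbP$ and apply Proposition~\ref{prop:filterPermutationEquivariance} to it. Because the proposition holds for every graph signal $\bbx$ and is of the form
\begin{equation*}
\bbH(\hbS)(\bbP^{\Tr}\bbx) \;=\; \bbP^{\Tr}\bbH(\bbS)\bbx,
\end{equation*}
one obtains the operator identity $\bbH(\hbS)\bbP^{\Tr} = \bbP^{\Tr}\bbH(\bbS)$, equivalently $\bbH(\hbS) = \bbP^{\Tr}\bbH(\bbS)\bbP$. In other words, the very same permutation $\bbP$ that conjugates $\bbS$ into $\hbS$ conjugates $\bbH(\bbS)$ into $\bbH(\hbS)$.

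Finally, I would plug this identity into Definition~\ref{def_operator_distance} applied to the pair $(\bbH(\bbS),\bbH(\hbS))$. Because $\|\bbP^{\Tr}(\bbH(\bbS)\bbx) - \bbH(\hbS)(\bbP^{\Tr}\bbx)\| = 0$ for every $\bbx$, the inner maximum is zero at this particular $\bbP$, and since the outer operation is a minimum over $\ccalP$, the whole quantity is bounded above by zero; non-negativity of the norm then forces $\|\bbH(\bbS)-\bbH(\hbS)\|_{\ccalP}=0$, which is exactly \eqref{eqn_coro_permutation_equivariance}.

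There is essentially no obstacle here: the corollary is a bookkeeping consequence of Proposition~\ref{prop:filterPermutationEquivariance} together with the ``distance zero iff conjugate by a permutation'' characterization. The only subtlety to flag in the write-up is that the minimizing permutation for the filters is inherited from that of the shift operators, i.e.\ we are not free to choose different $\bbP$'s for the input and the output; using the same $\bbP$ in both roles is what makes Proposition~\ref{prop:filterPermutationEquivariance} directly applicable without any further algebra on polynomials in $\bbS$.
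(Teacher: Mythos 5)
Your proposal is correct and follows exactly the paper's own argument: extract the permutation $\bbP$ with $\hbS = \bbP^{\Tr}\bbS\bbP$ from the hypothesis $\|\bbS-\hbS\|_{\ccalP}=0$, apply Proposition~\ref{prop:filterPermutationEquivariance} at that same $\bbP$ to get $\bbP^{\Tr}(\bbH(\bbS)\bbx) = \bbH(\hbS)(\bbP^{\Tr}\bbx)$ for all $\bbx$, and conclude from Definition~\ref{def_operator_distance} that the minimum over $\ccalP$ is zero. Your added remark that the same $\bbP$ must serve both the shift operators and the filters is a fair point of care, but there is no substantive difference from the paper's proof.
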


%
\begin{proof}
If $\| \bbS - \hbS \|_{\ccalP} = 0$ there exists permutation $\bbP$ such that $\hbS = \bbP^{\Tr} \bbS \bbP$ and Proposition \ref{prop:filterPermutationEquivariance} holds. Thus, for this same permutation we have $\bbP^{\Tr}(\bbH(\bbS) \bbx)  = \bbH(\hbS) (\bbP^{\Tr} \bbx)$ for any vector $\bbx$. The result in \eqref{eqn_coro_permutation_equivariance} then follows from Def.~\ref{def_operator_distance}.
\end{proof}

%
Corollary \ref{coro_permutation_equivariance} says that if two graphs are the same, the graph filters are also the same, modulo permutation. In the next section we address the question of what happens to the filter difference $\| \bbH(\bbS) - \bbH(\hbS) \|_{\ccalP}$ when the difference $\| \bbS - \hbS \|_{\ccalP}$ is small but not null. Namely, we investigate the similarity of $\bbH(\bbS)$ and $\bbH(\hbS)$ when the shift operators $\bbS$ and $\hbS$ are close to being permuted versions of each other.

%
\subsection{Effect of Absolute Graph Perturbations on Graph Filters}\label{sec_graph_perturbations}

To understand the stability of graph filters it is instructive to consider the form of \eqref{eqn:graphFilter} in the graph frequency domain. This entails using the eigenvector basis $\bbV$ in \eqref{eqn:eigendecomposition} to define the {\it Graph Fourier Transform} (GFT) of the graph signal $\bbx$ as the projection $\tbx = \bbV^{\Hr}\bbx$ \cite{Sandryhaila14-Freq}. Substituting the GFT definition in the definition of the graph convolution in \eqref{eqn:graphFilter} and using the fact that $\bbS^k = \bbV\bbLam^k\bbV^{\Hr}$ we can write 
\begin{equation} \label{eqn:filter_dft}
   \bbV^{\Hr}\bbz \ =\  \tbz \ 
	          \ = \ \sum_{k=0}^{\infty} h_{k}\bbLam^{k}\big(\bbV^{\Hr}\bbx\big)
	          \ = \ \bbH(\bbLam) \hbx .
\end{equation}
The interesting conclusion that follows from \eqref{eqn:filter_dft} is that graph filters are pointwise operators in the graph frequency domain because $\bbH(\bbLam)$ is a diagonal matrix. This motivates the definition of the \emph{graph frequency response} of the filter as
\begin{equation} \label{eqn:frequency_response}
	h(\lam)  \ = \ \sum_{k=0}^{\infty} h_{k} \lam^{k} .
\end{equation}
Comparing \eqref{eqn:filter_dft} with \eqref{eqn:frequency_response} we conclude that the $i$th component $\tdx_i$ of the input signal GFT $\tbx$ and  the $i$th component $\tdz_i$ of the output signal GFT $\tbz$ are related through the expression
\begin{equation} \label{eqn:frequency_response_pointwise}
	\tdz_i = h(\lam_i) \tdx_i.
\end{equation}
The remarkable observation to be made at this point is that the frequency response of a filter is completely characterized by the filter coefficients $\bbh$ [cf. \eqref{eqn:frequency_response}]. The effect of a specific graph is to determine the components of the GFT $\tbx = \bbV^{\Hr}\bbx$ -- through its eigenvectors $\bbV$ -- and which values of the frequency response are instantiated [cf. \eqref{eqn:frequency_response_pointwise}] -- through its eigenvalues $\bbLam$. Fig.~\ref{fig_frequency_response} shows an illustration of this latter fact. We have a filter with frequency response $h(\lam)$ represented as a continuous function. For a graph with eigenvalues $\lam_i$ only the values at frequencies $h(\lam_i)$ affect the response of the filter. For a different graph with eigenvalues $\hlam_i$ the values $h(\hlam_i)$ are the ones that determine the effect of the filter in the given graph. 

%
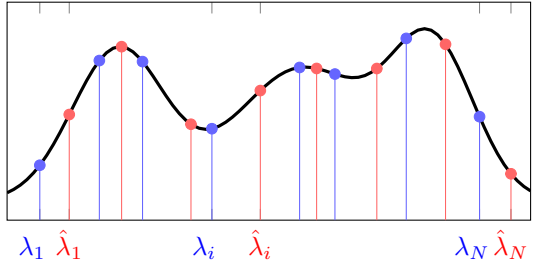
\begin{figure}[t]
    \centering

\def \thisplotscale {2.9}
\def \unit {\thisplotscale cm}

\def \frequencyresponse 
     {   0.8*exp(-(1*(x-1.2))^2) 
       + 0.7*exp(-(0.7*(x-4))^2) 
       + 0.8*exp(-(1.1*(x-6))^2) 
       + 0.1}

\begin{tikzpicture}[x = 1*\unit, y=1*\unit]
\begin{axis}[scale only axis,
             width  = 2.4*\unit,
             height = 1*\unit,
             xmin = -0.5, xmax=7.5,
             xtick = {0, 0.45, 2.63, 3.37, 6.72, 7.20},
             xticklabels = {\blue{$\lam_1\phantom{\hlam}$}, 
                            \red{$\hlam_1$}, 
                            \blue{$\lam_i\phantom{\hlam}$}, 
                            \red{$\hlam_i$},
                            \blue{$\lam_{N}\phantom{\hlam}$},
                            \red{$\hlam_N$}},
             ymin = -0, ymax = 1.15,
             ytick = {-1},
             typeset ticklabels with strut,
             enlarge x limits=false]

\addplot+[samples at = {0.00, 0.91, 1.57, 
                        2.63, 3.97, 4.51, 
                        5.60, 6.72}, 
          color = blue!60, 
          ycomb, 
          mark=otimes*, 
          mark options={blue!60}]
         {\frequencyresponse};

\addplot+[samples at = {0.45, 1.25, 2.31, 
                        3.37, 4.23, 5.15, 
                        6.20, 7.20}, 
          color = red!60, 
          ycomb, 
          mark=oplus*, 
          mark options={red!60}]
         {\frequencyresponse};

\addplot[ domain=-0.5:7.5, 
          samples = 80, 
          color = black,
          line width = 1.2]
         {\frequencyresponse};

\end{axis}
\end{tikzpicture}

    \caption{Frequency response of a graph filter. The function $h(\lambda)$ is shown as a black, solid line, and is independent of the graph. When evaluated on a given graph shift operator, specific values of $h$ are instantiated on the eigenvalues of the given GSO. For example, when given a GSO $\bbS$ with eigenvalues $\{\lambda_{i}\}$, the graph filter frequency response will be instantiated on $h(\lambda_{i})$ (in blue); but, if we are given a different GSO $\hbS$ with other eigenvalues $\{\hlam_{i}\}$, then the filter frequency response will be given by $h(\hlam_{i})$ (in red).}
    \label{fig_frequency_response}
\end{figure}

Since graph perturbations alter the spectrum of a graph it seems apparent that the variability of the frequency response $h(\lam)$ has a direct effect on a filter's stability to perturbations. To proceed with a formal characterization we introduce the notion of Lipschitz filters in the following definition.
 
%
\begin{definition}[Lipschitz Filter]\label{def:lipschitzFilters}
Given a filter $\bbh = \{h_{k}\}_{k=0}^\infty$ its frequency response $h(\lam)$ is given by \eqref{eqn:frequency_response} and satisfies $|h(\lam)|\leq 1$. We say the filter is Lipschitz if there exists a constant $C>0$ such that for all $\lam_1$ and $\lam_2$ the frequency response is such that
\begin{equation}\label{eqn:lipschitzFilters}
   \big| h(\lam_2) - h(\lam_1) \big| 
       \leq C \big| \lam_2 - \lam_1 \big|.
\end{equation} \end{definition}

%
\noindent As its name suggests, a filter is Lipschitz if its frequency response is Lipschitz. This means a Lipschitz filter is one whose frequency response does not change faster than linear. For filters that are Lipschitz, the following stability result relative to perturbations that are close to permutations holds.

%
\begin{theorem} \label{thm:filterStabilityAbsolute}
Let $\bbS = \bbV \bbLambda \bbV^{\Hr}$ and $\hbS$ be graph shift operators. Let $\bbE = \bbU \bbM \bbU^{\Hr}$ be the absolute perturbation modulo permutation between $\bbS$ and $\hbS$ [cf. \eqref{eqn_def_operator_distance_error_matrix}] and assume their operator distance modulo permutation (cf. Def.~\ref{def_operator_distance}) satisfies
\begin{equation} \label{eqn:hypothesisEabsolute}
   \| \bbS - \hbS \|_\ccalP  = \| \bbE\| \leq \eps.
\end{equation}
For a Lipschitz filter (cf. Def.~\ref{def:lipschitzFilters}) with Lipschitz constant $C$ the operator distance modulo permutation between filters $\bbH(\bbS)$ and $\bbH(\hbS)$ satisfies
\begin{equation} \label{eqn:boundFilterAbsolute}
    \| \bbH(\bbS) - \bbH(\hbS) \|_\ccalP 
        \ \leq \ C
            \left(1 + \delta\sqrt{N} \right)\eps
            + \ccalO(\eps^{2})
\end{equation}
with $\delta := (\| \bbU - \bbV \|_{2}+1)^{2} -1$ standing for the eigenvector misalignment between  shift operator $\bbS$ and error matrix $\bbE$.
\end{theorem}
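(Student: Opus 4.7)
The plan is to combine three ingredients: the permutation-equivariance reduction (Corollary~\ref{coro_permutation_equivariance}), a first-order Taylor expansion of the filter in the perturbation $\bbE$, and an eigenbasis decomposition that isolates a ``commuting'' diagonal part from a ``misaligned'' off-diagonal part.

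First I would use Cor.~\ref{coro_permutation_equivariance} together with the definition of the distance modulo permutation to reduce to the case where the minimizing permutation is the identity, i.e., $\hbS = \bbS + \bbE$ with $\bbE = \bbU\bbM\bbU^{\Hr}$ and $\|\bbE\|\leq\eps$. Next I would expand
\begin{equation*}
\bbH(\hbS)-\bbH(\bbS) = \sum_{k=0}^\infty h_k\bigl[(\bbS+\bbE)^k-\bbS^k\bigr] = \sum_{k=0}^\infty h_k\sum_{j=0}^{k-1}\bbS^j\bbE\bbS^{k-1-j}+\ccalO(\eps^2),
\end{equation*}
collecting all higher-order terms in $\bbE$ into $\ccalO(\eps^2)$ (this uses summability of the filter coefficients, a standard technicality).

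Conjugating by $\bbV$ and setting $\tdE := \bbV^{\Hr}\bbE\bbV$, the first-order term becomes a matrix $\bbA$ whose $(i,l)$ entry is $\tilde E_{il}\,g(\lam_i,\lam_l)$, where a direct computation yields the divided difference $g(\lam_i,\lam_l)=\tfrac{h(\lam_i)-h(\lam_l)}{\lam_i-\lam_l}$ for $i\neq l$ and $g(\lam_i,\lam_i)=h'(\lam_i)$. By the Lipschitz condition \eqref{eqn:lipschitzFilters}, $|g|\leq C$ throughout. To extract the $\delta$ factor, I would write $\bbV^{\Hr}\bbU = \bbI+\bbQ$ with $\|\bbQ\|=\|\bbU-\bbV\|_2$, so that
\begin{equation*}
\tdE = \bbM + \bigl(\bbQ\bbM + \bbM\bbQ^{\Hr} + \bbQ\bbM\bbQ^{\Hr}\bigr) =: \bbM + \bbE',
\end{equation*}
where $\bbM$ is diagonal (the commuting part) and $\|\bbE'\|\leq\bigl((\|\bbQ\|+1)^2-1\bigr)\|\bbE\| = \delta\,\|\bbE\|$.

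Correspondingly, I split $\bbA=\bbA_1+\bbA_2$. The matrix $\bbA_1$ inherits the diagonal structure of $\bbM$, so $\bbA_1=\diag\bigl(m_i h'(\lam_i)\bigr)$ and
$\|\bbA_1\|\leq C\max_i|m_i|\leq C\|\bbE\|\leq C\eps$. For $\bbA_2$, whose entries are $E'_{il}\,g(\lam_i,\lam_l)$, I would pass through the Frobenius norm, which is where the $\sqrt N$ factor naturally enters:
\begin{equation*}
\|\bbA_2\|\leq \|\bbA_2\|_F \leq C\,\|\bbE'\|_F \leq C\sqrt{N}\,\|\bbE'\| \leq C\sqrt{N}\,\delta\,\eps.
\end{equation*}
Combining the two pieces with the triangle inequality and absorbing the expansion remainder gives \eqref{eqn:boundFilterAbsolute}. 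The main technical obstacle I anticipate is the clean identification of $g(\lam_i,\lam_l)$ with a divided difference/derivative of $h$ (making the Lipschitz bound applicable uniformly across all entries, including the diagonal), and justifying that the $\ccalO(\eps^2)$ remainder in the Taylor expansion can be written in a filter-coefficient-independent form; both are handled by expressing the sums in closed form before invoking Def.~\ref{def:lipschitzFilters}.
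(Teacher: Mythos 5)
Your proposal is correct and follows essentially the same route as the paper's proof in Appendix B: reduction to the identity permutation, first-order expansion of $(\bbS+\bbE)^k$, the divided-difference representation of the first-order term (with $|g|\leq C$ from the Lipschitz condition), and the split of $\bbE$ into a part aligned with the eigenbasis of $\bbS$ (giving the $C\eps$ term) and a misaligned part of norm at most $\delta\eps$ (giving the $C\delta\sqrt{N}\eps$ term), which is exactly the paper's Lemma on $\bbE\bbv_i$ written in the $\bbV$-basis. The only cosmetic difference is where the $\sqrt{N}$ enters: you pass through the Frobenius norm of the Hadamard product at the operator level, while the paper expands the input signal in the eigenbasis and uses $\sum_i|\tdx_i|\leq\sqrt{N}\|\bbx\|$; both yield the identical constant.
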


%
\begin{proof} See appendix \ref{sec_apx_B}. \end{proof}

%
To a first order approximation, Thm.~\ref{thm:filterStabilityAbsolute} shows that graph filters are Lipschitz stable with respect to absolute graph perturbations [cf. \eqref{eqn_def_operator_distance_error_matrix}-\eqref{eqn_def_operator_distance_error}]. The stability constant is given by $C(1 + \delta\sqrt{N})$ which implies that (i) the bound holds uniformly for all graphs with $N$ nodes, (ii) it is affected by a term, the Lipschitz constant $C$, that is controllable through filter design, and (iii) it is further affected by a term, the eigenvector misalignment $(1+\delta\sqrt{N})$, that depends on the structure of the perturbations that are expected in a particular problem but that cannot be affected by judicious filter choice. We note that the Lipschitz stability established by Thm.~\ref{thm:filterStabilityAbsolute} is with respect to the changes in the underlying graph support $\bbS$, and not with respect to changes in the input $\bbx$.

Although Thm.~\ref{thm:filterStabilityAbsolute} shows filter stability with respect to graph perturbations, the stability claim may be misleading given that the perturbation's norm is not tied to the norm of the graph shift. To do so we replace \eqref{eqn:hypothesisEabsolute} with the hypothesis $\| \bbS - \hbS \|_\ccalP = \| \bbE\| \leq \eps \|\bbS\|$, under which \eqref{eqn:boundFilterAbsolute} becomes
\begin{equation} \label{eqn:boundFilterAbsolute_dependent_on_S}
    \| \bbH(\bbS) - \bbH(\hbS) \|_\ccalP 
        \ \leq \ C
            \left(1 + \delta\sqrt{N} \right) \|\bbS\| \eps 
            + \ccalO(\eps^{2}) .
\end{equation}
The bound in \eqref{eqn:boundFilterAbsolute_dependent_on_S} is the result of a {\it relative} perturbation model. It is still a stability result but, in contrast to \eqref{eqn:boundFilterAbsolute}, not one that is uniform for all graphs with a given number of nodes. Making $\|\bbS\|$ arbitrarily large, makes the constant $(1 + \delta\sqrt{N}) \|\bbS\|$ arbitrarily large. One could think that it is reasonable to allow for larger filter perturbations when graphs have larger norm but this is not true. Filter perturbations determine feature perturbations whose magnitude need not be related to the graph's norm. On closer inspection the problem with making $\| \bbE\| \leq \eps \|\bbS\|$ is that the norms of $\bbE$ and $\bbS$ are global properties of the error and the graph. In particular, this may imply that parts of the graph with small weights have large relative modifications because some other parts of the graph have large weights. This observation prompts the relative perturbation model which effectively ties the local properties of the shift operator and error matrices, as discussed next.

%
\subsection{Effect of Relative Graph Perturbations on Graph Filters}\label{sec_graph_perturbations_relative}

We tie the perturbation of edges to their magnitudes by considering a relative perturbation model such that the relationship between a graph $\bbS$ and its perturbed version $\hbS$ is given by the error matrices $\bbE$ in the following set.

%
\begin{definition}[Relative Perturbation Modulo Permutation] \label{def_relative_perturbation} 
Given shift operators $\bbS$ and $\hbS$ we define the set of relative perturbation matrices modulo permutation as 
\begin{alignat}{2} \label{eqn_def_relative_perturbation}
\ccalE(\bbS, \hbS) 
= \Big\{ \bbE : 
\bbP^{\Tr}\hbS \bbP =  \bbS  + \big(\bbE \bbS + \bbS \bbE\big), 
\bbP \in \ccalP \Big\}. 
\end{alignat}
\end{definition}
    
The set of relative perturbation matrices modulo permutation considers all the matrices $\bbE$ that allow us to write different permutations of $\hbS$ through relative perturbations of $\bbS$ given by the model in \eqref{eqn_def_relative_perturbation}. The norm $\|\bbE\|$ of a given error matrix $\bbE$ associated to a given permutation $\bbP$ is a measure of relative dissimilarity between $\bbP^{\Tr}\hbS \bbP$ and $\bbS$. To measure dissimilarity between $\bbS$ and $\hbS$  modulo permutation we evaluate the error norm at the permutation that affords the smallest error norm
\begin{alignat}{2} \label{eqn_relative_perturbation_distance}
   d(\bbS, \hbS) 
      = & \min_{\bbP \in \ccalP} \ &&  \| \bbE \|   \nonumber \\
        & \st  && \bbP^{\Tr}\hbS \bbP =  \bbS  + \big(\bbE \bbS + \bbS \bbE\big) .
\end{alignat}
The dissimilarity $d(\bbS, \hbS)$ measures how close $\bbS$ and $\hbS$ are to being permutations of each other, as determined by the multiplicative factor $\bbE$. Such a model ties changes in the edge weights of the graph to its local structure. To see this, note that the difference between the edge weight $s_{ij}$ of the original graph $\bbS$ and the corresponding edge $[\bbP_0^{\Tr}\hbS\bbP_0]_{ij}$ of the perturbed graph $\hbS$ is given by the corresponding entry $[\bbE\bbS+\bbS\bbE]_{ij}$ of the perturbation factor $\bbE\bbS+\bbS\bbE$. It is ready to see that this quantity is proportional to the sum of the degrees of nodes $i$ and $j$ scaled by the entries of $\bbE$. As the norm $\|\bbE\|$ grows, the entries of the graphs $\bbS$ and $\bbP_0^{\Tr}\hbS\bbP_0$ become more dissimilar. But parts of the graph that are characterized by weaker connectivity change by amounts that are proportionally smaller to the changes that are observed in parts of the graph characterized by stronger links. This is in contrast to absolute perturbations where edge weights change by the same amount irrespective of the local topology of the graph.

To study the effect of relative perturbations we can rely on the consideration of Lipschitz filters (cf. Def.~\ref{def:lipschitzFilters}) but more illuminating results are possible with the use of integral Lipschitz filters, which satisfy a condition on the rate of change of their frequency responses that we introduce next.

%
\begin{definition}[Integral Lipschitz Filter]\label{def:integralLipschitzFilters}
    Given a filter $\bbh = \{h_{k}\}_{k=0}^\infty$ its frequency response $h(\lam)$ is given by \eqref{eqn:frequency_response} and satisfies $|h(\lam)|\leq 1$. We say the filter is integral Lipschitz if there exists a constant $C>0$ such that for all $\lam_1$ and $\lam_2$,
    \begin{equation}\label{eqn:integralLipschitzFilters}
    | h(\lam_2) - h(\lam_1) |
    \ \leq\  C\ \frac{| \lam_2 - \lam_1 |}{  |\lam_1 +  \lam_2| \,/\, 2  }\, .
    \end{equation} \end{definition}

%
The condition in \eqref{eqn:integralLipschitzFilters} can be read as requiring the filter's frequency response to be Lipschitz in any interval $(\lam_1,\lam_2)$ with a Lipschitz constant that is inversely proportional to the interval's midpoint $(|\lam_1 + \lam_2|)/2$. To see this better, observe that \eqref{eqn:integralLipschitzFilters} restricts the frequency response's derivative to satisfy,
\begin{equation}\label{eqn_integral_Lipschitz_derivative}
\big|\lambda h'(\lam) \big| \leq C .
\end{equation}
Thus, filters that are integral Lipschitz must have frequency responses that have to be flat for large $\lam$ but can vary very rapidly around $\lam=0$. This is, not coincidentally, a condition reminiscent of the scale invariance of wavelet transforms \cite[Ch.~7]{Daubechies92-Wavelets}. The condition $|h(\lam)|\leq 1$ is not necessary but it eases interpretations by preventing the filter from amplifying energy. 

    
For relative perturbation models and integral Lipschitz filters the following stability result holds.

%
\begin{theorem} \label{thm:filterStabilityRelative}
Let $\bbS = \bbV \bbLambda \bbV^{\Hr}$ and $\hbS$ be graph shift operators. Let $\bbE = \bbU \bbM \bbU^{\Hr} \in \ccalE(\bbS,\hbS)$ be a relative perturbation matrix (cf. Def.~\ref{def_relative_perturbation}) whose norm is such that [cf. \eqref{eqn_relative_perturbation_distance}]
\begin{equation} \label{eqn:hypothesisErelative}
   d(\bbS, \hbS) \leq \| \bbE \| \leq \eps.
\end{equation}
For an integral Lipschitz filter (cf. Def.~\ref{def:integralLipschitzFilters}) with integral Lipschitz constant $C$ the operator distance modulo permutation between filters $\bbH(\bbS)$ and $\bbH(\hbS)$ satisfies
\begin{equation} \label{eqn:boundFilterRelative}
   \| \bbH(\bbS) - \bbH(\hbS) \|_\ccalP  
       \ \leq \ 2 C \left(1 + \delta\sqrt{N} \right)\eps + \ccalO(\eps^{2})
\end{equation}
with $\delta := (\| \bbU - \bbV \|_{2}+1)^{2} -1$ standing for the eigenvector misalignment between  shift operator $\bbS$ and error matrix $\bbE$
\end{theorem}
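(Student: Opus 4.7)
The plan is to mirror the structure of the proof of Theorem~\ref{thm:filterStabilityAbsolute}, substituting the relative perturbation structure $\hbS=\bbS+\bbE\bbS+\bbS\bbE$ for the absolute one and the integral Lipschitz hypothesis for the ordinary Lipschitz hypothesis. The conceptual observation driving the argument is that the symmetric commutator $\bbE\bbS+\bbS\bbE$ injects an extra spectral factor $\lam_m+\lam_n$ into the first-order perturbation, and the integral Lipschitz condition~\eqref{eqn:integralLipschitzFilters} is tailor-made to cancel exactly this factor, yielding a constant $2C$ that is independent of the magnitudes of the eigenvalues of $\bbS$.

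First I would invoke the permutation equivariance of graph filters (Proposition~\ref{prop:filterPermutationEquivariance}) together with the definition of $d(\bbS,\hbS)$ in~\eqref{eqn_relative_perturbation_distance} to reduce, without loss of generality, to the case in which the permutation realizing the infimum is the identity. After this reduction one has $\hbS=\bbS+\bbE\bbS+\bbS\bbE$ with $\|\bbE\|\le\eps$, and $\|\bbH(\bbS)-\bbH(\hbS)\|_\ccalP$ equals the usual operator norm $\|\bbH(\bbS)-\bbH(\hbS)\|$.

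Next, setting $\bbF:=\bbE\bbS+\bbS\bbE$ and Taylor-expanding $\hbS^k=(\bbS+\bbF)^k$, one obtains $\hbS^k-\bbS^k=\sum_{j=0}^{k-1}\bbS^j\bbF\bbS^{k-1-j}+\ccalO(\|\bbE\|^{2})$. Substituting the definition of $\bbF$, re-indexing the second half of the sum, and multiplying by $h_k$ before summing over $k$, the first-order perturbation of $\bbH$ admits the spectral representation
\begin{equation*}
\big[\bbV^{\Hr}(\bbH(\hbS)-\bbH(\bbS))\bbV\big]_{mn} = (\lam_m+\lam_n)\,\frac{h(\lam_m)-h(\lam_n)}{\lam_m-\lam_n}\,\big[\bbV^{\Hr}\bbE\bbV\big]_{mn} + \ccalO(\eps^{2}),
\end{equation*}
where on the diagonal the quotient is interpreted as $h'(\lam_m)$. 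The integral Lipschitz hypothesis, together with its derivative form~\eqref{eqn_integral_Lipschitz_derivative}, bounds each entry of this Hadamard multiplier uniformly by $2C$.

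Finally, I would decompose $\tbE:=\bbV^{\Hr}\bbE\bbV$ into a diagonal part $\tbE_d$ and an off-diagonal part $\tbE_o$. Since the Hadamard product with a diagonal matrix remains diagonal, the contribution of $\tbE_d$ to the operator norm is at most $2C\|\tbE_d\|\le 2C\|\bbE\|\le 2C\eps$. For the off-diagonal piece, writing $\bbE=\bbU\bbM\bbU^{\Hr}$ and expanding $\bbU=\bbV+(\bbU-\bbV)$ yields $\|\tbE-\bbM\|\le\big[(\|\bbU-\bbV\|+1)^{2}-1\big]\|\bbE\|=\delta\|\bbE\|$, so $\|\tbE_o\|\le\delta\|\bbE\|$. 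Bounding the operator norm of the Hadamard product by its Frobenius norm and using $\|\tbE_o\|_F\le\sqrt{N}\|\tbE_o\|$ then yields an off-diagonal contribution bounded by $2C\delta\sqrt{N}\eps$. Summing the two pieces and absorbing the higher-order Taylor remainder into $\ccalO(\eps^{2})$ reproduces~\eqref{eqn:boundFilterRelative}.

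The main obstacle is the spectral identification step: the commutator structure of $\bbE\bbS+\bbS\bbE$ must be tracked carefully through the polynomial expansion so that the factor $\lam_m+\lam_n$ emerges cleanly and cancels against the integral Lipschitz condition; this cancellation is what removes any dependence on $\|\bbS\|$ from the leading-order constant. The remaining bookkeeping around the eigenvector misalignment and the Frobenius-to-operator-norm conversion is essentially identical to the corresponding step in the proof of Theorem~\ref{thm:filterStabilityAbsolute}.
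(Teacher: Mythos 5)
Your proposal is correct and follows essentially the same route as the paper's proof in Appendix C: a first-order expansion of $(\bbS+\bbE\bbS+\bbS\bbE)^k$, the observation that the symmetric perturbation produces the spectral multiplier $(\lam_m+\lam_n)\bigl(h(\lam_m)-h(\lam_n)\bigr)/(\lam_m-\lam_n)$ (the paper's $[\bbg_{i}]_{j}$), which the integral Lipschitz condition bounds by $2C$, and a split of $\bbE$ into a $\bbV$-aligned part contributing $2C\eps$ and a misaligned remainder contributing $2C\delta\sqrt{N}\eps$. One small bookkeeping caveat: your intermediate claim $\|\tbE_o\|\le\delta\|\bbE\|$ for the strictly off-diagonal part of $\tbE=\bbV^{\Hr}\bbE\bbV$ does not follow directly (deleting the diagonal of $\tbE-\bbM$ can increase the operator norm by up to a factor of $2$), but since you pass to the Frobenius norm anyway you can instead use $\|\tbE_o\|_F\le\|\tbE-\bbM\|_F\le\sqrt{N}\,\|\tbE-\bbM\|\le\sqrt{N}\,\delta\eps$, which recovers exactly the stated constant.
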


%
\begin{proof} See appendix \ref{sec_apx_C}. \end{proof}

%
Thm.~\ref{thm:filterStabilityRelative} establishes stability with respect to relative perturbations of the form introduced in Def.~\ref{def_relative_perturbation}. If a matrix $\bbE$ exists that makes $\bbS$ and $\hbS$ close to permutations of each other in terms of this relative perturbation, the filters are stable with respect to the norm of the perturbation with stability constant $2C (1 + \delta\sqrt{N} )$. The constant has the same shape as the one in Thm.~\ref{thm:filterStabilityAbsolute}, and while this is a coincidence, similar observations hold. Namely, the bound is uniform for all graphs with the same number of nodes, the stability is affected by the integral Lipschitz constant $C$ which depends on the filter, and it is also affected by the eigenvector misalignment $(1+\delta\sqrt{N})$, which depends on the structure of the perturbation. The important difference between Thms.~\ref{thm:filterStabilityAbsolute}~and~\ref{thm:filterStabilityRelative} is that the meaning of $C$ is different since the class of filters that are admissible for stability with respect to relative perturbations is that of integral Lipschitz filters -- whereas Lipschitz filters are required for stability with respect to absolute perturbations.

Before elaborating on the implications of allowing for integral Lipschitz filters we consider a variation of Thm.~\ref{thm:filterStabilityRelative} in which we impose a structural constraint on the perturbation.

%
\begin{theorem} \label{thm_filter_stability_structural_constraint} 
With the same hypotheses and definitions of Thm.~\ref{thm:filterStabilityRelative} assume that there exists a matrix $\bbE \in \ccalE(\bbS,\hbS)$ that satisfies \eqref{eqn:hypothesisErelative} and, furthermore, is such that
\begin{equation} \label{eqn:structuralConstraint}
   \min\left [ \left\| \frac{\bbE}{\|\bbE\|} - \bbI \right\| \, , \
               \left\| \frac{\bbE}{\|\bbE\|} + \bbI \right\|  \right] 
       \leq \varepsilon .
\end{equation}
Then, the operator distance modulo permutation between filters $\bbH(\bbS)$ and $\bbH(\hbS)$ satisfies
\begin{equation}\label{eqn_thm_filter_stability_structural_constraint} 
   \| \bbH(\bbS) - \bbH(\hbS) \|_{\ccalP} 
        \leq  2C \varepsilon + \ccalO(\varepsilon^{2}) .
\end{equation} \end{theorem}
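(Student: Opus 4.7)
The plan is to exploit the structural constraint \eqref{eqn:structuralConstraint} to show that the eigenvector--misalignment factor $\delta\sqrt{N}$ from Theorem~\ref{thm:filterStabilityRelative} collapses into the $\ccalO(\varepsilon^{2})$ remainder. The key observation is that \eqref{eqn:structuralConstraint} forces $\bbE$ to lie within an $\varepsilon\|\bbE\|$-ball of $\sigma\|\bbE\|\bbI$ for some $\sigma\in\{+1,-1\}$, and a scalar multiple of the identity commutes with every shift operator. Accordingly, I will split the perturbation into a ``scalar-dilation'' part, which preserves eigenvectors perfectly and hence incurs no misalignment cost, and a ``residual'' part, which is of order $\varepsilon^{2}$ and can therefore be absorbed into the $\ccalO(\varepsilon^{2})$ remainder.

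Concretely, I would first take the permutation attaining the minimum in Def.~\ref{def_relative_perturbation} and assume without loss of generality that it is the identity, so that $\hbS=\bbS+\bbE\bbS+\bbS\bbE$. Picking $\sigma$ to realize the minimum in \eqref{eqn:structuralConstraint}, I decompose
$$ \bbE \ = \ \sigma\|\bbE\|\bbI + \bbD, \qquad \|\bbD\|\ \leq\ \varepsilon\|\bbE\|\ \leq\ \varepsilon^{2}, $$
and set $\hbS_{1}:=(1+2\sigma\|\bbE\|)\bbS$, so that $\hbS=\hbS_{1}+(\bbD\bbS+\bbS\bbD)$. Then I would apply the triangle inequality to split the target bound into the contribution from the scalar dilation, $\|\bbH(\hbS_{1})-\bbH(\bbS)\|_{\ccalP}$, and the contribution from the residual relative perturbation, $\|\bbH(\hbS)-\bbH(\hbS_{1})\|_{\ccalP}$.

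For the scalar-dilation piece, $\hbS_{1}$ and $\bbS$ share the eigenbasis $\bbV$, with eigenvalues $(1+2\sigma\|\bbE\|)\lam_{i}$ and $\lam_{i}$ respectively. Applying the integral-Lipschitz bound \eqref{eqn:integralLipschitzFilters} pointwise,
$$ \bigl| h\bigl((1+2\sigma\|\bbE\|)\lam_{i}\bigr) - h(\lam_{i}) \bigr|
     \ \leq \ C \, \frac{2\|\bbE\|\,|\lam_{i}|}{|1+\sigma\|\bbE\|| \cdot |\lam_{i}|}
     \ = \ \frac{2C\|\bbE\|}{|1+\sigma\|\bbE\||}
     \ = \ 2C\|\bbE\| + \ccalO(\varepsilon^{2}), $$
uniformly in $i$, so that $\|\bbH(\hbS_{1})-\bbH(\bbS)\|_{\ccalP}\leq 2C\varepsilon+\ccalO(\varepsilon^{2})$ with no $\delta\sqrt{N}$ factor---precisely because $\bbI$ is diagonal in every orthonormal basis. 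For the residual piece, I would invoke Theorem~\ref{thm:filterStabilityRelative} with $\bbD$ playing the role of the perturbation multiplier (using that $\hbS-\hbS_{1}=\bbD\bbS+\bbS\bbD$ equals $\bbD\hbS_{1}+\hbS_{1}\bbD$ up to an $\ccalO(\|\bbE\|\|\bbD\|)=\ccalO(\varepsilon^{3})$ correction); since $\|\bbD\|\leq\varepsilon^{2}$, this contributes at most $2C(1+\delta_{D}\sqrt{N})\varepsilon^{2}+\ccalO(\varepsilon^{4})=\ccalO(\varepsilon^{2})$ for fixed $N$. Combining the two bounds by the triangle inequality yields \eqref{eqn_thm_filter_stability_structural_constraint}.

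The main obstacle I anticipate is the higher-order bookkeeping, specifically verifying that the passage from ``relative perturbation of $\bbS$ by $\bbD$'' to ``relative perturbation of $\hbS_{1}$ by $\bbD$'' really only incurs an $\ccalO(\varepsilon^{3})$ error, and that the first-order expansion underlying Theorem~\ref{thm:filterStabilityRelative} composes cleanly with the scalar-dilation step without introducing cross terms that would resurrect an $N$-dependent coefficient at order $\varepsilon^{2}$. A secondary subtlety is that the structural constraint \eqref{eqn:structuralConstraint} degenerates when $\|\bbE\|=0$; this case is trivial ($\hbS=\bbS$ modulo permutation, so the bound follows immediately from Corollary~\ref{coro_permutation_equivariance}) but must be mentioned explicitly to make the decomposition $\bbE=\sigma\|\bbE\|\bbI+\bbD$ well-defined.
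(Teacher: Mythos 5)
Your proof is correct, but it takes a genuinely different route from the paper's. The paper re-enters the first-order expansion machinery from the proof of Thm.~\ref{thm:filterStabilityRelative}: it orders the eigenvalues of $\bbE$ so that $\|\bbE\|=|m_N|$, rewrites the structural constraint as $\|\bbE/m_N-\bbI\|\leq\varepsilon$, and replaces Lemma~\ref{l:Evi} by the exact decomposition $\bbE\bbv_i = m_N\bbv_i + m_N\bbw_i$ with $\|\bbw_i\|\leq\varepsilon$; the extra factor of $\varepsilon$ carried by $\bbw_i$ is what demotes the $\sqrt{N}$-dependent cross term to $\ccalO(\varepsilon^2)$ (the paper's second-order term is explicitly $2C\varepsilon^2\sqrt{N}$, so, like yours, it retains $N$-dependence at second order; the claimed uniformity in $N$ concerns only the leading term). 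You instead decompose at the level of the shift operators, $\hbS=(1+2\sigma\|\bbE\|)\bbS+(\bbD\bbS+\bbS\bbD)$, handle the scalar dilation exactly and non-perturbatively through the shared eigenbasis and the integral Lipschitz condition, and invoke Thm.~\ref{thm:filterStabilityRelative} as a black box on the $\ccalO(\varepsilon^2)$-sized residual. Your route is more modular and its leading term needs no Taylor expansion at all; one simplification is that the rescaling $\bbD'=\bbD/(1+2\sigma\|\bbE\|)$ makes $\hbS=\hbS_1+\bbD'\hbS_1+\hbS_1\bbD'$ hold exactly, so the $\ccalO(\varepsilon^3)$ correction you worry about never arises. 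The triangle inequality for $\|\cdot\|_{\ccalP}$ that you use implicitly does hold (or can be sidestepped by fixing $\bbP_0=\bbI$ and bounding plain operator norms throughout), and your explicit treatment of the degenerate case $\|\bbE\|=0$ is a detail the paper omits.
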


%
\begin{proof} See appendix \ref{sec_apx_C}. \end{proof}

%
The structural constraint in \eqref{eqn:structuralConstraint} requires the error matrix $\bbE$ to be a scaled identity to within a first order approximation. With this restriction on the set of admissible perturbations we can bound the eigenvector misalignment between $\bbS$ and $\bbE$ and remove the dependency that the bound in Thm.~\ref{thm:filterStabilityRelative} has on the number of nodes $N$. The bound in \eqref{eqn_thm_filter_stability_structural_constraint} holds uniformly for all graphs independently of their number of nodes.

The integral Lipschitz filters in Thms.~\ref{thm:filterStabilityRelative}~and~\ref{thm_filter_stability_structural_constraint} are of interest because they can be made finely discriminative at low-eigenvalue frequencies without affecting stability. Indeed, to control stability in \eqref{eqn:boundFilterRelative} we need to limit the value of the integral Lipschitz constant $C$. This requires filters that change more slowly. In particular, for large $\lam$ the filters must be constant and cannot discriminate nearby spectral features. But at values of $\lam\approx 0$ the filters can change rapidly and can therefore be designed to discriminate (arbitrarily) close spectral features. To the extent that relative perturbations are admissible -- which, as already discussed, are arguably more sensible than absolute ones -- Thm.~\ref{thm:filterStabilityRelative} shows two fundamental properties of linear graph filters: (i) They {\it cannot} be stable and discriminative of spectral features associated with large $\lam$. (ii) They {\it can} be stable and discriminative of spectral features associated with $\lam\approx0$.

Thus, if we are interested in discriminating features associated with $\lam\approx0$, linear graph filters are sufficient. However, if we are interested in discriminating features associated with large $\lam$, linear graph filters will fail because of their sensitivity to graph perturbations. We will see in the following section that this is an issue we can resolve with the introduction of pointwise nonlinearities to produce graph neural networks.

%
\begin{remark}[Integral Lipschitz filters] \normalfont
    We note that filter banks abiding to the integral Lipschitz condition exist in the literature; see, for example, graph wavelets \cite{Coifman06-DiffusionWavelets, Shuman15-Wavelets, Hammond11-Wavelets}. However, filters in a GNN are trained from data and it is therefore not necessarily guaranteed that they will satisfy this condition. To address this from a practical standpoint, in Sec.~\ref{sec:sims} we add the integral Lipschitz condition as a penalty during training \eqref{eqn_integral_Lipschitz_derivative}. This results in filters with controllable constant $C$ that illustrate different degrees of stability. We further remark that, regardless of enforcing integral Lipschitz conditions on the filters learned in a GNN, the insights stemming from the discussion ensuing in Sec.~\ref{sec_linear_filters_stability} still hold. Namely, that GNNs are simultaneously stable and discriminative, a feat that cannot be achieved by linear filter banks by themselves.
\end{remark}

%
\begin{remark}[Eigenvector misalignment] \normalfont
    The bound \eqref{eqn:boundFilterRelative} in Theorem~\ref{thm:filterStabilityRelative} depends on the eigenvector misalignment constant $\delta$. This, in turn, depends on the specific relative perturbation $\bbE \in \ccalE$ [cf. \eqref{eqn_def_relative_perturbation}] and computing it requires an eigendecomposition. However, there are two important observations to be made. First, that while Theorem~\ref{thm:filterStabilityRelative} holds for all values of $\bbE$, there are specific families of perturbations for which the eigenvector misalignment constant can be known (as is the case with Euclidean perturbations). Second, that regardless of the specific $\bbE$, it always holds that $\delta \leq 8$, following from the fact that $\|\bbU\| \leq 1$ and $\|\bbV\| \leq 1$ because they are eigenvector matrices. However, this accentuates the dependence of the Lipschitz constant with the size of the graph $N$.
\end{remark}



%
\section{Stability Properties of Graph Neural Networks} \label{sec:graphNeuralNetworks}

Graph neural networks are a cascade of layers, each of which applies a bank of graph filters, followed by a pointwise nonlinearity \cite{Bruna14-DeepSpectralNetworks, Defferrard17-CNNGraphs, Kipf17-ClassifGCN, Gama18-Architectures}. To increase the representation power of GNNs, we consider that at layer $\ell$ there are $F_{\ell}$ graph signals $\bbx_{\ell}^{f} \in \reals^{N}$, $f=1,\ldots,F_{\ell}$ instead of a single one, as was the case in the previous section. Each graph signal $\bbx_{\ell}^{f}$ is called a \emph{feature}. The input to layer $\ell$ is then the $F_{\ell-1}$ signals $\bbx_{\ell-1}^{g}$ that were the output of layer $\ell-1$, $g=1,\ldots,F_{\ell-1}$. To compute the $F_{\ell}$ features $\bbx_{\ell}^{f}$ we expect at the output of layer $\ell$, we first process the $F_{\ell-1}$ input signals with a bank of $F_{\ell-1}F_{\ell}$ graph filters denoted by $\bbH_{\ell}^{fg}(\bbS)$ [cf. \eqref{eqn:graphFilter}], defined by coefficients $\bbh_{\ell}^{fg}$
\begin{equation} \label{eqn_gnn_filter}
    \bbz_{\ell}^{fg}  =  \sum_{k=0}^{\infty} h_{\ell k}^{fg}\, \bbS^{k} \bbx_{\ell-1}^{g} = \bbH_{\ell}^{fg}(\bbS) \bbx_{\ell-1}^{g}
\end{equation}
where we obtain the intermediate features $\bbz_{\ell}^{fg}$ for $f=1,\ldots,F_{\ell}$ and $g=1,\ldots,F_{\ell-1}$. All of these intermediate features $\bbz_{\ell}^{fg}$ for a given index $f$ are summed together to linearly yield $F_{\ell}$ features, and passed through a pointwise nonlinear function $\sigma:\reals\to\reals$ to produce the output feature
\begin{equation} \label{eqn_gnn_nonlinearity}
    \bbx_{\ell}^{f}  \ = \ \sigma \bigg[\, \sum_{g} \bbz_{\ell}^{fg}\, \bigg]
\end{equation}
for $f=1,\ldots,F_{\ell}$. We note that, in an abuse of notation, the application of $\sigma$ to the vector $\sum_{g}\bbz_{\ell}^{fg}$ implies the entrywise application of the nonlinearity, i.e. the same $\sigma$ is applied independently to every feature at every node. The input to the GNN is the $0$th layer signal $\bbx = \bbx_0^1$ and the output of the GNN is the $L$th layer feature $\bbx_L^1$. 

We note that \eqref{eqn_gnn_filter}-\eqref{eqn_gnn_nonlinearity} can be compactly written as
\begin{equation}
    \bbX_{\ell} = \sigma \Big[ \sum_{k = 0}^{\infty} \bbS^{k} \bbX_{\ell-1} \bbH_{\ell k}\Big]
\end{equation}
for $\bbX_{\ell} \in \reals^{N \times F_{\ell}}$ the matrix whose columns are the graph signal features $\bbx_{\ell}^{f}$ and where $\bbH_{\ell k} \in \reals^{F_{\ell-1} \times F_{\ell}}$ is the matrix collecting the $k$th coefficient of all filters in the bank, $[\bbH_{\ell k}]_{gf} = h_{\ell k}^{fg}$. In what follows, however, we choose the description in \eqref{eqn_gnn_filter}-\eqref{eqn_gnn_nonlinearity} which emphasizes the role of each graph filter, allowing us to better focus on the interaction between filters and nonlinearities. For ease of exposition, we assume that at each layer each feature is processed by $F$ filters. This means that the first and last layer contain $F$ filters whereas the remaining intermediate layers contain $F^2$ filters. In any case, the results derived here extend to intermediate layers with varying number of features in a straightforward manner (see proofs in the Appendix).

We emphasize that the nonlinear operation in \eqref{eqn_gnn_nonlinearity} is applied to each entry of $\bbz_{\ell}^{f}$ individually. We further assume that the nonlinearity is normalized Lipschitz so that for all $a,b\in\reals$,
\begin{equation} \label{eqn_Lipschitz_nonlinearity}
	 | \sigma(b) - \sigma(a) | \leq | b - a | .
\end{equation}
Asides from the input $\bbx$, the GNN's output depends on the filters $\bbh_{\ell}^{fg}$ and the graph $\bbS$. We interpret a GNN as a transform defined by the filter coefficients that we can apply on {\it any graph} to any signal defined on the graph. Define then the map
\begin{equation} \label{eqn_gnn_operator}
   \Phi(\bbS,\bbx) \ = \ \bbx_L^1
\end{equation}
to represent the outcome of applying \eqref{eqn_gnn_filter}-\eqref{eqn_gnn_nonlinearity} on graph $\bbS$ to input signal $\bbx = \bbx_0^1$. Our goal is to study the stability of the operator $\Phi(\bbS,\cdot)$ with respect to perturbations of the graph $\bbS$.

%
\begin{remark}[Graph neural networks] \normalfont
    We consider GNNs defined by equations \eqref{eqn_gnn_filter}-\eqref{eqn_gnn_nonlinearity}. The literature includes several different proposed architectures such as ChebNets \cite{Defferrard17-CNNGraphs}, GCNs \cite{Kipf17-ClassifGCN} or Selection GNNs \cite{Gama18-Architectures}. The model in equations \eqref{eqn_gnn_filter}-\eqref{eqn_gnn_nonlinearity} is the one that appears in \cite{Gama18-Architectures} and it could be therefore interpreted as a particular choice. However, it is known that all of the architectures in 
    \cite{Defferrard17-CNNGraphs}-\cite{Gama18-Architectures} can be equivalently described by the GNN model in equations \eqref{eqn_gnn_filter}-\eqref{eqn_gnn_nonlinearity} \cite{Gama19-GraphConv, Gama20-SPM}. Therefore, the results that we derive can be applied to all of these architectures, which are formally known as graph \emph{convolutional} neural networks. For a comprehensive framework of convolutional as well as non-convolutional graph neural networks, as well as a proof of equivalence between architectures, see \cite{Isufi20-EdgeNets}.
\end{remark} 

%
\subsection{Permutation Equivariance of GNNs}\label{sec_gnn_stability_perturbations}

The superior performance of graph neural networks (GNNs) can be explained by the use of filter banks and nonlinearities to successfully process high-eigenvalue frequencies in a stable manner. An arbitrary GNN $\Phi(\bbS, \cdot)$ with $L$ layers, over a graph representation $\bbS$, is defined by \eqref{eqn_gnn_filter}-\eqref{eqn_gnn_nonlinearity} for $\ell=1,\ldots,L$. GNNs retain the two fundamental properties of linear filters. Namely, permutation equivariance and stability.

%
\begin{proposition}[GNN permutation equivariance] \label{prop:GNNPermutationEquivariance}
    Consider graph shifts $\bbS$ and $\hbS = \bbP^{\Tr} \bbS \bbP$ for some permutation matrix $\bbP \in \ccalP$ [cf. \eqref{eqn:permutationSet}]. Given a bank of filters $\{\bbh_{\ell}^{fg}\}$ for each layer $\ell=1,\ldots,L$ and a pointwise nonlinearity $\sigma$, define a GNN $\Phi$ [cf. \eqref{eqn_gnn_filter}-\eqref{eqn_gnn_nonlinearity}]. Then, for any pair of corresponding graph signals $\bbx$ and $\hbx = \bbP^{\Tr} \bbx$ used as input to the GNN it holds that
    \begin{equation}
    \Phi(\hbS, \hbx) = \bbP^{\Tr} \Phi(\bbS, \bbx).
    \end{equation}
\end{proposition}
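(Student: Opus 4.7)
The plan is to prove this by induction on the layer index $\ell$, leveraging Proposition~\ref{prop:filterPermutationEquivariance} (which establishes permutation equivariance for a single graph filter) and the fact that the pointwise nonlinearity $\sigma$ commutes with permutation. Throughout, I will show the stronger per-feature statement that $\hbx_\ell^f = \bbP^\Tr \bbx_\ell^f$ for every feature index $f$ and every layer $\ell$, where $\hbx_\ell^f$ denotes the $f$th feature computed by the GNN on $(\hbS, \hbx)$ and $\bbx_\ell^f$ the corresponding feature computed on $(\bbS, \bbx)$.

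The base case is $\ell = 0$: by hypothesis we have exactly one input feature with $\hbx_0^1 = \hbx = \bbP^\Tr \bbx = \bbP^\Tr \bbx_0^1$. For the inductive step, I assume that $\hbx_{\ell-1}^g = \bbP^\Tr \bbx_{\ell-1}^g$ holds for every $g = 1,\ldots, F_{\ell-1}$. Applying the filter bank defined by coefficients $\bbh_\ell^{fg}$ on the shift $\hbS = \bbP^\Tr \bbS \bbP$ to the input $\hbx_{\ell-1}^g$, Proposition~\ref{prop:filterPermutationEquivariance} yields
\begin{equation}
\hbz_\ell^{fg} = \bbH_\ell^{fg}(\hbS)\, \hbx_{\ell-1}^g = \bbH_\ell^{fg}(\bbP^\Tr \bbS \bbP)(\bbP^\Tr \bbx_{\ell-1}^g) = \bbP^\Tr\bigl(\bbH_\ell^{fg}(\bbS)\, \bbx_{\ell-1}^g\bigr) = \bbP^\Tr \bbz_\ell^{fg}.
\end{equation}
Because $\bbP^\Tr$ is a linear map, summing over $g$ preserves this identity, giving $\sum_g \hbz_\ell^{fg} = \bbP^\Tr \sum_g \bbz_\ell^{fg}$.

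The next ingredient is that $\sigma$ is applied entrywise, so it commutes with any coordinate relabeling: for any vector $\bbv \in \reals^N$, $\sigma(\bbP^\Tr \bbv) = \bbP^\Tr \sigma(\bbv)$. Combining this with the previous display gives
\begin{equation}
\hbx_\ell^f = \sigma\Bigl[\sum_g \hbz_\ell^{fg}\Bigr] = \sigma\Bigl[\bbP^\Tr \sum_g \bbz_\ell^{fg}\Bigr] = \bbP^\Tr \sigma\Bigl[\sum_g \bbz_\ell^{fg}\Bigr] = \bbP^\Tr \bbx_\ell^f,
\end{equation}
closing the induction. Specializing to $\ell = L$ and $f = 1$ yields $\Phi(\hbS, \hbx) = \hbx_L^1 = \bbP^\Tr \bbx_L^1 = \bbP^\Tr \Phi(\bbS, \bbx)$, as desired.

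I do not anticipate a substantive obstacle: the proof is essentially a bookkeeping exercise that threads Proposition~\ref{prop:filterPermutationEquivariance} through the cascade. The only point that deserves explicit mention is the entrywise nature of $\sigma$, which is what allows the permutation to pass through the nonlinearity without interacting with the filter taps; had $\sigma$ been a general (non-pointwise) map on $\reals^N$, the argument would break. Care must also be taken to formulate the induction hypothesis per-feature rather than only at the aggregated output, since the filter bank mixes all $F_{\ell-1}$ input features into each of the $F_\ell$ output features through the inner sum.
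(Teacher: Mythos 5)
Your proof is correct and follows essentially the same route as the paper's: apply Prop.~\ref{prop:filterPermutationEquivariance} to each filter in the bank, use linearity to pass the permutation through the sum over $g$, and use the entrywise nature of $\sigma$ to commute it with $\bbP^{\Tr}$, then cascade over layers. The only difference is presentational -- you make the layer-by-layer induction and the per-feature hypothesis explicit, whereas the paper states the single-layer equivariance and remarks that it holds for every $\ell$.
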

%
%
\begin{proof}
    See Appendix~\ref{sec_apx_D}.
\end{proof}
%

Proposition~\ref{prop:GNNPermutationEquivariance} states that GNNs retain the permutation equivariance inherited from graph filters [cf. Prop~\ref{prop:filterPermutationEquivariance}], so that graph signal processing with GNNs is independent of node relabelings. To study stability of the GNN operator in \eqref{eqn_gnn_operator}, we therefore want to consider measures of proximity that are impervious to permutations. To that end we define an operator distance modulo permutation as follows.

%
\begin{definition}[Operator Distance Modulo Permutation] \label{def_nonlinear_operator_distance} 
Given operators $\Psi:\reals^N\to\reals^N$ and $\hat{\Psi}:\reals^N\to\reals^N$ we define their operator distance modulo permutation as
\begin{align} \label{eqn_def_nonlinear_operator_distance}
   \big\| \Psi - \hat\Psi \big\|_{\ccalP} 
      \ = \   \min_{\bbP\in\ccalP} \,
                 \max_{ \bbx : \| \bbx \| = 1 } \,
                     \big\|\bbP^{\Tr}\Psi(\bbx)  - \hat\Psi(\bbP^{\Tr} \bbx)\big\|
\end{align} 
where $\ccalP$ is the set of $N\times N$ permutation matrices (cf. \eqref{eqn:permutationSet}) and where $\| \cdot \|$ stands for the $\ell_{2}$-norm.
\end{definition}

%
The operator distance in \eqref{eqn_def_nonlinear_operator_distance} compares operators $\Psi$ and $\hat{\Psi}$ when the same permutations are applied at their respective inputs and output, and it is a generalization of Def.~\ref{def_operator_distance} to (nonlinear) operators. GNNs are insensitive to permutations, as shown by Prop.~\ref{prop:GNNPermutationEquivariance}, and thus we have $\big\| \Phi - \hat\Phi \big\|_{\ccalP} = 0$. The distance in \eqref{eqn_def_nonlinear_operator_distance} is thus a measure of how far from a permutation the operators are.

%
\subsection{Stability of GNNs to Perturbations of the Graph}

The stability of GNNs is inherited from that of the graph filters that conform the filter bank used in \eqref{eqn_gnn_filter}. The hyperparameters of the GNN further impact the stability.

%
\begin{theorem}[GNN Stability] \label{thm:GNNStability}
    Let $\bbS$ and $\hbS$ be GSOs related by perturbation matrix $\bbE$ [cf. \eqref{eqn_def_operator_distance_error_matrix} or \eqref{eqn_relative_perturbation_distance}] such that $\| \bbE \| \leq \varepsilon$. Given a bank of filters $\{\bbh_{\ell}^{fg}\}$ such that $|h_{\ell}^{fg}(\lambda)| \leq 1$ [cf. \eqref{eqn:frequency_response}] and a pointwise nonlinearity $\sigma$ that is Lipschitz continuous \eqref{eqn_Lipschitz_nonlinearity}, define GNNs $\Phi(\bbS, \cdot)$ and $\Phi(\hbS, \cdot)$ [cf. \eqref{eqn_gnn_filter}-\eqref{eqn_gnn_nonlinearity}]. If the corresponding filter banks satisfy $\| \bbH_{\ell}^{fg}(\bbS) - \bbH_{\ell}^{fg}(\hbS) \|_{\ccalP} \leq \Delta \varepsilon$, then it holds that
    \begin{equation} \label{eqn:genericStabilityConstant}
    \| \Phi(\bbS, \cdot) - \Phi(\hbS, \cdot) \|_{\ccalP} \leq \Delta LF^{L-1} \varepsilon + \ccalO(\varepsilon^{2})
    \end{equation}
    for a GNN with a single input feature, a single output feature and $F$ features in each hidden layer.
\end{theorem}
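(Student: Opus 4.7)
The plan is to reduce the stability bound for the whole GNN to the per-filter bound $\|\bbH_\ell^{fg}(\bbS)-\bbH_\ell^{fg}(\hbS)\|_{\ccalP}\leq \Delta\varepsilon$ by induction on the layer index $\ell$. First I would invoke Proposition~\ref{prop:GNNPermutationEquivariance} (and the definition of $\|\cdot\|_{\ccalP}$ in Def.~\ref{def_nonlinear_operator_distance}) to justify fixing a single permutation $\bbP_0 \in \ccalP$ such that the underlying error model \eqref{eqn_def_operator_distance_error_matrix}/\eqref{eqn_relative_perturbation_distance} relates $\bbS$ to $\bbP_0^{\Tr}\hbS\bbP_0$, so that the same $\bbP_0$ serves all $F_{\ell-1}F_\ell$ filters at every layer. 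The goal then becomes bounding, for a unit-norm input $\bbx$ with $\hbx=\bbP_0^{\Tr}\bbx$, the quantity $\|\bbx_L^1-\bbP_0\hbx_L^1\|$ where $\{\bbx_\ell^f\}$ and $\{\hbx_\ell^f\}$ are the layer-$\ell$ features of the two GNNs.

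The inductive step at layer $\ell$ proceeds by writing
\begin{align*}
\bbx_\ell^f - \bbP_0 \hbx_\ell^f
   &= \sigma\!\Big[\textstyle\sum_g \bbH_\ell^{fg}(\bbS)\bbx_{\ell-1}^g\Big]
      - \sigma\!\Big[\textstyle\sum_g \bbP_0 \bbH_\ell^{fg}(\hbS)\hbx_{\ell-1}^g\Big],
\end{align*}
using that $\sigma$ is pointwise to pull $\bbP_0$ inside the nonlinearity. By the normalized Lipschitz hypothesis \eqref{eqn_Lipschitz_nonlinearity} and the triangle inequality, I get a sum over $g$ of terms
$\|\bbH_\ell^{fg}(\bbS)\bbx_{\ell-1}^g - \bbP_0\bbH_\ell^{fg}(\hbS)\hbx_{\ell-1}^g\|$.
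Adding and subtracting $\bbP_0\bbH_\ell^{fg}(\hbS)\bbP_0^{\Tr}\bbx_{\ell-1}^g$ splits each term into (i) a filter-perturbation piece controlled by the hypothesis $\|\bbH_\ell^{fg}(\bbS)-\bbH_\ell^{fg}(\hbS)\|_{\ccalP}\le \Delta\varepsilon$ applied to $\bbx_{\ell-1}^g$, and (ii) a propagation piece $\|\bbH_\ell^{fg}(\hbS)\|\cdot\|\bbP_0^{\Tr}\bbx_{\ell-1}^g-\hbx_{\ell-1}^g\|$, which, using $|h(\lambda)|\le 1$ and $\bbS$ symmetric, is bounded by $\|\bbx_{\ell-1}^g-\bbP_0\hbx_{\ell-1}^g\|$.

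Defining $E_\ell:=\max_f\|\bbx_\ell^f-\bbP_0\hbx_\ell^f\|$ and $B_\ell:=\max_f\|\bbx_\ell^f\|$ then yields the coupled recursion
\begin{equation*}
E_\ell \;\le\; F_{\ell-1}\bigl(\Delta\varepsilon\, B_{\ell-1} + E_{\ell-1}\bigr) + \ccalO(\varepsilon^2),
\qquad
B_\ell \;\le\; F_{\ell-1} B_{\ell-1},
\end{equation*}
with $E_0=0$ and $B_0=\|\bbx\|\le 1$, where the $\ccalO(\varepsilon^2)$ term collects the higher-order remainders carried from Thms.~\ref{thm:filterStabilityAbsolute}/\ref{thm:filterStabilityRelative} through each layer. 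Dividing through by the product $P_\ell:=\prod_{k=0}^{\ell-1}F_k$ telescopes the recursion into $E_\ell/P_\ell \le E_{\ell-1}/P_{\ell-1} + \Delta\varepsilon B_0$, so that after $L$ layers $E_L \le L\,P_L\,\Delta\varepsilon$. With the feature bookkeeping of the theorem, $F_0=F_L=1$ and $F_\ell=F$ for $1\le\ell\le L-1$, giving $P_L=F^{L-1}$ and the claimed $\|\Phi(\bbS,\cdot)-\Phi(\hbS,\cdot)\|_{\ccalP}\le \Delta LF^{L-1}\varepsilon + \ccalO(\varepsilon^2)$.

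The main obstacle I anticipate is the bookkeeping: one must verify that a single permutation $\bbP_0$ suffices (which follows because all filters are instantiated on the same pair $(\bbS,\hbS)$), that $\sigma$ commutes with $\bbP_0$, and that the $\ccalO(\varepsilon^2)$ remainders indeed aggregate to a layer-independent higher-order term rather than contaminating the leading coefficient $LF^{L-1}$. The rest is bounding operator norms of graph filters by $|h(\lambda)|\le 1$ and iterating the one-layer estimate; extending to unequal feature counts $F_\ell$ only changes $F^{L-1}$ to $\prod_{\ell=0}^{L-1}F_\ell$ and requires no new ideas.
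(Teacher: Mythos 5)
Your proposal is correct and follows essentially the same route as the paper's proof in Appendix E: Lipschitz continuity of $\sigma$ plus the triangle inequality, the add-and-subtract split into a filter-perturbation term (controlled by the hypothesis $\|\bbH_\ell^{fg}(\bbS)-\bbH_\ell^{fg}(\hbS)\|_{\ccalP}\le\Delta\varepsilon$) and a propagation term (controlled by $|h(\lambda)|\le 1$), and a layer recursion coupled with a bound on the feature norms that is then solved to give $\Delta L F^{L-1}\varepsilon$. The only differences are presentational — you track maxima over features and telescope by dividing by $\prod_k F_k$, whereas the paper keeps sums and solves the recursion directly, and you make the single-permutation bookkeeping explicit where the paper leaves it implicit.
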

%
%
\begin{proof}
    See Appendix \ref{sec_apx_E}.
\end{proof}
%

Thm.~\ref{thm:GNNStability} establishes how the stability of the filters $\Delta$ is affected by the hyperparameters of the GNN architecture. More specifically, we see that the stability gets degraded linearly with the number of layers $L$, and exponentially with the number of features $F$ (with an exponent controlled by $L$). In essence, the deeper a GNN is, the less stable it is. We note that this causes the bound to be quite loose, as is also evidenced in Sec.~\ref{sec:sims}. However, we see that the result is still linear in the size of the perturbation $\varepsilon$ and in the stability constant $\Delta$ of the filters. This stability constant depends on the perturbation model under consideration (either absolute --Sec.~\ref{sec_graph_perturbations}-- or relative --Sec.~\ref{sec_graph_perturbations_relative}--) and on the Lipschitz condition on the graph filters (either Lipschitz --Def.~\ref{def:lipschitzFilters}-- or integral Lipschitz --Def.~\ref{def:integralLipschitzFilters}--), as determined next.

%
\begin{proposition}\label{prop:Delta}
Under the conditions of Thm.~\ref{thm:GNNStability}, with $\bbS = \bbV \bbLambda \bbV^{\Hr}$, consider the following models.
\begin{enumerate}[(i)]
    \vspace{.2cm}
    \item If matrix $\bbE = \bbU \bbM \bbU^{\Hr}$ models absolute perturbations [cf. \eqref{eqn_def_operator_distance_error_matrix}], and the filters are Lipschitz (Def.~\ref{def:lipschitzFilters}) we have
    \begin{equation} \label{eqn:DeltaAbs}
    \bbDelta = C (1+\delta \sqrt{N})
    \end{equation}
    with $\delta = (\| \bbU - \bbV \| + 1)^{2}-1$.
    \vspace{.4cm}
    \item If matrix $\bbE = \bbU \bbM \bbU^{\Hr}$ models relative perturbations (Def.~\ref{def_relative_perturbation}), and the filters are integral Lipschitz (Def.~\ref{def:integralLipschitzFilters}),
    \begin{equation} \label{eqn:DeltaRel}
    \bbDelta = 2 C (1+\delta \sqrt{N})
    \end{equation}
    holds with $\delta = (\| \bbU - \bbV \| + 1)^{2}-1$.
    \vspace{.4cm}
    \item If matrix $\bbE$ models relative perturbations (Def.~\ref{def_relative_perturbation}) and satisfies \eqref{eqn:structuralConstraint}, and the filters are integral Lipschitz (Def.~\ref{def:integralLipschitzFilters}),
    \begin{equation} \label{eqn:DeltaStructural}
    \bbDelta = 2C.
    \end{equation}
\end{enumerate}
\end{proposition}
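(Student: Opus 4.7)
The plan is to recognize Proposition~\ref{prop:Delta} as a direct corollary that simply ``plugs in'' the appropriate filter stability theorem into the role played by $\Delta$ in Theorem~\ref{thm:GNNStability}. Since the hypothesis of Theorem~\ref{thm:GNNStability} requires each filter in the bank to satisfy $\|\bbH_{\ell}^{fg}(\bbS) - \bbH_{\ell}^{fg}(\hbS)\|_{\ccalP} \leq \Delta\varepsilon$, the task is, for each of the three perturbation/filter regimes, to produce a value of $\Delta$ such that every filter $\bbH_{\ell}^{fg}$ in the network meets this bound to first order.

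For part (i), I would invoke Theorem~\ref{thm:filterStabilityAbsolute}. The assumption in the proposition is that $\bbE$ is an absolute perturbation with $\|\bbE\| \leq \varepsilon$ and that each filter is Lipschitz with constant $C$; this is exactly the hypothesis of Theorem~\ref{thm:filterStabilityAbsolute}, whose conclusion \eqref{eqn:boundFilterAbsolute} is precisely $\|\bbH_{\ell}^{fg}(\bbS) - \bbH_{\ell}^{fg}(\hbS)\|_{\ccalP} \leq C(1+\delta\sqrt{N})\varepsilon + \ccalO(\varepsilon^{2})$. Reading off the coefficient of $\varepsilon$ yields $\Delta = C(1+\delta\sqrt{N})$, as claimed in \eqref{eqn:DeltaAbs}. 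Part (ii) is completely analogous, using Theorem~\ref{thm:filterStabilityRelative} in place of Theorem~\ref{thm:filterStabilityAbsolute}: the relative perturbation model and integral Lipschitz hypothesis match, and \eqref{eqn:boundFilterRelative} immediately yields $\Delta = 2C(1+\delta\sqrt{N})$. For part (iii), I would invoke Theorem~\ref{thm_filter_stability_structural_constraint}, whose extra structural hypothesis \eqref{eqn:structuralConstraint} is exactly what is assumed here; its conclusion \eqref{eqn_thm_filter_stability_structural_constraint} gives $\Delta = 2C$.

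The one small bookkeeping point to address is that the filter stability theorems are stated for a single graph filter, while Theorem~\ref{thm:GNNStability} requires a common $\Delta$ across the whole bank $\{\bbh_{\ell}^{fg}\}$. This is handled by observing that the Lipschitz or integral Lipschitz constant $C$ in the proposition is understood to be a uniform upper bound valid for every filter in the bank (and the normalization $|h_{\ell}^{fg}(\lambda)| \leq 1$ already required by Theorem~\ref{thm:GNNStability} is exactly the normalization appearing in Definitions~\ref{def:lipschitzFilters} and~\ref{def:integralLipschitzFilters}). With that reading, the same $\Delta$ works for all pairs $(f,g,\ell)$, and the three statements follow.

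I do not expect any substantive obstacle: the proposition is essentially a dictionary translating the three filter-level stability results into the language of the GNN stability theorem. The ``hard part,'' if any, is purely notational, namely verifying that the hypotheses of each underlying filter theorem are implied by the hypotheses listed in the corresponding item of the proposition, and that the $\ccalO(\varepsilon^{2})$ remainder in each filter bound propagates unchanged (up to absorbing multiplicative constants independent of $\varepsilon$) through the linear-in-$\Delta$ conclusion of Theorem~\ref{thm:GNNStability}.
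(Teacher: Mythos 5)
Your proposal matches the paper's own proof, which likewise derives each value of $\Delta$ by reading off the first-order coefficient from Theorems~\ref{thm:filterStabilityAbsolute},~\ref{thm:filterStabilityRelative}, and~\ref{thm_filter_stability_structural_constraint} and substituting it into the hypothesis of Theorem~\ref{thm:GNNStability}. Your extra remark that $C$ must be a uniform bound over the whole filter bank is a correct (and worthwhile) clarification that the paper leaves implicit.
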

%
%
\begin{proof}
    Follows directly from Thm.~\ref{thm:GNNStability} in combination with Thms.~\ref{thm:filterStabilityAbsolute},~\ref{thm:filterStabilityRelative}~and~\ref{thm_filter_stability_structural_constraint}. These theorems establish the conditions and the corresponding values of $\Delta$.
\end{proof}
%

The results of Thm.~\ref{thm:GNNStability} in combination with Prop.~\ref{prop:Delta} show that: (i) the use of Lipschitz filters lead to stable GNNs under absolute perturbations, and (ii) the use of integral Lipschitz filters lead to stable GNNs under relative perturbations. In all cases, Thm.~\ref{thm:GNNStability} establishes that GNNs $\bbPhi(\bbS, \bbx)$ are Lipschitz stable with respect to the changes in the underlying graph support $\bbS$, and not with respect to changes in the input $\bbx$.

The stability constant in \eqref{eqn:genericStabilityConstant} in combination with the value of $\Delta$ in \eqref{eqn:DeltaAbs} or \eqref{eqn:DeltaRel} consists of the product of three terms. One given by the filter's (integral) Lipschitz constant, $C$, one given by the number of filters and layers in the GNN, $L F^{L-1}$, and one containing the eigenvector misalignment constant $(1+\delta\sqrt{N})$. The one related to the GNN architecture is just a consequence of perturbations propagating across different filters. The other two terms represent different fundamental facets of GNNs. The (integral) Lipschitz constant $C$ is a property of the filters which is up for choice during filter design, or, perhaps more likely, expected as an outcome of the training process. The term $(1+\delta\sqrt{N})$ is a property of the family of perturbations $\bbE$ that are admissible which is an inherent property of the type of perturbations we expect to see in a specific problem. It is important to remark that we can affect $C$ by designing or learning proper filters but we cannot affect $\delta$. The latter is not a property of the filter, but a property of the perturbation $\bbE$. The important point is that, regardless of $\delta$, judicious choice of filter coefficients $\bbh$, affects the (integral) Lipschitz constant $C$ and allows control of the stability of the graph filters that define a GNN.

The value of $\varepsilon$ in model (i) of Prop.~\ref{prop:Delta} represents the absolute perturbation distance [cf. \eqref{eqn_def_operator_distance_error_matrix}] between shifts $\bbS$ and $\hbS$ and as such, is independent of the actual particularities of the graph under study (a fixed value of $\varepsilon$ would mean a different perturbation level for graphs that have very different edge weights). Likewise, the value of $C$ is given by the Lipschitz constant of the filters. The higher the value of $C$, the more selective the filters can be (the more narrow they can be), but the more unstable the GNNs become. Finally, the value of $\delta$ accounts for the eigenvector misalignment between the absolute error matrix $\bbE$ and the shift $\bbS$, which indicates the impact on the spectrum basis by the perturbation, and affects the stability bound by a value dependent on the number of nodes (the larger the graph, the more a change in the spectrum basis affects stability).

%
\begin{figure}[t]
    \centering

\def \thisplotscale {2.9}
\def \unit {\thisplotscale cm}

\def \frequencyresponse 
     {   0.8*exp(-(1*(x-1.2))^2) 
       + 0.7*exp(-(0.7*(x-4))^2) 
       + 0.8*exp(-(1.4*(x-6))^2) 
       + 0.1}

\hspace{-2.9mm}
\begin{tikzpicture}[x = 1*\unit, y=1*\unit]

\def \factorx {2.4/8}
\def \deltax  {0.5*\factorx}
\def \shadeshift  {0.05}

\path [fill=black!20, opacity = 0.5] 
              (\deltax - 0.001*\factorx - \shadeshift, 0.00) rectangle 
              (\deltax + 0.030*\factorx + \shadeshift, 1.00);
\path [fill=black!20, opacity = 0.5] 
              (\deltax + 3.393*\factorx - \shadeshift, 0.00) rectangle 
              (\deltax + 3.770*\factorx + \shadeshift, 1.00);
\path [fill=black!20, opacity = 0.5] 
              (\deltax + 6.048*\factorx - \shadeshift, 0.00) rectangle 
              (\deltax + 6.720*\factorx + \shadeshift, 1.00);

\begin{axis}[scale only axis,
             width  = 2.4*\unit,
             height = 1*\unit,
             xmin = -0.5, xmax=7.5,
             xtick = {0.03, -0.01, 3.77, 3.393, 6.72, 6.048},
             xticklabels = {\red{$\qquad\hlam_1\phantom{\lam}$},
                            \blue{$\lam_1\ \ $}, 
                            \red{$\quad\hlam_i\phantom{\lam}$}, 
                            \blue{$\lam_i$},
                            \red{$\quad\hlam_{N}\phantom{\lam}$},
                            \blue{$\lam_N$}},
             ymin = -0, ymax = 1.15,
             ytick = {-1},
             typeset ticklabels with strut,
             enlarge x limits=false]

\addplot+[samples at = {0.03, 0.91, 1.57, 
                        2.63, 3.77, 4.51, 
                        5.60, 6.72}, 
          color = red!60, 
          ycomb, 
          mark=otimes*, 
          mark options={red!60}]
         {\frequencyresponse};

\addplot+[samples at = {-0.01, 0.819, 1.413, 
                        2.367, 3.393, 4.059, 
                        5.04, 6.048}, 
          color = blue!60, 
          ycomb, 
          mark=oplus*, 
          mark options={blue!60}]
         {\frequencyresponse};

\addplot[ domain=-0.5:7.5, 
          samples = 80, 
          color = black,
          line width = 1.2]
         {\frequencyresponse};

\end{axis}
\end{tikzpicture}


\def \thisplotscale {2.9}
\def \unit {\thisplotscale cm}

\def \frequencyresponse 
     { 0.9 - 0.7*exp(-(0.7*(x-1.6))^2) }

\hspace{-2.9mm}
\begin{tikzpicture}[x = 1*\unit, y=1*\unit]

\def \factorx {2.4/8}
\def \deltax  {0.5*\factorx}
\def \shadeshift  {0.05}

\path [fill=black!20, opacity = 0.5] 
              (\deltax - 0.001*\factorx - \shadeshift, 0.00) rectangle 
              (\deltax + 0.030*\factorx + \shadeshift, 1.00);
\path [fill=black!20, opacity = 0.5] 
              (\deltax + 3.393*\factorx - \shadeshift, 0.00) rectangle 
              (\deltax + 3.770*\factorx + \shadeshift, 1.00);
\path [fill=black!20, opacity = 0.5] 
              (\deltax + 6.048*\factorx - \shadeshift, 0.00) rectangle 
              (\deltax + 6.720*\factorx + \shadeshift, 1.00);

\begin{axis}[scale only axis,
             width  = 2.4*\unit,
             height = 1*\unit,
             xmin = -0.5, xmax=7.5,
             xtick = {0.03, -0.01, 3.77, 3.393, 6.72, 6.048},
             xticklabels = {\red{$\qquad\hlam_1\phantom{\lam}$},
                            \blue{$\lam_1\ \ $}, 
                            \red{$\quad\hlam_i\phantom{\lam}$}, 
                            \blue{$\lam_i$},
                            \red{$\quad\hlam_{N}\phantom{\lam}$},
                            \blue{$\lam_N$}},
             ymin = -0, ymax = 1.15,
             ytick = {-1},
             typeset ticklabels with strut,
             enlarge x limits=false]

\addplot+[samples at = {0.03, 0.91, 1.57, 
                        2.63, 3.77, 4.51, 
                        5.60, 6.72}, 
          color = red!60, 
          ycomb, 
          mark=otimes*, 
          mark options={red!60}]
         {\frequencyresponse};

\addplot+[samples at = {-0.01, 0.819, 1.413, 
                        2.367, 3.393, 4.059, 
                        5.04, 6.048}, 
          color = blue!60, 
          ycomb, 
          mark=oplus*, 
          mark options={blue!60}]
         {\frequencyresponse};

\addplot[ domain=-0.5:7.5, 
          samples = 80, 
          color = black,
          line width = 1.2]
         {\frequencyresponse};

\end{axis}
\end{tikzpicture}

    \caption{Stability of graph filters. We observe that, for small values of $\lambda$, the difference between $\lambda_{i}$ (in blue) and $\hlam_{i}$ (in red) is small, whereas for large $\lambda$ this becomes much larger. (top) When using a Lipschitz filter [cf. \eqref{eqn:lipschitzFilters}], we observe that for low frequencies, the response of the filter is very similar when instantiated on either $\lambda_{i}$ or $\hlam_{i}$; however, for large frequencies, the difference becomes much larger, and thus a small change in the eigenvalues, leads to a big change of the filter response. (bottom) In the case of integral Lipschitz filters [cf. \eqref{eqn:integralLipschitzFilters}], the effect on high frequencies is mitigated, by forcing the filter to be nearly constant at these frequencies, so that, when evaluated at eigenvalues that are far away, the filter response is still almost the same, guaranteeing stability.}
    \label{fig_graph_dilation}
\end{figure}
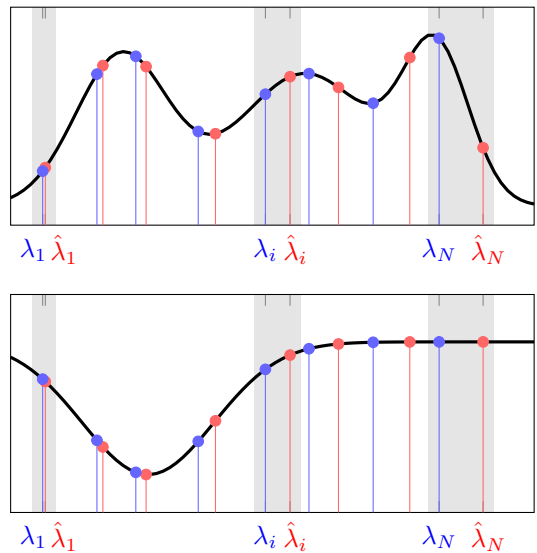

With respect to model (ii) of Prop.~\ref{prop:Delta} we observe that now $\varepsilon$ represents the relative distance between $\bbS$ and its perturbation $\hbS$ [cf.~\eqref{eqn_relative_perturbation_distance}], meaning that a fixed $\varepsilon$ represents the same level of perturbation for any possible reweighing of the difference $\alpha (\bbS - \hbS)$, $\alpha \in \reals$. The value of $C$, in this case, represents the integral Lipschitz constant of the filters (cf.~Def.~\ref{def:integralLipschitzFilters}). Integral Lipschitz filters, however, can be made arbitrarily selective near $\lambda \approx 0$, irrespective of the value of $C$, allowing for perfect discrimination of features around it, without affecting the overall stability. In integral Lipschitz filters, the value of $C$ determines the smallest eigenvalue for which the filter response becomes (approximately) flat, and hence loses discriminative power. A high value of $C$ would allow for greater selectivity in higher-eigenvalue frequencies, but at the expense of stability. With respect to $\delta$, the same analysis as for model (i) holds, except that in this case, the eigenvectors $\bbU$ correspond to the \emph{relative} error matrix $\bbE$. We also note that the presence of $\delta$ causes the bound to be quite loose for large values of $N$.

To overcome the degradation of the stability with the size of the graph, we propose model (iii) of Prop.~\ref{prop:Delta}. In this model, where $\varepsilon$ measures the relative perturbation distance and $C$ the integral Lipschitz constant, the family of admissible perturbations has been restricted to those that satisfy the structural constraint \eqref{eqn:structuralConstraint}. Admissible perturbations are now those that are either dilations or contractions of the edge weights of $\bbS$. Dilations and contractions can be different for different nodes but cannot be a mix of dilation and contraction in different parts of the graph. We remark that if the structural constraint is satisfied, then the stability can be controlled by determining the integral Lipschitz constant of the filters, for any graph. However, for some specific families of graphs, where we have information on how the eigenvectors change with a given perturbation size, we can improve on the result by relaxing the structural constraint. This is the case of \cite{Mallat12-Scattering}, where extraneous geometric information (Euclidean space) is leveraged to quantify the impact of the perturbation (diffeomorphism) on the spectrum basis.





%
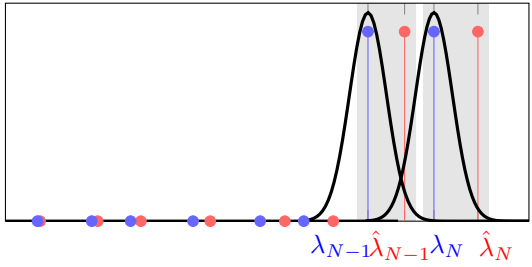
\begin{figure}[t]
    \centering

\def \thisplotscale {2.9}
\def \unit {\thisplotscale cm}

\def \frequencyresponse 
     {1.1*exp(-(2.5*(x-6.048))^2}
\def \frequencyresponsetwo 
     {1.1*exp(-(2.5*(x-5.04))^2}

\hspace{-2.9mm}
\begin{tikzpicture}[x = 1*\unit, y=1*\unit]

\def \factorx {2.4/8}
\def \deltax  {0.5*\factorx}
\def \shadeshift  {0.05}

\path [fill=black!20, opacity = 0.5] 
              (\deltax + 6.048*\factorx - \shadeshift, 0.00) rectangle 
              (\deltax + 6.720*\factorx + \shadeshift, 1.00);

\path [fill=black!20, opacity = 0.5] 
              (\deltax + 5.04*\factorx - \shadeshift, 0.00) rectangle 
              (\deltax + 5.60*\factorx + \shadeshift, 1.00);

\begin{axis}[scale only axis,
             width  = 2.4*\unit,
             height = 1*\unit,
             xmin = -0.5, xmax=7.5,
             xtick = {5.60, 5.04, 6.72, 6.048},
             xticklabels = {\red{$\qquad\hlam_{N-1}\phantom{\lam_{N-1}}$},
                            \blue{$\lam_{N-1}\qquad  $}, 
                            \red{$\qquad\hlam_{N}\phantom{\lam}$},
                            \blue{$\quad\lam_N$}},
             ymin = -0, ymax = 1.15,
             ytick = {-1},
             typeset ticklabels with strut,
             enlarge x limits=false]

\addplot+[samples at = {0.03, 0.91, 1.57, 
                        2.63, 3.77, 4.51}, 
          color = red!60, 
          ycomb, 
          mark=otimes*, 
          mark options={red!60}]
         {0};

\addplot+[samples at = {6.72, 5.60}, 
          color = red!60, 
          ycomb, 
          mark=otimes*, 
          mark options={red!60}]
         {1};

\addplot+[samples at = {-0.01, 0.819, 1.413, 
                        2.367, 3.393, 4.059}, 
          color = blue!60, 
          ycomb, 
          mark=oplus*, 
          mark options={blue!60}]
         {0};

\addplot+[samples at = {6.048, 5.04}, 
          color = blue!60, 
          ycomb, 
          mark=oplus*, 
          mark options={blue!60}]
         {1};

\addplot[ domain=-0.5:5.5, 
          samples = 2, 
          color = black,
          line width = 1.2]
         {0};

\addplot[ domain=5.0:7.5, 
          samples = 70, 
          color = black,
          line width = 1.2]
         {\frequencyresponse};

\addplot[ domain=4.0:7.0, 
          samples = 70, 
          color = black,
          line width = 1.2]
         {\frequencyresponsetwo};
\addplot[ domain=7.0:7.5, 
          samples = 2, 
          color = black,
          line width = 1.2]
         {0};

\end{axis}
\end{tikzpicture}

    \caption{High frequency feature extraction. We illustrate two sharp filters designed to successfully extract high frequency features located at $\lambda_{N-1}$ and $\lambda_{N}$. However, when the graph is slightly perturbed, which results in large changes in high frequency eigenvalues, the designed filters are no longer able to extract these features, now located at $\hlam_{N-1}$ and $\hlam_{N}$, since they have moved out of the narrow pass band of the filter.}
    \label{fig_high_frequency_feature}
\end{figure}

%
\section{Discussions}\label{sec_linear_filters_stability}

From the analysis of model (i) in Prop.~\ref{prop:Delta} we concluded that Lipschitz filters are stable under absolute perturbations, but the stability presents a trade-off with the selectivity of the filters (the more stable the GNN is, the less selective are the filters that compose it). Moreover, we commented that the absolute perturbation model presents certain limitations by not taking into account the underlying graph support.

Under a relative perturbation model, integral Lipschitz filters can be made arbitrarily selective near $\lambda \approx 0$ without sacrificing stability. Therefore, in order to discriminate among signals with frequency content in high values of $\lambda$ we need to spill the information into lower-eigenvalue frequencies, which is easily achieved by the mixing effect of the nonlinearities employed. The following discussion illustrates the intricacies of the stability results put forward in Thm.~\ref{thm:GNNStability} and Prop.~\ref{prop:Delta}.

Suppose that we have shift operators $\bbS$ and $\hbS$ where the latter is a simple scaling of the former by a factor $(1+\varepsilon)$
\begin{equation}\label{eqn:graphDilation}
   \hbS = (1+\varepsilon)\bbS .
\end{equation}
The graph dilation in \eqref{eqn:graphDilation} produces a graph in which all edges are scaled by a $(1+\varepsilon)$ factor. This is a perturbation model of the form in \eqref{eqn_def_relative_perturbation} with $\bbE=(\varepsilon/2)\bbI$. We consider that $\varepsilon \approx 0$ in which case the graph dilation produces a minimal modification of the graph. Note that, for such a perturbation, we have $\delta = 0$ in model (ii) and it also satisfies the structural constraint \eqref{eqn:structuralConstraint} of model (iii), so that both models are applicable here.

Suppose now that we are given a set of filter coefficients $\bbh$ and that we consider the filter $\bbH(\bbS)$ implemented on GSO $\bbS$ vis-\`a-vis the filter $\bbH(\hbS)$ implemented on another GSO $\hbS$ [cf. \eqref{eqn:graphFilter}-\eqref{eqn:graphFilter_perturbed}]. Given that the graph perturbation is inconsequential we would expect the filter differences to be inconsequential as well. Thm.~\ref{thm:filterStabilityRelative} states that if the filters are integral Lipschitz this is true but if they are simply Lipschitz this need not be true. To understand this we look at the differences between the spectra of $\bbS$ and $\hbS$.

%
\begin{figure}[t]
    \centering

\def \thisplotscale {2.9}
\def \unit {\thisplotscale cm}

\def \frequencyresponse 
     {   0.8*exp(-(1*(x-1.2))^2) 
       + 0.7*exp(-(0.7*(x-4))^2) 
       + 0.8*exp(-(1.4*(x-6))^2) 
       + 0.1}

\hspace{-2.9mm}
\begin{tikzpicture}[x = 1*\unit, y=1*\unit]

\def \factorx {2.4/8}
\def \deltax  {0.5*\factorx}
\def \shadeshift  {0.05}

\path [fill=black!20, opacity = 0.5] 
              (\deltax - 0.001*\factorx - \shadeshift, 0.00) rectangle 
              (\deltax + 0.030*\factorx + \shadeshift, 1.00);
\path [fill=black!20, opacity = 0.5] 
              (\deltax + 3.393*\factorx - \shadeshift, 0.00) rectangle 
              (\deltax + 3.770*\factorx + \shadeshift, 1.00);
\path [fill=black!20, opacity = 0.5] 
              (\deltax + 6.048*\factorx - \shadeshift, 0.00) rectangle 
              (\deltax + 6.720*\factorx + \shadeshift, 1.00);

\begin{axis}[scale only axis,
             width  = 2.4*\unit,
             height = 1*\unit,
             xmin = -0.5, xmax=7.5,
             xtick = {0.03, -0.01, 3.77, 3.393, 6.72, 6.048},
             xticklabels = {\red{$\qquad\hlam_1\phantom{\lam}$},
                            \blue{$\lam_1\ \ $}, 
                            \red{$\quad\hlam_i\phantom{\lam}$}, 
                            \blue{$\lam_i$},
                            \red{$\quad\hlam_{N}\phantom{\lam}$},
                            \blue{$\lam_N$}},
             ymin = -0, ymax = 1.15,
             ytick = {-1},
             typeset ticklabels with strut,
             enlarge x limits=false]

\addplot+[color = red!60, 
          ycomb, 
          mark=otimes*, 
          mark options={red!60}]
          coordinates { (0.03, 0.20)
                        (0.91, 0.10)
                        (1.57, 0.10)
                        (2.63, 0.07)
                        (3.77, 0.12)
                        (4.51, 0.12)
                        (5.60, 0.21)
                        (6.72, 0.70)};

\addplot+[samples at = {-0.01, 0.819, 1.413, 
                        2.367, 3.393, 4.059, 
                        5.04, 6.048}, 
          color = blue!60, 
          ycomb, 
          mark=oplus*, 
          mark options={blue!60}]
          coordinates { (-0.010, 0.20)
                        ( 0.819, 0.10)
                        ( 1.413, 0.10)
                        ( 2.367, 0.07)
                        ( 3.393, 0.12)
                        ( 4.059, 0.12)
                        ( 5.040, 0.21)
                        ( 6.048, 0.70)};

\end{axis}
\end{tikzpicture}
    \caption{Effect of pointwise nonlinearity. Let $\bbx = \bbv_{N}$ be the graph signal with a frequency response $\tbx$ given by $\tdx_{N}=1$ and $\tdx_{i} = 0$ for all $i=1,\ldots,N-1$. Signal $\bbx$ has a single nonzero value located at the highest frequency, making it impossible to be extracted with a stable linear filter. When applying a nonlinearity to this signal, we observe that nonzero frequency components arise throughout the spectrum, spilling the information contained in the highest frequency into lower frequencies. This facilitates the use of a bank of stable linear filters to successfully collect this information at lower frequencies.
    }
    \label{fig_relu_spectrum}
\end{figure}
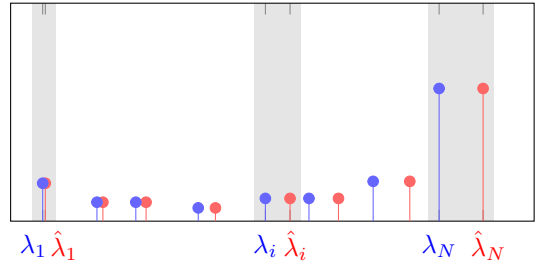

Given that $\bbS$ and $\hbS$ are related by a scaling, they share the same eigenvectors and the scaling is translated to the eigenvalues. Thus, if $\bbS = \bbV\bbLam\bbV^{\Hr}$ is the eigenvector decomposition of $\bbS$ [cf. \eqref{eqn:eigendecomposition}], the eigenvector decomposition of $\hbS$ is
\begin{equation}\label{eqn:graphDilationEigenvalues}
   \hbS = \bbV \big[(1+\varepsilon)\bbLam\big]\bbV^{\Hr}.
\end{equation}
As per \eqref{eqn:graphDilationEigenvalues}, the eigenvalues of $\hbS$ are the eigenvalues of $\bbS$ scaled by a factor $(1+\varepsilon)$. Thus, the effect of the dilation in \eqref{eqn:graphDilation} on a filter with frequency response $h(\lam)$ is that instead of instantiating the response at eigenvalues $\lam_i$ we instantiate it at eigenvalues $(1+\varepsilon)\lam_i$. Consequently the response values that we expect to be $h(\lam_i)$ if the filter is run on $\bbS$ actually turn out to be $h((1+\varepsilon)\lam_i)$ if the filter is run on $\hbS$. This observation is the core argument in the proof of Thm.~\ref{thm:filterStabilityRelative} and motivates the important observations that we discuss next.

%
\begin{figure*}[t]
    \centering
    \begin{subfigure}{0.25\textwidth}
        \centering
        \includegraphics[width=0.9\textwidth]{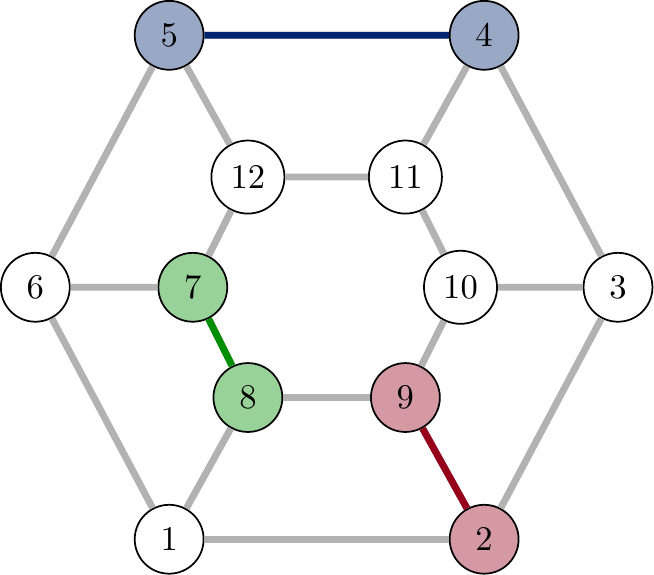}
        \caption{Graph $\ccalG$ and signal $\bbx$}
        \label{original}
    \end{subfigure}
    \hfill
    \begin{subfigure}{0.25\textwidth}
        \centering
        \includegraphics[width=0.9\textwidth]{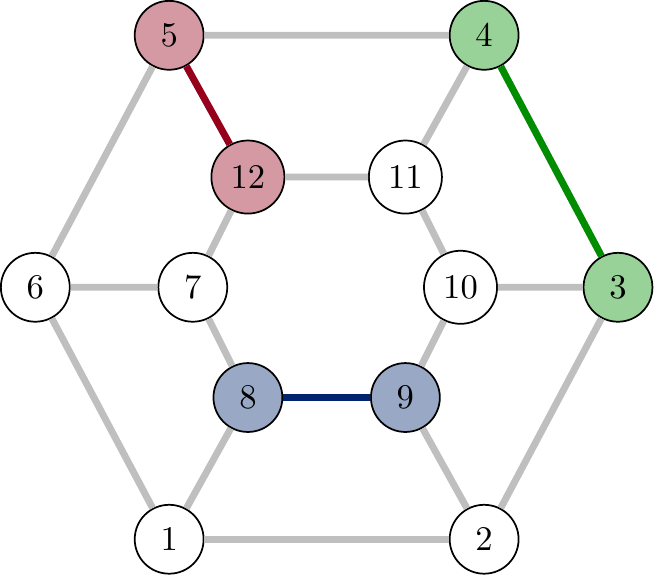} 
        \caption{Graph $\ccalG$ and permuted signal $\bbP^{\Tr}\bbx$}
        \label{symmetries}
    \end{subfigure}
    \hfill
    \begin{subfigure}{0.25\textwidth}
        \centering
        \includegraphics[width=0.9\textwidth]{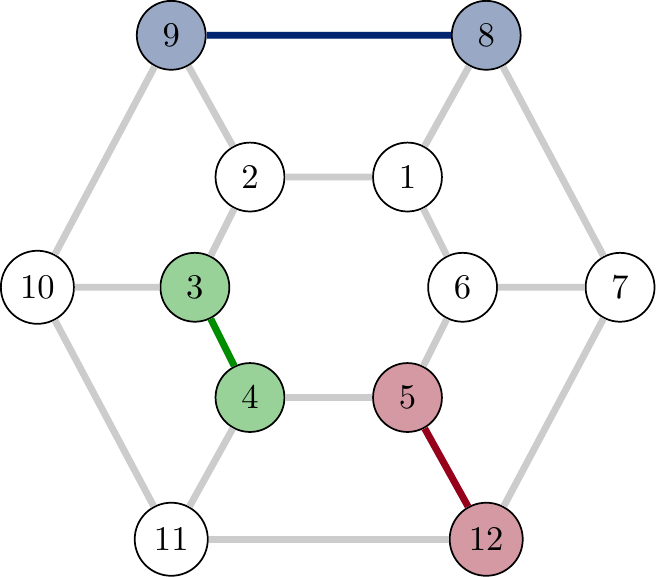} 
        \caption{Permuted graph $\hat{\ccalG}$ and permuted signal $\bbP^{\Tr}\bbx$}
        \label{reordering}
    \end{subfigure}
    \caption{Permutation equivariance of graph neural networks (GNNs). The output of a GNN is equivariant to graph permutations (Proposition \ref{prop:filterPermutationEquivariance}). This not only means independence from labeling but it also shows that GNNs exploit internal signal symmetries. The signals on \subref{original} and \subref{symmetries} are different signals on the same graph but they are permutations of each other -- interchange inner and outer hexagons and rotate $180^{\circ}$ [c.f. \subref{reordering}]. A GNN would learn how to classify the signal in \subref{symmetries} from seeing examples of the signal in \subref{original}. Integers represent the labeling, while colors represent graph signal values.}
    \label{fig:permutationEquivariance}
\end{figure*}

%
\medskip \noindent {\bf Graph perturbations and filter perturbations.} Fig.~\ref{fig_graph_dilation} illustrates the effect of the dilation in \eqref{eqn:graphDilation} on a Lipschitz (top) and integral Lipschitz filter (bottom). The difference in the positions between eigenvalues is given by $\hlam_i-\lam_i = \varepsilon\lam_i$, and as such, depends on the value of the specific eigenvalue $\lambda_{i}$. For low-eigenvalue frequencies $\lam_i$ the dilation results in a small perturbation of the eigenvalues. If the change in eigenvalues is small the change in the filter's response from $h(\lam_i)$ to $h(\hlam_i)$ is small for both filters. For large eigenvalues the difference $\hlam_i-\lam_i = \varepsilon\lam_i$ grows large. For Lipschitz filters a large difference in the arguments may translate into a large difference in the instantiated values of frequency responses $h(\hlam_i)$ and $h(\lam_i)$,
\begin{equation}\label{eqn_discussions_unstable}
   |h(\hlam_i) - h(\lam_i)| 
       \ \approx \ |\hlam_i-\lam_i| 
       \ =       \ \varepsilon \lam_i.
\end{equation}
This explains the filter's instability. A small graph perturbation may result in a large filter perturbation at high-eigenvalue frequencies. For integral Lipschitz filters, on the other hand, changes in the frequency response must taper off as $\lam$ grows. Thus, even though there may be a large variation in the eigenvalues the instances of the frequency responses are close
\begin{equation}\label{eqn_discussions_stable}
   |h(\hlam_i) - h(\lam_i)| 
       \ \approx \ \frac{|\hlam_i-\lam_i|}{|\hlam_i+\lam_{i}|/2} 
       \ =       \ \frac{2\varepsilon}{2-\varepsilon} \approx \varepsilon.
\end{equation}
This explains the filter's stability. No matter how large the eigenvalues are, a small perturbation of the graph results in a small perturbation of the graph filter. Thm.~\ref{thm:filterStabilityRelative} shows that this is true for arbitrary relative perturbations.

%
\medskip \noindent {\bf Graph perturbations and feature identification.} There is an obvious cost we pay for the stability of integral Lipschitz filters: they are unable to discriminate high-eigenvalue frequencies. The graph dilation example shows that this is not a limitation of the analysis. It is impossible to have a filter that is both stable and able to isolate high-eigenvalue features because small graph perturbations can result in large eigenvalue perturbations. This is a major drawback of linear graph filters in the extraction of features from graph signals. To illustrate this drawback suppose we have graph signals $\bbx_{1} = \bbv_N$ and $\bbx_2 = \bbv_{N-1}$ and we want to design graph filters to discriminate between the two. The graph frequency domain representation of these two signals on the graph $\bbS$ are shown in Fig.~\ref{fig_high_frequency_feature}. For us to discriminate between $\bbx_{1} = \bbv_N$ and $\bbx_2 = \bbv_{N-1}$ we need filters centered at frequencies $\lam_{N}$ and $\lam_{N-1}$. These filters must have sharp transitions so that the filter isolating $\bbx_1 = \bbv_{N}$ does not let the signal $\bbx_{2} = \bbv_{N-1}$ pass and, conversely, the filter isolating $\bbx_2 = \bbv_{N-1}$ does not let the signal $\bbx_{1} = \bbv_N$. Yet, if these filters are sharp on large eigenvalues, they will be unstable. More specifically, let $\hlam_{N} = (1+\varepsilon)\lambda_{N}$ be the eigenvalue associated to $\bbx_{1} = \bbv_{N}$ in the perturbed graph, and $\hlam_{N-1} = (1+\varepsilon)\lambda_{N-1}$ be the one associated to $\bbx_{2} = \bbv_{N-1}$. Now, since the filters were designed to be sharp around $\lambda_{N}$ and $\lambda_{N-1}$, but the perturbed eigenvalues $\hlam_{N}$ and $\hlam_{N-1}$ are far from these (at points where the filter response is virtually zero) the filter fails to adequately recover $\bbx_{1}$ and $\bbx_{2}$ in the perturbed graph. See Fig.~\ref{fig_high_frequency_feature} for an illustration of the instability effect at large eigenvalues.

%
\medskip \noindent {\bf Pointwise nonlinearities.} So far, we have observed that stable filters require a flat response on high-eigenvalue frequencies, but that this inevitably prevents them from discriminating between features located at these frequencies. This illustrates an inherent, insurmountable limitation of linear information processing schemes. Neural networks introduce pointwise nonlinearities to the processing pipeline, as a computationally straightforward means of discriminating information located at large eigenvalues. The basic effect of these nonlinearities is to cause a spillage of information throughout the frequency band, see Fig.~\ref{fig_relu_spectrum}. This spillage of information into smaller eigenvalues allows for a stable filter to accurately discriminate between them, since information at these frequencies does not get severely affected by perturbations. However, since the energy in smaller eigenvalues is usually less than the energy still found at larger ones, and since it is also spread through a wide band of frequencies, the use of a bank of linear filters becomes a sensitive idea to better capture this spillage. Therefore, the use of banks of linear filters in combination with pointwise nonlinearities allows for information processing architectures that are able to capture high-eigenvalue frequency content in a stable fashion.

\medskip \noindent {\bf Permutation Equivariance.} The permutation equivariance stated in Prop.~\ref{prop:GNNPermutationEquivariance} shows that the features that are learned by a GNN are independent of the labeling of the graph. But permutation equivariance is also important because it means that GNNs exploit internal signal symmetries as we illustrate in Fig.~\ref{fig:permutationEquivariance}. The graphs in Figs.~\ref{original} and \ref{symmetries} are the same, as indicated by the integer labels. The signals in Figs.~\ref{original} and \ref{symmetries} are different, as indicated by different colors. However, it is possible to permute the graph onto itself to make the signals match -- rotate $180^{\circ}$ degrees and pull it inside out (Fig.~\ref{reordering}). It then follows from Prop.~\ref{prop:GNNPermutationEquivariance} that the output of a GNN applied to the signal on the left (\ref{original}) is a corresponding permutation of the output of the same GNN applied to the signal on the right (\ref{symmetries}). This is beneficial because we can learn to process the signal on (\ref{original}) from seeing examples of the signal on (\ref{symmetries}). We note that, while most graphs do not exhibit perfect symmetries, they might have (sub)structures that are close to permutations. Therefore, permutation equivariance shows the ability of GNNs to exploit these similarities.

%



\begin{figure*}
    \centering
    \begin{subfigure}{.33\textwidth}
        \centering
        \includegraphics[width=0.95\textwidth]{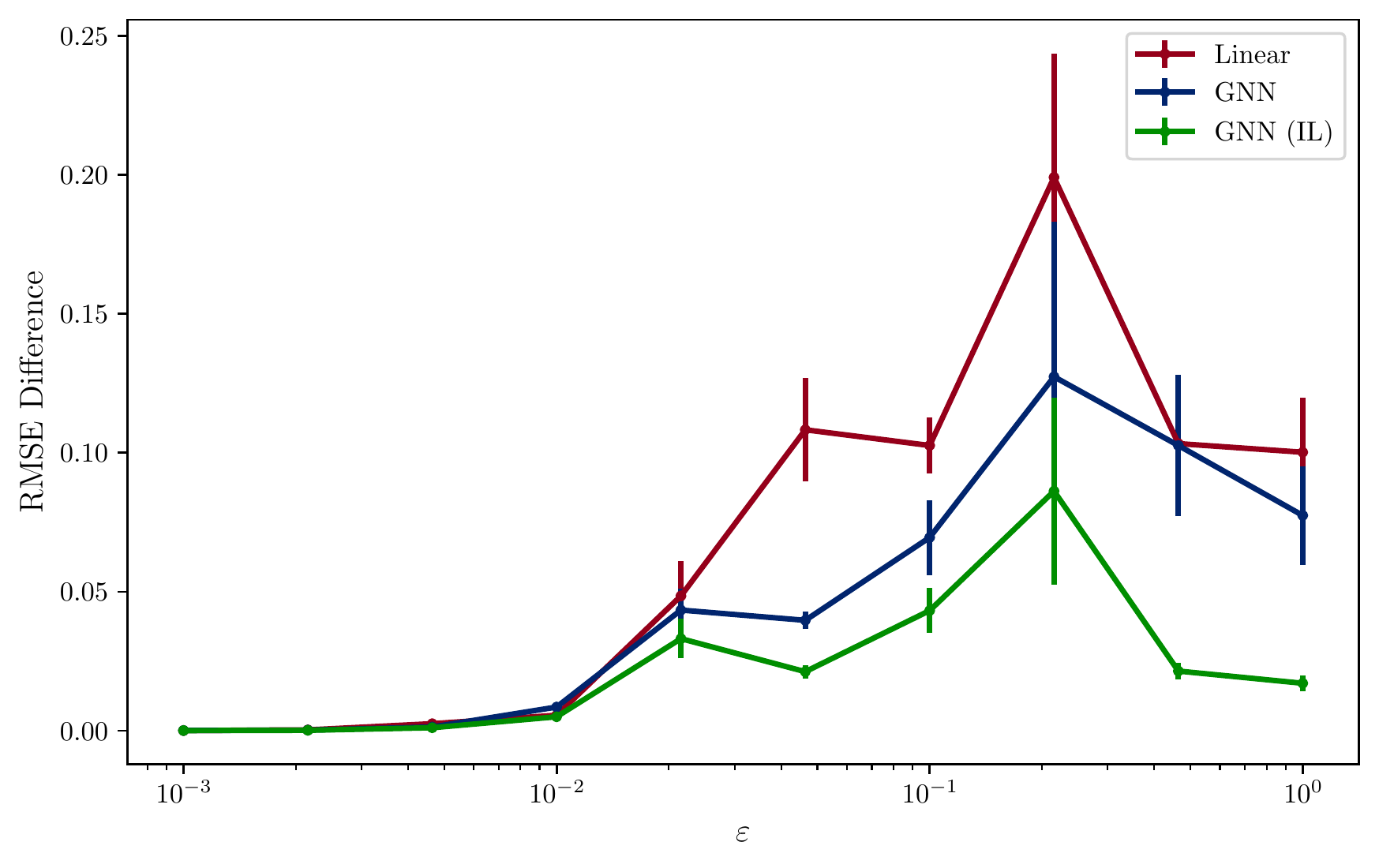}
        \caption{}
        \label{subfig:costDiff}
    \end{subfigure}%
    \hfill
    \begin{subfigure}{.33\textwidth}
        \centering
        \includegraphics[width=0.95\textwidth]{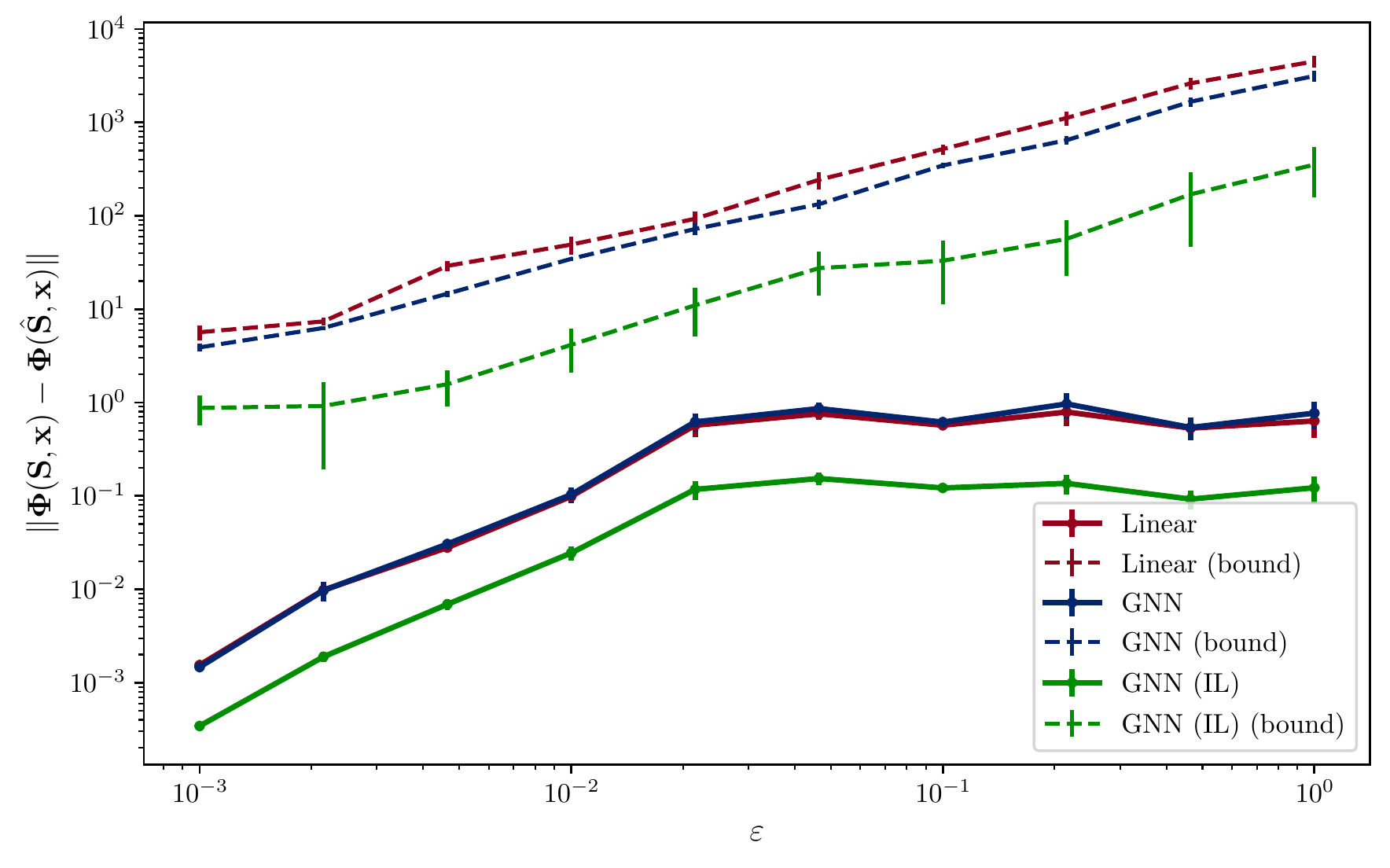}
        \caption{}
        \label{subfig:GNNdiff}
    \end{subfigure}%
    \hfill
    \begin{subfigure}{.33\textwidth}
        \centering
        \includegraphics[width=0.95\textwidth]{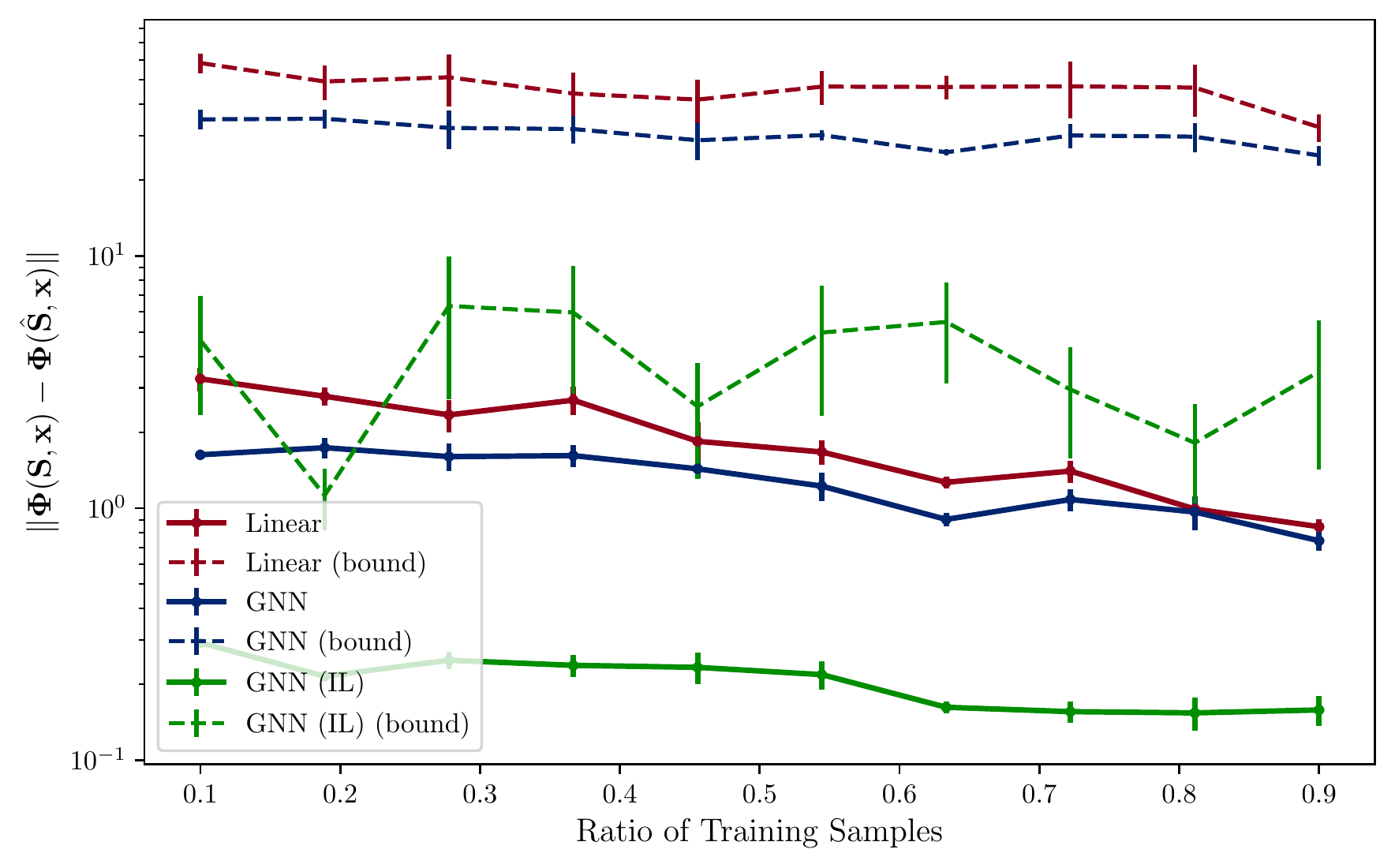}
        \caption{}
        \label{subfig:GNNdiffRatio}
    \end{subfigure}%
    \caption{Movie Recommendation problem. The baseline evaluation performance (RMSE) is $0.84 (\pm 0.15)$ for the linear architecture (Linear), $0.84 (\pm 0.16)$ for the GNN that learned from the space of all graph filters (GNN), and $0.83 (\pm 0.14)$ for the GNN that learned integral Lipschitz filters (GNN (IL)). Synthetic Experiment:  \subref{subfig:costDiff} Change in evaluation measure (RMSE) due to synthetic changes in the underlying graph support, where we observe that the GNN (IL) is more stable than the GNN and the Linear architectures; \subref{subfig:GNNdiff} Change in the output of the GNN due to synthetic changes in the underlying graph support, we observe that the GNN (IL) is consistently more stable, and that the bounds are not tight. Estimation error experiment. \subref{subfig:GNNdiffRatio} Changes in the output of the GNN due to changes in the estimation of the graph support, stemming from using different sizes of training set; again, we observe that GNN (IL) is consistently more stable than the other two architectures and that the bounds are not tight.}
    \label{fig:results}
\end{figure*}


\section{Numerical Experiments} \label{sec:sims}

To illustrate the GNN stability results in a practical setting, we consider the problem of movie recommendation systems \cite{Huang18-RatingGSP}. We describe the problem with a graph where each node is a movie, and each edge weight represents the rating similarity for each pair of movies. The information from each user is modeled as a graph signal, whereby the value assigned to each node represents the rating the user has given to each movie watched. The objective is to infer the rating a user would give to a specific, unseen movie, based on the ratings given to the other movies, and the rating similarities present in the graph structure. We carry out two experiments, the first one showing the stability under a synthetic, controlled relative perturbation; and the second one considering a more realistic perturbation arising from the error in estimating the underlying graph structure. The main objectives of this section are to illustrate how \emph{loose} the bound actually is and also that GNNs using integral Lipschitz filters are more stable (which amounts to showing the effect of $C$).

\myparagraph{Dataset.} We use the MovieLens-100k dataset \cite{Harper16-MovieLens}. This dataset contains $100,000$ ratings given by $943$ users to some of the $1,582$ movies available. Ratings go from $1$ indicating a disliked movie, to $5$ indicating a liked movie. The movie \emph{Star Wars} is selected as the target movie to estimate the rating, since it is the movie with the largest number of available ratings.

\myparagraph{Graph signal processing formulation.} We use $90\%$ of the ratings as part of the training set in order to build the graph support. Each node in the graph is a movie, amounting to $1,582$ nodes. The edge weights are obtained by estimating the Pearson correlation coefficient between each pair of movies as in \cite[eq. (6)]{Huang18-RatingGSP}, based on the ratings contained in the training set only. A $10$ nearest-neighbor graph is built from these edge weights. Once the graph is built, we consider the users that have rated the target movie, which amounts to $583$ users. Each of these users is considered as a graph signal, where each node value is the rating given to that movie. Movies not rated by each user are assigned a $0$. The rating given to the target movie is extracted as a label, and zeroed out in the graph signal. This dataset of $583$ graph signals and the corresponding labels is split into $90\%$ for the training set and $10\%$ for the testing set, with the training set further split into $10\%$ for validation, and the rest for training.

\myparagraph{Architectures.} We consider two GNN architectures as mapping parametrizations between the input graph signal (the ratings given to some of the movies) and the label (the rating given to the target movie). The architectures have a single-layer GNN \eqref{eqn_gnn_filter}-\eqref{eqn_gnn_nonlinearity} with $F_{0}=1$ input feature (the rating value) and $F_{1} = 64$ output features, and $5$ filter taps. The nonlinearity used is a ReLU. One of the architectures, labeled as `GNN', learns from the space of all graph filters, while another one, labeled as `GNN (IL)', learns only integral Lipschitz filters. We compare these two architectures with a learned linear graph filter with $64$ output features and $5$ filter taps. All architectures have a local, linear readout layer mapping the $64$ features at the target node to a single scalar that estimates the rating (i.e. a learnable $64 \times 1$ matrix which is equivalent to a second layer with $F_{2}=1$ and $1$ filter tap).

\myparagraph{Training and evaluation.} We train all the architectures by minimizing a smooth $L1$ loss between the estimated rating at the output of the readout layer and the extracted label. We use an ADAM optimizer with learning rate $0.005$ and forgetting factors $0.9$ and $0.999$. We train for $40$ epochs with batches of size $5$. The evaluation performance is the root mean squared error (RMSE) as is standard in the movie recommendation problem \cite{Huang18-RatingGSP}. In all cases, we run $5$ random dataset partitions, and report the average performance across these realizations, as well as the standard deviation.

\myparagraph{Experiments.} We run two experiments. For the first experiment, we consider synthetic relative perturbations, and analyze how the output of the GNN and the evaluation performance change under controlled perturbations of the graph support at test time. For the second experiment, we consider a real world perturbation stemming from different construction of the graph support. That is, note that the graph support is build out of the training set, so changing the training set would lead to different graph support, each reflecting a different estimation of the Pearson coefficient. In particular, smaller training sets would lead to larger estimation error, and thus, by changing the ratio of the training/testing set split, we can adjust the estimation error. As a matter of fact, we note in practice that using smaller sets to build the graph leads to a larger relative perturbation. In the second experiment we analyze the stability of the GNNs for different values of the training set ratio, analyzing the usefulness of stable architectures in a real world setting with inference estimation errors \cite{Segarra17-Template}.

\myparagraph{Synthetic experiment.} We generate a random perturbation matrix $\bbE$ such that $\|\bbE\| \leq \varepsilon$ and \eqref{eqn:structuralConstraint} are satisfied. We do so by generating a diagonal matrix $\bbE$, with diagonal elements drawn uniformly at random from the interval $[(1-\varepsilon) \varepsilon, \varepsilon])$. Note that such a perturbation is not a simple edge dilation like the one discussed in Sec.~\ref{sec_linear_filters_stability}. We then build the perturbed matrix $\hbS=\bbS + \bbE \bbS + \bbS \bbE$. Note that we do not recompute the $10$ nearest neighbors. We control the perturbation size $\varepsilon$ and simulate it from $10^{-3}$ to $1$. The change in the evaluation measure (the change in the RMSE) can be found in Fig.~\ref{subfig:costDiff}. We observe that, for small $\varepsilon$, there is virtually no change in the output between all three architectures, but as $\varepsilon$ grows, the change in the GNN with integral Lipschitz filters is smaller than the change in both the linear and the GNN architectures. This evidences that the GNN with integral Lipschitz filters is indeed more stable. In Fig.~\ref{subfig:GNNdiff} we show specifically the change in the output of the GNN layer caused by the perturbation. We also show the the bounds. First, we note that the GNN that learned integral Lipschitz filters is consistently more stable. We also note that the bounds are not tight bounds, essentially because the bound on the eigenvectors is valid for all graphs and thus is not tight.

\myparagraph{Estimation error experiment.} In this last experiment, we consider a more realistic perturbation. We consider architectures trained on a graph based on Pearson correlations estimated from a $90\%$ split of the training set. Then, at test time, we consider architectures running on graphs with Pearson correlations estimated from smaller training sets, ranging from $10\%$ to $90\%$. Since the number of training sets are smaller, then the estimation error of the graph is larger. This is a scenario that arises when the underlying graph support is not known and needs to be estimated (so we can consider $\bbS$ to be the true support, and $\hbS$ to be the estimation) \cite{Segarra17-Template}. The change at the output of the GNN layer is shown in Fig.~\ref{subfig:GNNdiffRatio}. We see that the GNN with integral Lipschitz filters is approximately one order of magnitude more stable than the GNN trained with arbitrary graph filters. This one, in turn, is slightly more stable than the linear architecture. Likewise, we show the bounds and see that they are not tight.







\section{Conclusions} \label{sec:conclusions}

We focused on the impact that changes in the underlying topology have on the output of a GNN. First, we studied changes brought by permutations. We proved that GNNs are permutation equivariant, and that this implies that they effectively exploit the topological symmetries present in the underlying graph. Then, we discussed the absolute perturbation model existing in the literature, and proved that GNNs composed of Lipschitz filters are stable. However, not only the absolute perturbation model ignores the particularities of the underlying graph, but also the stability comes at the expense of the discriminative power of the filters (i.e. the more stable, the less discriminative). We thus proposed a relative perturbation model and proved that filters used in GNNs need be \emph{integral} Lipschitz for the resulting architecture to be stable. Integral Lipschitz filters can be made arbitrarily selective around low-eigenvalue frequencies, but need to have a flat response in high-eigenvalue frequencies, precluding accurate discrimination of information located in this band. We show that the frequency mixing effect of nonlinearities succeeds in spreading the information throughout the frequency spectrum, and thus allowing for accurate discrimination of information located at all frequencies. In essence, superior performance of GNNs can be explained by the fact that they are both stable and discriminative architectures, whereas linear graph filters can only satisfy one of these properties. We illustrated the discriminability and stability properties of both GNNs and graph filters in a movie recommendation problem. It was observed in the experiments that the bounds are not tight, and thus they can be improved. One of the reasons for this lack of tightness is that the bound in Theorem~\ref{thm:GNNStability} holds for all possible perturbations, resulting in a rather large value of the vector misalignment constant. This bound can certainly be improved if more specific perturbation models are studied, giving raise to particular bounds for the constant. This is envisioned as a future area of research, where the bounds provided herein are improved for specific applications on specific graphs. Likewise, Theorem~\ref{thm:GNNStability} holds for perturbations that have the same number of nodes as the original graph. Extending this result to graphs of different size is an active area of research.



\appendices 




\section{Permutation Equivariance of Graph Filters}\label{sec_apx_A}

\begin{proof}[Proof of Prop.~\ref{prop:filterPermutationEquivariance}]
A permutation matrix $\bbP \in \ccalP$ is an orthogonal matrix, $\bbP^{\Tr} \bbP = \bbP \bbP^{\Tr} = \bbI$, from where it follows that powers $\hbS^k$ of a permuted shift operator are permutations of the respective shift operator powers $\bbS^k$
\begin{equation}\label{eqn_prop_equivariance_pf_10}
    \hbS^{k} = (\bbP^{\Tr} \bbS \bbP)^k
             = \bbP^{\Tr} \bbS^{k} \bbP.
\end{equation}
Substituting this fact in the definition of the permuted graph filter $\bbH(\hbS)$ in \eqref{eqn:graphFilter_perturbed} yields
\begin{equation}\label{eqn_prop_equivariance_pf_20}
    \bbH(\hbS) 
        = \sum_{k} h_{k} \left( \bbP^{\Tr} \bbS^{k} \bbP \right) 
        =  \bbP^{\Tr}\bigg( \sum_{k} h_{k}  \bbS^{k}  \bigg) \bbP.
\end{equation}
In the last equality the sum is the filter $\bbH(\bbS) = \sum_{k} h_{k}  \bbS^{k}$ as defined in \eqref{eqn:graphFilter}. We can then write $\bbH(\hbS) = \bbP^{\Tr} \bbH(\bbS) \bbP$ and use this fact to express application of the permuted filter $\bbH(\hbS)$ to the permuted signal $\hbx = \bbP^{\Tr} \bbx$ as
\begin{equation}\label{eqn_prop_equivariance_pf_30}
    \hbz = \bbH(\hbS) \hbx
        = \bbP^{\Tr} \bbH(\bbS) \bbP \bbP^{\Tr} \bbx
\end{equation}
Since $\bbP$ is orthogonal we have that $\bbP \bbP^{\Tr}=\bbI$. Substituting this into the right hand side of \eqref{eqn_prop_equivariance_pf_30}, the result in \eqref{eqn_prop_filter_permutation_equivariance} follows.
\end{proof}


\section{Stability under Absolute Perturbations}\label{sec_apx_B}

\begin{lemma} \label{l:Evi}
    Let $\bbS = \bbV \bbLambda \bbV^{\Hr}$ and $\bbE = \bbU \bbM \bbU^{\Hr}$ such that $\|\bbE\| \leq \varepsilon$. For any eigenvector $\bbv_{i}$ of $\bbS$ it holds that
    \begin{equation}
    \bbE \bbv_{i} = m_{i} \bbv_{i}+ \bbE_{U} \bbv_{i}
    \end{equation}
    with $\|\bbE_{U}\| \leq \varepsilon \delta$, where $\delta = (\| \bbU - \bbV\|^{2} + 1)^{2} - 1$.
\end{lemma}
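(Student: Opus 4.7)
The plan is to exploit the fact that $\bbE$ and $\bbS$ share the symbol $\bbM$ of diagonal eigenvalues but may have different eigenbases, and to measure the discrepancy between the two bases through $\bbU-\bbV$. Concretely, I would rewrite $\bbE = \bbU\bbM\bbU^{\Hr}$ as
\begin{equation*}
\bbE \;=\; \bbV\bbM\bbV^{\Hr} \;+\; \bigl(\bbU\bbM\bbU^{\Hr} - \bbV\bbM\bbV^{\Hr}\bigr)
\end{equation*}
and \emph{define} $\bbE_{U} := \bbU\bbM\bbU^{\Hr} - \bbV\bbM\bbV^{\Hr}$. The first summand acts diagonally in the eigenbasis of $\bbS$: since the columns of $\bbV$ are orthonormal one has $\bbV^{\Hr}\bbv_{i} = \bbe_{i}$, and because $\bbM$ is diagonal, $\bbM\bbe_{i} = m_{i}\bbe_{i}$, so that $\bbV\bbM\bbV^{\Hr}\bbv_{i} = m_{i}\bbv_{i}$. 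This immediately yields the desired decomposition $\bbE\bbv_{i} = m_{i}\bbv_{i} + \bbE_{U}\bbv_{i}$.

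The bulk of the work is then to bound $\|\bbE_{U}\|$. Setting $\bbD := \bbU - \bbV$ and substituting $\bbU = \bbV + \bbD$ into $\bbU\bbM\bbU^{\Hr}$ produces the three cross terms
\begin{equation*}
\bbE_{U} \;=\; \bbV\bbM\bbD^{\Hr} \;+\; \bbD\bbM\bbV^{\Hr} \;+\; \bbD\bbM\bbD^{\Hr}.
\end{equation*}
By the triangle inequality and submultiplicativity of the spectral norm, together with $\|\bbV\|=1$ (orthonormal basis), $\|\bbM\| = \|\bbE\| \le \varepsilon$, and $\|\bbD\| = \|\bbU-\bbV\|$, the right-hand side is bounded by $\varepsilon\bigl(2\|\bbD\| + \|\bbD\|^{2}\bigr) = \varepsilon\bigl((\|\bbD\|+1)^{2}-1\bigr) = \varepsilon\delta$, which is exactly the claim.

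There isn't really a hard step here: the argument is a one-line eigenbasis identity plus a mechanical expansion. The only subtlety worth flagging is that the statement of the lemma writes $\delta = (\|\bbU-\bbV\|^{2}+1)^{2}-1$, which appears to be a typographical slip — the derivation above (and the identical constant appearing in Theorems \ref{thm:filterStabilityAbsolute} and \ref{thm:filterStabilityRelative}) naturally produces $\delta = (\|\bbU-\bbV\|+1)^{2}-1$, and this is the form I would use.
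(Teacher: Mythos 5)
Your proof is correct and follows essentially the same route as the paper's: the identical splitting $\bbE = \bbV\bbM\bbV^{\Hr} + \bbE_{U}$ with the same three cross terms in $\bbE_{U}$, followed by the same triangle-inequality and submultiplicativity bound using $\|\bbM\| \leq \varepsilon$. Your flag of the typographical slip in the stated constant is also right: the derivation yields $\delta = (\|\bbU-\bbV\|+1)^{2}-1$, which is the form used in Theorems~\ref{thm:filterStabilityAbsolute} and~\ref{thm:filterStabilityRelative}.
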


\begin{proof}
    Start by writing the error matrix $\bbE$ as
    \begin{align}
    \bbE & = \bbE_{V} + \bbE _{U} \\
    \bbE_{V} & = \bbV \bbM \bbV^{\Hr} \\
    \bbE_{U} 
    & = \left( \bbU - \bbV \right) \bbM \left( \bbU - \bbV \right)^{\Hr} \\
    & \qquad 
    + \bbV \bbM \left( \bbU - \bbV \right)^{\Hr}
    + \left( \bbU - \bbV \right) \bbM \bbV^{\Hr}. \nonumber
    \end{align}
    We see that $\bbE_{V} \bbv_{i} = m_{i} \bbv_{i}$ since $\bbv_{i}$ is an eigenvector of $\bbE_{V}$. Next, note that, since $\|\bbE\| \leq \varepsilon$, then $\|\bbM \| \leq \varepsilon$, so that
    \begin{equation} \label{eqn:boundEu}
    \begin{aligned}
    \|\bbE_{U}\| & \leq
    \left\| (\bbU - \bbV) \bbM (\bbU - \bbV)^{\Hr} \right\| \\
    & \quad 
    + \left\| \bbV \bbM (\bbU - \bbV)^{\Hr} \right\| 
    + \left\| (\bbU - \bbV) \bbM \bbV^{\Hr} \right\| \\
    & \leq 
    \| \bbU - \bbV \|^{2} \| \bbM \| 
    + 2 \| \bbU - \bbV \| \| \bbV \| \| \bbM \| \\
    & \leq 
    \varepsilon \| \bbU - \bbV \|^{2} 
    + 2 \varepsilon \| \bbU - \bbV \| \\
    & = \varepsilon \left( (\| \bbU - \bbV\|^{2} + 1)^{2} - 1 \right) = \varepsilon \delta
    \end{aligned}
    \end{equation}
    which completes the proof.
\end{proof}

\begin{proof}[Proof of Thm.~\ref{thm:filterStabilityAbsolute}]
Since graph filters are permutation equivariant (Prop.~\ref{prop:filterPermutationEquivariance}), we can assume, without loss of generality, that $\bbP_{0} = \bbI$ in \eqref{eqn_def_operator_distance_error_matrix}, writing $\hbS = \bbS + \bbE$. Let us start by computing the first order expansion of $(\bbS + \bbE)^{k}$
\begin{equation} \label{eqn:matrixTaylor}
    (\bbS + \bbE)^{k} 
        = \bbS^{k} 
            + \sum_{r=0}^{k-1} \bbS^{r} \bbE \bbS^{k-r-1} 
            + \bbC
\end{equation}
with $\bbC$ such that $\|\bbC\| \leq \sum_{r=2}^{k} \binom{k}{r} \|\bbE\|^{r} \|\bbS\|^{k-r}$. Using this first-order approximation back in \eqref{eqn:graphFilter}, we get
\begin{equation} \label{eqn:firstOrderAbsolute}
    \bbH(\hbS) - \bbH(\bbS) = 
        \sum_{k=0}^{\infty} h_{k} 
            \sum_{r=0}^{k-1} 
                \bbS^{r} \bbE \bbS^{k-r-1} 
        + \bbD
\end{equation}
with $\bbD$ such that $\|\bbD\| = \ccalO(\|\bbE\|^{2})$ since the coefficients $\{h_{k}\}_{k=0}^{\infty}$ of the filter stem from the power series expansion of the analytic function $h$ which has bounded derivatives.

Next, consider an arbitrary graph signal $\bbx$ with finite energy $\|\bbx\|<\infty$ that has a GFT given by $\tbx = [\tdx_{1},\ldots,\tdx_{N}]^{\Tr}$ so that $\bbx = \sum_{i=1}^{N} \tdx_{i} \bbv_{i}$ for $\{\bbv_{i}\}_{i=1}^{N}$ the eigenvector basis of the GSO,
\begin{equation}\label{eqn:filterDifferenceWithXAbsolute}
\begin{aligned}
    & \left[ \bbH(\hbS) - \bbH(\bbS) \right] \bbx = \sum_{i=1}^{N} \tdx_{i} \bbD \bbv_{i}
          \\
    & \qquad
        + \sum_{i=1}^{N} \tdx_{i} 
            \sum_{k=0}^{\infty} h_{k} 
                \sum_{r=0}^{k-1} \bbS^{r} \bbE \bbS^{k-r-1} \bbv_{i} .
\end{aligned}
\end{equation}
Let us focus on the second term of the sum in \eqref{eqn:filterDifferenceWithXAbsolute}. It is immediate that $\bbS^{k-r-1}\bbv_{i} = \lambda_{i}^{k-r-1}\bbv_{i}$, so that
\begin{equation} \label{eqn:allSumEvi}
\begin{aligned}
    & \sum_{i=1}^{N} \tdx_{i} 
        \sum_{k=0}^{\infty} h_{k} 
            \sum_{r=0}^{k-1} \bbS^{r} \bbE \bbS^{k-r-1} \bbv_{i} \\
    & \qquad 
        = \sum_{i=1}^{N} \tdx_{i} 
        \sum_{k=0}^{\infty} h_{k} 
            \sum_{r=0}^{k-1} \lambda_{i}^{k-r-1} \bbS^{r} \bbE  \bbv_{i}
\end{aligned}
\end{equation}
Now, using Lemma~\ref{l:Evi} in \eqref{eqn:allSumEvi} yields two terms
\begin{align}
    & \sum_{i=1}^{N} \tdx_{i} 
        \sum_{k=0}^{\infty} h_{k} 
            \sum_{r=0}^{k-1} \lambda_{i}^{k-r-1} \bbS^{r} \bbE  \bbv_{i} \\
    & = \sum_{i=1}^{N} \tdx_{i} 
        \sum_{k=0}^{\infty} h_{k}
            \sum_{r=0}^{k-1} \lambda_{i}^{k-r-1} \bbS^{r} m_{i}  \bbv_{i} 
                \label{eqn:allSumEviFirstTerm} \\
    & \quad + \sum_{i=1}^{N} \tdx_{i} 
            \sum_{k=0}^{\infty} h_{k}
                \sum_{r=0}^{k-1} \lambda_{i}^{k-r-1} \bbV \bbLam^{r} \bbV^{\Hr} \bbE_{U}  \bbv_{i}.
                \label{eqn:allSumEviSecondTerm}
\end{align}
For \eqref{eqn:allSumEviFirstTerm} we note that $\bbS^{r} \bbv_{i} = \lambda_{i}^{r} \bbv_{i}$, leading to the product $\lambda_{i}^{k-r-1} \lambda_{i}^{r} = \lambda_{i}^{k-1}$ being independent of $r$, so that
\begin{equation} \label{eqn:firstTermSumAbsolute}
    \sum_{i=1}^{N} \tdx_{i} m_{i}
        \sum_{k=1}^{\infty} k h_{k} \lambda_{i}^{k-1}  \bbv_{i}
    = \sum_{i=1}^{N} \tdx_{i} m_{i} h'(\lambda_{i}) \bbv_{i}
\end{equation}
where $h'(\lambda_{i}) = \sum_{k=1}^{\infty} k h_{k} \lambda_{i}^{k-1}$ is the derivative $h'(\lambda)$ of $h(\lambda)$ evaluated at $\lambda = \lambda_{i}$. In the case of \eqref{eqn:allSumEviSecondTerm} we note that
\begin{equation}\label{eqn:secondTermSumAbsolute}
\begin{aligned} 
    & \sum_{i=1}^{N} \tdx_{i} \bbV
        \sum_{k=0}^{\infty} h_{k} 
            \sum_{r=0}^{k-1} 
                \lambda_{i}^{k-r-1} \bbLambda^{r} \bbV^{\Hr} \bbE_{U} \bbv_{i}  \\
    & \qquad = \sum_{i=1}^{N} \tdx_{i} \bbV
        \diag(\bbg_{i}) \bbV^{\Hr} \bbE_{U} \bbv_{i}
\end{aligned}
\end{equation}
where $\bbg_{i} \in \reals^{N}$ is such that
\begin{equation}
    [\bbg_{i}]_{j} 
        = \sum_{k=0}^{\infty} h_{k} 
            \sum_{r=0}^{k-1} \lambda_{i}^{k-r-1} \lambda_{j}^{r}.
\end{equation}
For $j=i$ we have $[\bbg_{i}]_{i} = h'(\lambda_{i})$ while, for $j \neq i$, recall that $\sum_{r=0}^{k-1} \lambda_{i}^{k-r-1}\lambda_{j}^{r} = (\lambda_{i}^{k} - \lambda_{j}^{k})/(\lambda_{i} - \lambda_{j})$ so that
\begin{equation} \label{eqn:giAbsolute}
    [\bbg_{i}]_{j} = 
        \begin{cases}
            h'(\lambda_{i}) 
                & \text{ if} j=i \\
            \frac{h(\lambda_{i})-h(\lambda_{j})}{\lambda_{i}-\lambda_{j}} 
                & \text{ if} j \neq i
        \end{cases}.
\end{equation}
Note $\max_{j} |[\bbg_{i}]_{j}| \leq C$ due to hypothesis \eqref{eqn:lipschitzFilters}, $i=1,\ldots,N$.

Using \eqref{eqn:firstTermSumAbsolute} and \eqref{eqn:secondTermSumAbsolute} back in \eqref{eqn:filterDifferenceWithXAbsolute}, and computing the norm,
\begin{align}
    & \left\| \left[ \bbH(\hbS) - \bbH(\bbS) \right] \bbx \right\|  \leq \left\| \bbD \tbx \right\|
    \label{eqn:secondOrderTermAbsolute} \\
    & \quad + \left\|
        \sum_{i=1}^{N} \tdx_{i} m_{i} h'(\lambda_{i}) \bbv_{i}
           \right\|
        \label{eqn:firstOrderTermAbsoluteFirstTerm} \\
    & \quad + \left\|   
        \sum_{i=1}^{N} 
            \tdx_{i} \bbV \diag(\bbg_{i}) \bbV^{\Hr} \bbE_{U} \bbv_{i}
           \right\|
        \label{eqn:firstOrderTermAbsoluteSecondTerm} .
\end{align}
For \eqref{eqn:firstOrderTermAbsoluteFirstTerm} we have
\begin{equation}
    \left\| \sum_{i=1}^{N} \tdx_{i} m_{i} h'(\lambda_{i}) \bbv_{i} \right\|^{2}
        = \sum_{i=1}^{N} |\tdx_{i}|^{2} |m_{i}|^{2} |h'(\lambda_{i})|^{2} \| \bbv_{i} \|^{2}
\end{equation}
since $\{\bbv_{i}\}$ conform an orthonormal basis. Then, we recall that $\|\bbv_{i}\|^{2} = 1$ and, from hypothesis \eqref{eqn:hypothesisEabsolute} we have $|m_{i}| \leq \varepsilon$ and from hypothesis \eqref{eqn:lipschitzFilters}, $|h'(\lambda_{i})| \leq C$, so that
\begin{equation}
    \left\| \sum_{i=1}^{N} \tdx_{i} m_{i} h'(\lambda_{i}) \bbv_{i} \right\|^{2}
        \leq \varepsilon^{2} C^{2} \sum_{i=1}^{N} |\tdx_{i}|^{2}.
\end{equation}
Recalling that $\sum_{i=1}^{N} |\tdx_{i}|^{2} = \| \tbx\|^{2} = \|\bbx\|^{2}$ and applying square root, we finally bound \eqref{eqn:firstOrderTermAbsoluteFirstTerm} by
\begin{equation} \label{eqn:firstOrderTermAbsoluteFirstTermBound}
    \left\| \sum_{i=1}^{N} \tdx_{i} m_{i} h'(\lambda_{i}) \bbv_{i} \right\|
        \leq \varepsilon C \| \bbx\|.
\end{equation}
Now, moving on to \eqref{eqn:firstOrderTermAbsoluteSecondTerm} and using triangle inequality together with submultiplicativity of the operator norm, we have
\begin{equation} \label{eqn:firstOrderTermAbsoluteSecondTermFirstInequality}
\begin{aligned}
    & \left\|   
        \sum_{i=1}^{N} \tdx_{i} \bbV \diag(\bbg_{i}) \bbV^{\Hr} \bbE_{U}\bbv_{i} 
    \right\| \\
        & \qquad \leq 
        \sum_{i=1}^{N}
            |\tdx_{i} |
            \| \bbV \diag(\bbg_{i}) \bbV^{\Hr}\|
            \|\bbE_{U}\|
            \|\bbv_{i}\|. 
        \\
\end{aligned}
\end{equation}
We have $\| \bbV \diag(\bbg_{i}) \bbV^{\Hr}\| \leq C$ for all $i=1,\ldots,N$ from \eqref{eqn:giAbsolute} in combination with hypothesis \eqref{eqn:lipschitzFilters}, and also $\|\bbv_{i}\| = 1$. As for $\|\bbE_{U}\|$, we know from Lemma~\ref{l:Evi} that $\|\bbE_{U}\| \leq \varepsilon \delta$. Then, 
\begin{equation} \label{eqn:firstOrderTermAbsoluteSecondTermBound}
    \left\|   
        \sum_{i=1}^{N} \bbV \diag(\bbg_{i}) \bbV^{\Hr}  \bbE_{U} (\tdx_{i} \bbv_{i}) 
    \right\|
    \leq
       C \varepsilon \delta \sqrt{N} \| \bbx \|
\end{equation}
where we used that $\sum_{i=1}^{N} |\tdx_{i} | = \| \tbx_{i} \| _{1} \leq \sqrt{N} \| \tbx \| = \sqrt{N} \| \bbx\|$.

Finally, for the second order term \eqref{eqn:secondOrderTermAbsolute} stemming from the expansion of $\hbS^{k}$, we obtain 
\begin{equation} \label{eqn:secondOrderTermAbsoluteBound}
    \| \bbD \tbx \|
        \leq \ccalO(\| \bbE \|^{2}) \| \bbx \|_{2} 
        \leq \ccalO(\varepsilon^{2}) \| \bbx \|_{2}.
\end{equation}
Using bound \eqref{eqn:firstOrderTermAbsoluteFirstTermBound} in \eqref{eqn:firstOrderTermAbsoluteFirstTerm} and bound \eqref{eqn:firstOrderTermAbsoluteSecondTermBound} in \eqref{eqn:firstOrderTermAbsoluteSecondTerm}, together with the bound \eqref{eqn:secondOrderTermAbsoluteBound} we just obtained for \eqref{eqn:secondOrderTermAbsolute}, we obtain
\begin{equation} \nonumber
    \left\| [\bbH(\hbS) - \bbH(\bbs)] \bbx \right\|
        \leq \varepsilon C \| \bbx \| + \varepsilon C \delta \sqrt{N} \| \bbx \| + \ccalO(\varepsilon^{2}) \| \bbx \|.
\end{equation}
We complete the proof by using that $\| \bbx \| = 1$ as per Def.~\ref{def_operator_distance} and recalling that we have assumed that $\bbI$ is the permutation that achieves the minimum norm of all $\bbP \in \ccalP$.
\end{proof}


\section{Stability under Relative Perturbations}\label{sec_apx_C}

\begin{proof}[Proof of Thm.~\ref{thm:filterStabilityRelative}]
Following from the fact that graph filters are permutation equivariant (Prop.~\ref{prop:filterPermutationEquivariance}), we can assume, without loss of generality, that $\bbP_{0} = \bbI$ solves \eqref{eqn_relative_perturbation_distance}. From a first order expansion analogous to \eqref{eqn:matrixTaylor}, where we use $\bbE \bbS + \bbS \bbE$ instead of just $\bbE$ as the second term, we obtain
\begin{equation} \label{eqn:matrixTaylorRelative}
\begin{aligned}
    & \bbH(\hbS) - \bbH(\bbS) \\
    & \quad = 
        \sum_{k=0}^{\infty} h_{k} 
            \sum_{r=0}^{k-1} \left( 
                \bbS^{r} \bbE \bbS^{k-r} 
                \!\! + \! \bbS^{r+1} \bbE \bbS^{k-r-1} 
            \right)
        + \bbD
\end{aligned}
\end{equation}
with $\bbD$ such that $\|\bbD\| = \ccalO(\|\bbE\|^{2})$, in analogy to \eqref{eqn:firstOrderAbsolute}.

Next, we consider the difference in the effects of the filter on an arbitrary graph signal $\bbx$ with finite energy $\|\bbx\|<\infty$ that has a GFT given by $\tbx = [\tdx_{1},\ldots,\tdx_{N}]^{\Tr}$ so that $\bbx = \sum_{i=1}^{N} \tdx_{i} \bbv_{i}$ for $\{\bbv_{i}\}_{i=1}^{N}$ the eigenvector basis of the GSO $\bbS$. Then,
\begin{align} \label{eqn:filterDifferenceWithX}
    & \left[ \bbH(\hbS) - \bbH(\bbS) \right] \bbx 
        = \sum_{i=1}^{N} \tdx_{i} \bbD \bbv_{i} 
    \\
    & \quad + \sum_{i=1}^{N} \tdx_{i} 
        \sum_{k=0}^{\infty} h_{k} 
            \sum_{r=0}^{k-1} \left( 
                \bbS^{r} \bbE \bbS^{k-r} + \bbS^{r+1} \bbE \bbS^{k-r-1} 
            \right) \bbv_{i}. 
    \nonumber
\end{align}
Let us consider first the product $\bbS^{r+1} \bbE \bbS^{k-r-1} \bbv_{i}$ in \eqref{eqn:filterDifferenceWithX}. It is immediate that $\bbS^{k-r-1}\bbv_{i} = \lambda_{i}^{k-r-1}\bbv_{i}$, and, in combination with Lemma~\ref{l:Evi}, we get
\begin{equation}
\begin{aligned}
\bbS^{r+1} \bbE \bbS^{k-r-1} \bbv_{i} 
    & = \lambda_{i}^{k-r-1} \bbS^{r+1} 
        \left(m_{i} \bbv_{i} + \bbE_{U} \bbv_{i} \right) \\
    & = m_{i} \lambda_{i}^{k} \bbv_{i}
        + \lambda_{i}^{k-r-1}\bbS^{r+1} \bbE_{U} \bbv_{i}.
\end{aligned}
\end{equation}
Analogously, for the second product, we get $\bbS^{r} \bbE \bbS^{k-r} \bbv_{i} = m_{i} \lambda_{i}^{k} \bbv_{i} + \lambda_{i}^{k-r}\bbS^{r} \bbE_{U} \bbv_{i}$.
Then, using these results, we can write
\begin{equation} \label{eqn:secondTermRelative}
\begin{aligned}
    & \sum_{i=1}^{N} \tdx_{i} \sum_{k=0}^{\infty} h_{k} \sum_{r=0}^{k-1} (\bbS^{r} \bbE \bbS^{k-r} + \bbS^{r+1} \bbE \bbS^{k-r-1}) \bbv_{i} \\
    & = 2 \sum_{i=1}^{N} \tdx_{i} m_{i} \lambda_{i} h'(\lambda_{i}) \bbv_{i} + \sum_{i=1}^{N} \tdx_{i} \bbV \diag(\bbg_{i}) \bbV^{\Hr} \bbE_{U} \bbv_{i}
\end{aligned}
\end{equation}
where, for the first term, we gathered the two equal terms $m_{i} \lambda_{i}^{k} \bbv_{i}$ and used the fact that $\sum_{k=0}^{\infty} h_{k} \lambda_{i}^{k} = \lambda_{i} h'(\lambda_{i})$; and for the second term, we defined $\bbg_{i} \in \reals^{N}$ as
\begin{equation}
\begin{aligned}[]
 [\bbg_{i}]_{j}
    & = \sum_{k=0}^{\infty} h_{k} \sum_{r=0}^{k-1} 
        \left( \lambda_{i}^{k-r-1} [\bbLambda^{r+1}]_{j} 
             + \lambda_{i}^{k-r} [\bbLambda^{r}]_{j} \right) \\
    & = \begin{cases}
        \lambda_{i} h'(\lambda_{i}) & \text{if }i=j \\
        \frac{\lambda_{i}+\lambda_{j}}{\lambda_{i} - \lambda_{j}} \left( h(\lambda_{i}) - h(\lambda_{j}) \right) & \text{if } i \neq j
      \end{cases}.
\end{aligned}
\end{equation}

Finally, we proceed to bound $\|(\bbH(\hbS) - \bbH(\bbS)) \bbx\|$. For the first term in \eqref{eqn:filterDifferenceWithX}, we simply have $\|\bbD \bbx\| \leq \ccalO(\varepsilon^{2}) \| \bbx\|$ by definition of operator norm and the error of the first order approximation \eqref{eqn:matrixTaylorRelative}. For the second term in \eqref{eqn:filterDifferenceWithX} we need to bound the two terms in \eqref{eqn:secondTermRelative}. The first of the terms in \eqref{eqn:secondTermRelative} is bounded analogously to \eqref{eqn:firstOrderTermAbsoluteFirstTermBound}, noting that, in this case, $|m_{i}| \leq \varepsilon$ by means of \eqref{eqn:hypothesisErelative}, and $|\lambda_{i} h'(\lambda_{i})| \leq C$ due to \eqref{eqn:integralLipschitzFilters}. For the second term in \eqref{eqn:secondTermRelative}, we proceed analogously to \eqref{eqn:firstOrderTermAbsoluteSecondTermBound}, where now $\|\bbE_{U}\| \leq \varepsilon \delta$ and $\| \bbV \diag(\bbg_{i}) \bbV^{\Hr}\| \leq 2C$, following the condition imposed by integral Lipschitz filters \eqref{eqn:integralLipschitzFilters}. All of these results together yield
\begin{equation} \nonumber
    \big\| ( \bbH(\hbS) - \bbH(\bbS) ) \bbx \big\|
        \leq 
        2 C \varepsilon \|\bbx\| 
        + 2C \varepsilon \delta \sqrt{N} \| \bbx\| 
        + \ccalO(\varepsilon^{2}) \| \bbx \|.
\end{equation}
We complete the proof by using that $\|\bbx\| = 1$ as per Def.~\ref{def_operator_distance}, and recalling that we have assumed that $\bbI$ is the permutation that achieves the minimum norm of all $\bbP \in \ccalP$.
\end{proof}


\begin{proof}[Proof of Thm.~\ref{thm_filter_stability_structural_constraint}]
The proof is analogous to that of Thm.~\ref{thm:filterStabilityRelative}, with the following main difference. Denote by $m_{i}$, $i=1,\ldots,N$, the eigenvalues of $\bbE = \bbU \bbM \bbU^{\Hr}$. If we order these eigenvalues as $|m_{1}| \leq \cdots \leq |m_{N}|$, we know that $\| \bbE \| = |m_{N}|$ and condition \eqref{eqn:structuralConstraint} becomes equivalent to $\| \bbE/m_{N} - \bbI \| \leq \varepsilon$. This can be used to write $\bbE \bbv_{i}$, not as in Lemma~\ref{l:Evi}, but as
\begin{equation}
    \bbE \bbv_{i} 
        = \sum_{n=1}^{N} m_{n} \bbu_{n} \bbu_{n}^{\Hr} \bbv_{i} 
        = m_{N} \sum_{n=1}^{N} 
           (1+\delta_{n}) \bbu_{n} \bbu_{n}^{\Hr} \bbv_{i}.
\end{equation}
where $m_{n}/m_{N} = 1+\delta_{n}$ for all $n = 1,\ldots,N$ with $|\delta_{n}| \leq \varepsilon$ in virtue of \eqref{eqn:structuralConstraint}, yielding
\begin{equation} \label{eqn:Evi}
    \bbE \bbv_{i} 
        = m_{N} \bbv_{i} + m_{N} \bbw_{i} 
    \quad , \quad 
    \bbw_{i} 
        = \sum_{n=1}^{N} \delta_{n} \bbu_{n} \bbu_{n}^{\Hr} \bbv_{i}.
\end{equation}
Using this expression in \eqref{eqn:secondTermRelative}, it becomes
\begin{align} \label{eqn:secondTermStructural}
& \sum_{i=1}^{N} \tdx_{i} \sum_{k=0}^{\infty} h_{k} \sum_{r=0}^{k-1} (\bbS^{r} \bbE \bbS^{k-r} + \bbS^{r+1} \bbE \bbS^{k-r-1}) \bbv_{i} \\
& = 2 m_{N} \sum_{i=1}^{N} \tdx_{i} \lambda_{i} h'(\lambda_{i}) \bbv_{i} + m_{N} \sum_{i=1}^{N} \tdx_{i} \bbV \diag(\bbg_{i}) \bbV^{\Hr} \bbw_{i}. \nonumber
\end{align}
Noting that $|m_{N}| \leq \varepsilon$ because \eqref{eqn:hypothesisErelative} holds, and that
\begin{equation} \label{eqn:wiBound}
    \| \bbw_{i} \|
        \leq \left\| \sum_{n=1}^{N} \delta_{n} \bbu_{n} \bbu_{n}^{\Hr} \right\|
                  \| \bbv_{i} \|
    = \max_{n=1,\ldots,N} |\delta_{n}| 
        \leq \varepsilon
\end{equation}
we observe that the first term of \eqref{eqn:secondTermStructural} is bounded above by $2C \varepsilon$ while the second term is bounded by $2\varepsilon^{2} C \sqrt{N} = \ccalO(\varepsilon^{2})$, completing the proof.
\end{proof}


\section{Permutation Equivariance of GNNs}\label{sec_apx_D}

\begin{proof}[Proof of Prop.~\ref{prop:GNNPermutationEquivariance}]
First, we obtain the output $\hbz_{\ell}^{fg}$ of \eqref{eqn_gnn_filter} when the input is $\hbx_{\ell-1}^{g} = \bbP^{\Tr} \bbx_{\ell-1}^{g}$, operating on the correspondingly permuted GSO $\hbS = \bbP^{\Tr} \bbS \bbP$. Since we know that application of graph filters is permutation equivariant (Prop.~\ref{prop:filterPermutationEquivariance}), we have that the output $\hbz_{\ell}^{fg}$ is
\begin{equation} \label{eqn:GNNfilterPermutation}
    \hbz_{\ell}^{fg} 
        \! = \bbH_{\ell}^{fg}(\hbS) \hbx_{\ell-1}^{g} 
        \! = \bbP^{\Tr} \bbH_{\ell}^{fg}(\bbS) \bbx_{\ell-1}^{g}
        \! = \bbP^{\Tr} \bbz_{\ell}^{fg}.
\end{equation}
Next, we note that, for any pointwise function $\sigma$ applied to each element of a vector, it holds that $\sigma(\hbx) = \sigma(\bbP^{\Tr} \bbx) = \bbP^{\Tr} \sigma(\bbx)$. Therefore, it is immediate that
\begin{equation} \label{eqn:GNNnonlinearityPermutation}
    \hbx_{\ell}^{f} 
        = \sigma \bigg[ \sum_{g}\hbz_{\ell}^{fg} \bigg]
        = \sigma \bigg[ \sum_{g} \bbP^{\Tr} \bbz_{\ell}^{fg} \bigg]
        = \bbP^{\Tr} \sigma \bigg[ \sum_{g} \bbz_{\ell}^{fg} \bigg]
\end{equation}
where we note that the last equality is $\bbP^{\Tr} \bbx_{\ell}^{f}$ using \eqref{eqn_gnn_nonlinearity}.
Since \eqref{eqn:GNNfilterPermutation}-\eqref{eqn:GNNnonlinearityPermutation} hold, we have that each layer $\ell$ of the GNN is permutation equivariant. And noting that this holds for any $\ell=1,\ldots,L$ completes the proof.
\end{proof}


\section{Graph Neural Networks Stability}\label{sec_apx_E}

\begin{proof}[Proof of Thm.~\ref{thm:GNNStability}]
Let us consider the general case where we have $F_{\ell}$ features per layer, for $\ell=0,\ldots,L$, where $F_{0}$ are the number of input features and $F_{L}$ the number of output features. That is, the input to the GNN is the collection of $F_{0}$ graph signals $\bbx = \{\bbx^{g}\}_{g=1}^{F_{0}}$ and the output $\Phi(\bbS, \bbx)$ is a collection of $F_{L}$ graph signals $\{\bbx_{L}^{f}\}_{f=1}^{F_{L}}$ [cf. \eqref{eqn_gnn_operator}]. In this context, we are interested in the difference between the output of the GNNs when evaluated on different shift operators $\bbS$ and $\hbS$
\begin{equation} \label{eqn:ineqyGNN}
    \| \Phi(\bbS, \bbx) - \Phi(\hbS, \bbx) \|^{2} 
        = \sum_{f=1}^{F_{L}} \| \bbx_{L}^{f} - \hbx_{L}^{f} \|^{2}.
\end{equation}
We use $\hat{\cdot}$ to denote an operation acting on $\hbS$ instead of $\bbS$. For example, we denote $\bbH_{\ell}^{fg}(\bbS) = \bbH_{\ell}^{fg}$ and $\bbH_{\ell}^{fg}(\hbS) = \hbH_{\ell}^{fg}$ for two filters with the same coefficients $\bbh_{\ell}^{fg}$ but acting on different shift operators $\bbS$ and $\hbS$ [cf. \eqref{eqn:graphFilter}-\eqref{eqn:graphFilter_perturbed}]. Now, focusing on one of the features of the last layer [cf. \eqref{eqn_gnn_filter}, \eqref{eqn_gnn_nonlinearity}, \eqref{eqn_gnn_operator}]
\begin{equation}
\begin{aligned}
    & \| \bbx_{L}^{f} - \hbx_{L}^{f} \|
    \\
    & = \left\| 
        \sigma \left( 
            \sum_{g=1}^{F_{L-1}} \bbH_{L}^{fg} \bbx_{L-1}^{g}
         \right) 
         - \sigma \left( 
            \sum_{g=1}^{F_{L-1}} \hbH_{L}^{fg} \hbx_{L-1}^{g}
        \right) 
        \right\|
\end{aligned}
\end{equation}
and applying Lipschitz continuity of the nonlinearity \eqref{eqn_Lipschitz_nonlinearity} by which $|\sigma(b) - \sigma(a)| \leq C_{\sigma} |b-a|$ with $C_{\sigma}=1$, followed by the triangular inequality, we get
\begin{equation} \label{eqn:diffLf}
    \| \bbx_{L}^{f} - \hbx_{L}^{f} \|
        \leq
    C_{\sigma}
    \sum_{g=1}^{F_{L-1}} 
        \left\| \bbH_{L}^{fg} \bbx_{L-1}^{g}
            - \hbH_{L}^{fg} \hbx_{L-1}^{g} \right\|.
\end{equation}
Adding and subtracting $\hbH_{L}^{fg} \bbx_{L-1}^{g}$ from the terms in the sum, and using the triangular inequality once more, we get
\begin{align}
    & \left\| \bbH_{L}^{fg} \bbx_{L-1}^{g} 
        - \hbH_{L}^{fg} \hbx_{L-1}^{g} \right\| \\
    & \quad \leq \left\| 
        \left(\bbH_{L}^{fg} - \hbH_{L}^{fg} \right) 
            \bbx_{L-1}^{g}
        \right\| 
    + \left\| \hbH_{L}^{fg}
        \left( \bbx_{L-1}^{g} - \hbx_{L-1}^{g} \right) 
      \right\|.
    \nonumber
\end{align}
The definition of operator norm, implies that
\begin{equation}
\begin{aligned}
    & \left\| 
        \bbH_{L}^{fg} \bbx_{L-1}^{g}
        - \hbH_{L}^{fg} \hbx_{L-1}^{g} 
      \right\|
    \label{eqn:diffLfg}
    \\
    & \leq 
      \left\| \bbH_{L}^{fg} - \hbH_{L}^{fg} \right\|
      \left\| \bbx_{L-1}^{g} \right\| 
    + 
      \left\| \hbH_{L}^{fg} \right\|
      \left\| \bbx_{L-1}^{g} - \hbx_{L-1}^{g} \right\|.
\end{aligned}
\end{equation}
For the first term in the inequality \eqref{eqn:diffLfg} we can use the hypothesis that $\| \bbH_{l}^{fg} - \hbH_{l}^{fg} \| \leq \Delta \varepsilon$ for all layers $l=1,\ldots,L$, while for the second term, we can use that $\|\hbH_{l}^{fg}\| \leq B=1$ for all layers. Using these two facts in \eqref{eqn:diffLfg} and substituting back in \eqref{eqn:diffLf},
\begin{equation} \label{eqn:boundL}
    \| \bbx_{L}^{f} - \hbx_{L}^{f} \|_{2} 
        \! \leq \!
    C_{\sigma} \!\!\!\!
    \sum_{g=1}^{F_{L-1}} \!\!\!
        \left( 
            \Delta \varepsilon \| \bbx_{L-1}^{g} \|
            \! + \! B \| \bbx_{L-1}^{g} - \hbx_{L-1}^{g} \|
        \right).
\end{equation}

We observe that \eqref{eqn:boundL} shows a recursion, where the bound at layer $L$ depends on the bound at layer $L-1$ as well as the norm of the features at layer $L-1$, summed over all features. That is, for an arbitrary layer $\ell \in \{ 1,\ldots,L\}$, we have
\begin{equation} \label{eqn:boundell}
    \| \bbx_{\ell}^{f} - \hbx_{\ell}^{f} \|
        \leq 
    C_{\sigma} \!\!\!
    \sum_{g=1}^{F_{\ell-1}} 
        \left( 
            \Delta \varepsilon \| \bbx_{\ell-1}^{g} \|
            + B \| \bbx_{\ell-1}^{g} - \hbx_{\ell-1}^{g} \|
        \right).
\end{equation}
with initial conditions given by the input features $\bbx_{0}^{g} = \bbx^{g}$ for $g=1,\ldots,F_{0}$, so that $\| \bbx_{0}^{g} - \hbx_{0}^{g}\| = \|\bbx^{g}- \bbx^{g} \| = 0$. For the first step to solve the recursion \eqref{eqn:boundell}, we compute the norm $\|\bbx_{\ell}^{f}\|$. We observe that
\begin{equation} \label{eqn:recursionNorm}
    \| \bbx_{\ell}^{f} \| 
        \leq C_{\sigma} \bigg\| 
            \sum_{g=1}^{F_{l-1}} 
                \bbH_{\ell}^{fg} \bbx_{\ell-1}^{g}
            \bigg\|
       \leq C_{\sigma}B  \sum_{g=1}^{F_{l-1}} \| \bbx_{\ell-1}^{g} \|
\end{equation}
where we used the triangle inequality, followed by the bound on the filters. Solving \eqref{eqn:recursionNorm} with initial condition $\|\bbx_{0}^{g}\| = \| \bbx^{g}\|$,
\begin{equation} \label{eqn:solutionNorm}
    \| \bbx_{\ell}^{f} \|
        \leq (C_{\sigma}B)^{\ell} 
            \prod_{\ell'=1}^{\ell-1} F_{\ell'} 
                \sum_{g=1}^{F_{0}} \| \bbx^{g} \|.
\end{equation}
Using \eqref{eqn:solutionNorm} back in recursion \eqref{eqn:boundell} and solving it with the corresponding initial conditions, we get
\begin{equation} \nonumber 
    \big\| \bbx_{\ell}^{f} - \hbx_{\ell}^{f} \big\|
        \leq \Delta \varepsilon (C_{\sigma}B)^{\ell-1}  
            \left(\sum_{\ell'=1}^{\ell} 
                C_{\sigma}^{\ell'} \right)
            \left(\prod_{\ell'=1}^{\ell-1}F_{\ell'} \right)
                    \sum_{g=1}^{F_{0}} \| \bbx^{g} \|.
\end{equation}
Evaluating this for $\ell=L$ and using it back in \eqref{eqn:boundL}, \eqref{eqn:ineqyGNN} yields
\begin{equation} \label{eqn:finalBoundL}
\begin{aligned}
    & \big\| \Phi(\bbS, \bbx) - \Phi(\hbS,\bbx) \big\|^{2} 
        = \sum_{f=1}^{F_{L}} \left\| 
            \bbx_{L}^{f} - \hbx_{L}^{f}
          \right\|^{2} 
    \\
    & \quad \leq 
        \sum_{f=1}^{F_{L}} \left( 
            \Delta \varepsilon (C_{\sigma}B)^{L-1}
            \sum_{\ell=1}^{L} C_{\sigma}^{\ell} 
                \prod_{\ell=1}^{L-1}F_{\ell} 
            \sum_{g=1}^{F_{0}} \| \bbx^{g} \|
        \right)^{2}.
\end{aligned}
\end{equation}
Noting that no term in the sum of \eqref{eqn:finalBoundL} depends on $f$, and subsequently applying a square root, we get
\begin{equation}
\begin{aligned}
    \big\| \Phi(\bbS, & \bbx) - \Phi(\hbS,\bbx) \big\| \\
         & \leq
    \sqrt{F_{L}} 
    \Delta \varepsilon (C_{\sigma}B)^{L-1}
    \sum_{\ell=1}^{L} C_{\sigma}^{\ell}
    \prod_{\ell=1}^{L-1}F_{\ell}
    \sum_{g=1}^{F_{0}} \| \bbx^{g} \|.
\end{aligned}
\end{equation}
Finally, setting $F_{L} = F_{0} = 1$ yields $\sqrt{F_{L}}= 1$ and $\sum_{g=1}^{F_{0}}\| \bbx^{g} \| = \| \bbx \|$, setting $F_{1} = \cdots = F_{L-1} = F$, $B=1$ and $C_{\sigma} = 1$ yields $B^{L-1} = 1$ and $\sum_{\ell=1}^{L} C_{\sigma}^{\ell} = L$, respectively. This completes the proof.
\end{proof}


\bibliographystyle{IEEEtran}
\bibliography{myIEEEabrv,biblioStability}

\begin{thebibliography}{10}
\providecommand{\url}[1]{#1}
\csname url@samestyle\endcsname
\providecommand{\newblock}{\relax}
\providecommand{\bibinfo}[2]{#2}
\providecommand{\BIBentrySTDinterwordspacing}{\spaceskip=0pt\relax}
\providecommand{\BIBentryALTinterwordstretchfactor}{4}
\providecommand{\BIBentryALTinterwordspacing}{\spaceskip=\fontdimen2\font plus
\BIBentryALTinterwordstretchfactor\fontdimen3\font minus
  \fontdimen4\font\relax}
\providecommand{\BIBforeignlanguage}[2]{{%
\expandafter\ifx\csname l@#1\endcsname\relax
\typeout{** WARNING: IEEEtran.bst: No hyphenation pattern has been}%
\typeout{** loaded for the language `#1'. Using the pattern for}%
\typeout{** the default language instead.}%
\else
\language=\csname l@#1\endcsname
\fi
#2}}
\providecommand{\BIBdecl}{\relax}
\BIBdecl

\bibitem{LeCun15-DeepLearning}
Y.~LeCun, Y.~Bengio, and G.~Hinton, ``Deep learning,'' \emph{Nature}, vol. 521,
  no. 7553, pp. 85--117, 2015.

\bibitem{Goodfellow16-DeepLearning}
I.~Goodfellow, Y.~Bengio, and A.~Courville, \emph{Deep Learning}, ser. The
  Adaptive Computation and Machine Learning Series.\hskip 1em plus 0.5em minus
  0.4em\relax Cambridge, MA: The {MIT} Press, 2016.

\bibitem{Mallat12-Scattering}
S.~Mallat, ``Group invariant scattering,'' \emph{Commun. Pure, Appl. Math.},
  vol.~65, no.~10, pp. 1331--1398, Oct. 2012.

\bibitem{Bruna13-Scattering}
J.~Bruna and S.~Mallat, ``Invariant scattering convolution networks,''
  \emph{{IEEE} Trans. Pattern Anal. Mach. Intell.}, vol.~35, no.~8, pp.
  1872--1886, Aug. 2013.

\bibitem{Yin19-FourierComputerVision}
D.~Yin, R.~G. Lopes, J.~Shlens, E.~D. Cubuk, and J.~Gilmer, ``A fourier
  perspective on model robustness in computer vision,'' in \emph{33rd Conf.
  Neural Inform. Process. Syst.}\hskip 1em plus 0.5em minus 0.4em\relax
  Vancouver, BC: Neural Inform. Process. Syst. Found., 8-14 Dec. 2019, pp.
  13\,276--13\,286.

\bibitem{Wang20-HighFrequency}
H.~Wang, X.~Wu, Z.~Huang, and E.~P. Xing, ``High-frequency component helps
  explain the generalization of convolutional neural networks,'' in \emph{2020
  Conf. Comput. Vision and Pattern Recognition}.\hskip 1em plus 0.5em minus
  0.4em\relax Comput. Vision Found., 14-19 June 2020, pp. 8684--8694.

\bibitem{Shuman13-SPG}
D.~I. Shuman, S.~K. Narang, P.~Frossard, A.~Ortega, and P.~Vandergheynst, ``The
  emerging field of signal processing on graphs: Extending high-dimensional
  data analysis to networks and other irregular domains,'' \emph{{IEEE} Signal
  Process. Mag.}, vol.~30, no.~3, pp. 83--98, May 2013.

\bibitem{Sandryhaila14-Freq}
A.~Sandyhaila and J.~M.~F. Moura, ``Discrete signal processing on graphs:
  Frequency analysis,'' \emph{{IEEE} Trans. Signal Process.}, vol.~62, no.~12,
  pp. 3042--3054, June 2014.

\bibitem{Segarra17-Linear}
S.~Segarra, A.~G.~Marques, and A.~Ribeiro, ``Optimal graph-filter design and
  applications to distributed linear network operators,'' \emph{{IEEE} Trans.
  Signal Process.}, vol.~65, no.~15, pp. 4117--4131, Aug. 2017.

\bibitem{Bruna14-DeepSpectralNetworks}
J.~Bruna, W.~Zaremba, A.~Szlam, and Y.~LeCun, ``Spectral networks and deep
  locally connected networks on graphs,'' in \emph{2nd Int. Conf. Learning
  Representations}, Banff, AB, 14-16 Apr. 2014, pp. 1--14.

\bibitem{Defferrard17-CNNGraphs}
M.~Defferrard, X.~Bresson, and P.~Vandergheynst, ``Convolutional neural
  networks on graphs with fast localized spectral filtering,'' in \emph{30th
  Conf. Neural Inform. Process. Syst.}\hskip 1em plus 0.5em minus 0.4em\relax
  Barcelona, Spain: Neural Inform. Process. Found., 5-10 Dec. 2016, pp.
  3844--3858.

\bibitem{Kipf17-ClassifGCN}
T.~N. Kipf and M.~Welling, ``Semi-supervised classification with graph
  convolutional networks,'' in \emph{5th Int. Conf. Learning Representations},
  Toulon, France, 24-26 Apr. 2017.

\bibitem{Gama18-Architectures}
F.~Gama, A.~G.~Marques, G.~Leus, and A.~Ribeiro, ``Convolutional neural network
  architectures for signals supported on graphs,'' \emph{{IEEE} Trans. Signal
  Process.}, vol.~67, no.~4, pp. 1034--1049, Feb. 2019.

\bibitem{Ying18-Recommender}
R.~Ying, R.~He, K.~Chen, P.~Eksombatchai, W.~Hamilton, and J.~Leskovec, ``Graph
  convolutional neural networks for web-scale recommender systems,'' in
  \emph{24th ACM SIGKDD Int. Conf. Knowledge Discovery \& Data Mining}.\hskip
  1em plus 0.5em minus 0.4em\relax London, United Kingdom: Assoc. Comput.
  Mach., 19-23 Aug. 2018, pp. 974--983.

\bibitem{Owerko18-Power}
D.~Owerko, F.~Gama, and A.~Ribeiro, ``Predicting power outages using graph
  neural networks,'' in \emph{{IEEE} Global Conf. Signal and Inform. Process.
  2018}.\hskip 1em plus 0.5em minus 0.4em\relax Anaheim, CA: IEEE, 26-29 Nov.
  2018, pp. 743--747.

\bibitem{Owerko20-OPF}
------, ``Optimal power flow using graph neural networks,'' in \emph{45th
  {IEEE} Int. Conf. Acoust., Speech and Signal Process.}\hskip 1em plus 0.5em
  minus 0.4em\relax Barcelona, Spain: IEEE, 4-8 May 2020.

\bibitem{Tolstaya19-Flocking}
E.~Tolstaya, F.~Gama, J.~Paulos, G.~Pappas, V.~Kumar, and A.~Ribeiro,
  ``Learning decentralized controllers for robot swarms with graph neural
  networks,'' in \emph{Conf. Robot Learning 2019}, vol. 100.\hskip 1em plus
  0.5em minus 0.4em\relax Osaka, Japan: Proc. Mach. Learning Res., 30 Oct.-1
  Nov. 2019, pp. 671--682.

\bibitem{Li20-Planning}
Q.~Li, F.~Gama, A.~Ribeiro, and A.~Prorok, ``Graph neural networks for
  decentralized multi-robot path planning,'' in \emph{19th Int. Conf.
  Autonomous Agents and Multi-Agent Syst.}\hskip 1em plus 0.5em minus
  0.4em\relax Auckland, New Zealand: IFAAMAS, 9-13 May 2020.

\bibitem{ZouLerman18-Scattering}
D.~Zou and G.~Lerman, ``Graph convolutional neural networks via scattering,''
  \emph{Appl. Comput. Harmonic Anal.}, vol.~49, no.~3, pp. 1046--1074, Nov.
  2020.

\bibitem{Gama19-Scattering}
F.~Gama, A.~Ribeiro, and J.~Bruna, ``Diffusion scattering transforms on
  graphs,'' in \emph{7th Int. Conf. Learning Representations}, New Orleans, LA,
  6-9 May 2019, pp. 1--12.

\bibitem{Coifman06-DiffusionDistance}
R.~R. Coifman and S.~Lafon, ``Diffusion maps,'' \emph{Appl. Comput. Harmonic
  Anal.}, vol.~21, no.~1, pp. 5--30, July 2006.

\bibitem{Levie19-Transferability}
R.~Levie, E.~Isufi, and G.~Kutyniok, ``On the transferability of spectral graph
  filters,'' in \emph{13th Int. Conf. Sampling Theory Applications}.\hskip 1em
  plus 0.5em minus 0.4em\relax Bordeaux, France: IEEE, 8-12 July 2019, pp.
  1--5.

\bibitem{Xu19-GIN}
K.~Xu, W.~Hu, J.~Leskovec, and S.~Jegelka, ``How powerful are graph neural
  networks?'' in \emph{7th Int. Conf. Learning Representations}, New Orleans,
  LA, 6-9 May 2019, pp. 1--17.

\bibitem{Maron19-Invariant}
H.~Maron, H.~Ben-Hamu, N.~Shamir, and Y.~Lipman, ``Invariant and equivariant
  graph networks,'' in \emph{7th Int. Conf. Learning Representations}, New
  Orleans, LA, 6-9 May 2019, pp. 1--14.

\bibitem{Keriven19-UniversalInvariant}
N.~Keriven and G.~Peyr{\'{e}}, ``Universal invariant and equivariant graph
  neural networks,'' in \emph{33rd Conf. Neural Inform. Process. Syst.}\hskip
  1em plus 0.5em minus 0.4em\relax Vancouver, BC: Neural Inform. Process. Syst.
  Found., 8-14 Dec. 2019, pp. 7092--7101.

\bibitem{Zugner18-AdversarialAttacks}
D.~Z{\"{u}}gner, A.~Akbarnejad, and S.~G{\"{u}}nnemann, ``Adversarial attacks
  on neural networks for graph data,'' in \emph{24th ACM SIGKDD Int. Conf.
  Knowledge Discovery \& Data Mining}.\hskip 1em plus 0.5em minus 0.4em\relax
  London, United Kingdom: Assoc. Comput. Mach., 19-23 Aug. 2018, pp.
  2847--2856.

\bibitem{Dai18-AttackGraphData}
J.~Dai, J.~Li, T.~Tian, X.~Huang, X.~Wang, J.~Zu, and L.~Song, ``Adversarial
  attack on graph structured data,'' in \emph{35th Int. Conf. Mach.
  Learning}.\hskip 1em plus 0.5em minus 0.4em\relax Stockholm, Sweden: Proc.
  Mach. Learning Res., 10-15 July 2018, pp. 1115--1124.

\bibitem{Bojchevski19-CertifiableRobustness}
A.~Bojchevski and S.~G{\"{u}}nnemann, ``Certifiable robustness to graph
  perturbations,'' in \emph{33rd Conf. Neural Inform. Process. Syst.}\hskip 1em
  plus 0.5em minus 0.4em\relax Vancouver, BC: Neural Inform. Process. Syst.
  Found., 8-14 Dec. 2019, pp. 8319--8330.

\bibitem{Zugner19-CertifiableSignals}
D.~Z{\"{u}}gner and S.~G{\"{u}}nnemann, ``Certifiable robustness and robust
  training for graph convolutional networks,'' in \emph{25th ACM SIGKDD Int.
  Conf. Knowledge Discovery \& Data Mining}.\hskip 1em plus 0.5em minus
  0.4em\relax Anchorage, AK: Assoc. Comput. Mach., 4-8 Aug. 2019, pp. 246--256.

\bibitem{Daubechies92-Wavelets}
I.~Daubechies, \emph{Ten Lectures on Wavelets}, ser. CBMS-NSF Regional Conf.
  Series Appl. Math.\hskip 1em plus 0.5em minus 0.4em\relax Philadelphia, PA:
  SIAM, 1992, vol.~61.

\bibitem{Coifman06-DiffusionWavelets}
R.~R. Coifman and M.~Maggioni, ``Diffusion wavelets,'' \emph{Appl. Comput.
  Harmonic Anal.}, vol.~21, no.~1, pp. 53--94, July 2006.

\bibitem{Shuman15-Wavelets}
D.~I. Shuman, C.~Wiesmeyr, N.~Holighaus, and P.~Vandergheynst,
  ``Spectrum-adapted tight graph wavelet and vertex-frequency frames,''
  \emph{{IEEE} Trans. Signal Process.}, vol.~63, no.~16, pp. 4223--4235, Aug.
  2015.

\bibitem{Hammond11-Wavelets}
D.~K. Hammond, P.~Vandergheynst, and R.~Gribonval, ``Wavelets on graphs via
  spectral graph theory,'' \emph{Appl. Comput. Harmonic Anal.}, vol.~30, no.~2,
  pp. 129--150, March 2011.

\bibitem{Gama19-GraphConv}
F.~Gama, A.~G.~Marques, G.~Leus, and A.~Ribeiro, ``Convolutional graph neural
  networks,'' in \emph{53rd Asilomar Conf. Signals, Systems and Comput.}\hskip
  1em plus 0.5em minus 0.4em\relax Pacific Grove, CA: IEEE, 3-6 Nov. 2019.

\bibitem{Gama20-SPM}
\BIBentryALTinterwordspacing
F.~Gama, E.~Isufi, G.~Leus, and A.~Ribeiro, ``From graph filters to graph
  neural networks,'' \emph{arXiv:2003.03777v3 [cs.LG]}, 8 Aug. 2020. [Online].
  Available: \url{http://arxiv.org/abs/2003.03777}
\BIBentrySTDinterwordspacing

\bibitem{Isufi20-EdgeNets}
\BIBentryALTinterwordspacing
E.~Isufi, F.~Gama, and A.~Ribeiro, ``{EdgeNets}: Edge varying graph neural
  networks,'' \emph{arXiv:2001.07620v2 [cs.LG]}, 12 March 2020. [Online].
  Available: \url{http://arxiv.org/abs/2001.07620}
\BIBentrySTDinterwordspacing

\bibitem{Huang18-RatingGSP}
W.~Huang, A.~G.~Marques, and A.~Ribeiro, ``Rating prediction via graph signal
  processing,'' \emph{{IEEE} Trans. Signal Process.}, vol.~66, no.~19, pp.
  5066--5081, Oct. 2018.

\bibitem{Harper16-MovieLens}
F.~M. Harper and J.~A. Konstan, ``The {MovieLens} datasets: History and
  context,'' \emph{ACM Trans. Interactive Intell. Syst.}, vol.~5, no.~4, pp.
  19:(1--19), Jan. 2016.

\bibitem{Segarra17-Template}
S.~Segarra, A.~G.~Marques, G.~Mateos, and A.~Ribeiro, ``Network topology
  inference from spectral templates,'' \emph{{IEEE} Trans. Signal, Inform.
  Process. Networks}, vol.~3, no.~3, pp. 467--483, Sep. 2017.

\end{thebibliography}

\end{document}